\begin{document}

% If your paper is accepted and the title of your paper is very long,
% the style will print as headings an error message. Use the following
% command to supply a shorter title of your paper so that it can be
% used as headings.
%
\runningtitle{Gap-Dependent Unsupervised Exploration for Reinforcement Learning}

% If your paper is accepted and the number of authors is large, the
% style will print as headings an error message. Use the following
% command to supply a shorter version of the authors names so that
% they can be used as headings (for example, use only the surnames)
%
%\runningauthor{Surname 1, Surname 2, Surname 3, ...., Surname n}

\twocolumn[

\aistatstitle{Gap-Dependent Unsupervised Exploration \\ for Reinforcement Learning}

\aistatsauthor{ Jingfeng Wu \And Vladimir Braverman \And  Lin F. Yang }

\aistatsaddress{ Johns Hopkins Universit \\ \texttt{uuujf@jhu.edu} \And  Johns Hopkins University \\ \texttt{vova@cs.jhu.edu} \And University of California, Los Angeles \\ \texttt{linyang@ee.ucla.edu}} 

]

\begin{abstract}
For the problem of task-agnostic reinforcement learning (RL), an agent first collects samples from an unknown environment without the supervision of reward signals, then is revealed with a reward and is asked to compute a corresponding near-optimal policy. Existing approaches mainly concern the worst-case scenarios, in which no structural information of the reward/transition-dynamics is utilized. Therefore the best sample upper bound is $\propto\widetilde{\mathcal{O}}(1/\epsilon^2)$, where $\epsilon>0$ is the target accuracy of the obtained policy, and can be overly pessimistic. To tackle this issue, we provide an efficient algorithm that utilizes a gap parameter, $\rho>0$, to reduce the amount of exploration. In particular, for an unknown finite-horizon Markov decision process, the algorithm takes only $\widetilde{\mathcal{O}} (1/\epsilon \cdot (H^3SA / \rho + H^4 S^2 A) )$ episodes of exploration, and is able to obtain an $\epsilon$-optimal policy for a post-revealed reward with sub-optimality gap at least $\rho$, where $S$ is the number of states, $A$ is the number of actions, and $H$ is the length of the horizon, obtaining a nearly \emph{quadratic saving} in terms of $\epsilon$. We show that,  information-theoretically, this bound is nearly tight for $\rho < \Theta(1/(HS))$ and $H>1$. We further show that $\propto\widetilde{\mathcal{O}}(1)$ sample bound is possible for $H=1$ (i.e., multi-armed bandit) or with a sampling simulator, establishing a stark separation between those settings and the RL setting. 
\end{abstract}

\allowdisplaybreaks

\section{INTRODUCTION}\label{sec:introduction}
\emph{Unsupervised exploration} is an emergent and challenging topic for reinforcement learning (RL) that inspires research interests in both application~\citep{riedmiller2018learning,finn2017deep,xie2018few,xie2019improvisation,schaul2015universal,riedmiller2018learning} and theory \citep{hazan2018provably,jin2020reward,zhang2020task,kaufmann2020adaptive,menard2020fast,zhang2020nearly,wu2020accommodating,wang2020reward}.
The formal formulation of an unsupervised RL problem consists of an \emph{exploration phase} and a \emph{planning phase}~\citep{jin2020reward}: in the exploration phase, an agent interacts with the unknown environment without the supervision of reward signals; then in the planning phase, the agent is prohibited to interact with the environment, and is required to compute a nearly optimal policy for some revealed reward function based on its exploration experiences. In particular, if the reward function is \emph{independent} of the agent's exploration policy, the problem is called \emph{task-agnostic exploration} (TAE)~\citep{zhang2020task}; and if the reward function is chosen \emph{adversarially} according to the agent's exploration policy, the problem is called \emph{reward-free exploration} (RFE)~\citep{jin2020reward}.
The performance of an unsupervised exploration algorithm is measured by the \emph{sample complexity}, i.e., the number of samples the algorithm needs to collect during the exploration phase in order to complete the planning task near-optimally up a small  error (with high probability, see Section \ref{sec:setups} for a formal definition).
Existing algorithms for unsupervised RL exploration~\citep{jin2020reward,zhang2020task,wu2020accommodating,zhang2020nearly,wang2020reward} suffer a sample complexity (upper bounded by) $\propto\widetilde{\Ocal}(1/\epsilon^2)$\footnote{Here we use $\propto\widetilde{\Ocal}(\cdot)$ to emphasize the rates' dependence on $\epsilon$, where the other parameters are treated as constants. Similarly hereafter.} for a target planning error tolerance $\epsilon$.
In a worst-case consideration, this rate, in terms of dependence on $\epsilon$, is known to be unimprovable except for logarithmic factors \citep{jin2020reward,dann2015sample}.

However, the above worst-case sample bounds can be \emph{overly pessimistic} in practical scenarios, since the planning task is often known to be a benign instance, even though the exploration phase is conducted under no supervision from rewards.
In particular, the reward function revealed in the planning phase could induce a constant \emph{minimum nonzero sub-optimality gap} (or simply \emph{gap}, that is the minimum gap between the best action and the second best action in the optimal $Q$-value function, and is defined formally in  Section~\ref{sec:setups})~\citep{tewari2007optimistic,ortner2007logarithmic,ok2018exploration} such that the planning task is essentially an ``easy'' one~\citep{simchowitz2019non}.
More importantly, a \emph{reward-agnostic} gap parameter, e.g., an uniform lower bound on the gaps of possibly revealed reward functions, could be available to facilitate the exploration process.
See following for an example.

\paragraph{An Example.}
Let us consider training an agent for Go-Game (vs. an unknown player), where the winning rule could be either the Chinese rule, the Japanese rule or the Korean rule.
The agent will be rewarded $1$ for winning and $0$ for losing,  
and its goal is to explore without knowing the winning condition and to provide a solution to one/any of these rules specified afterward.
Note that all these winning conditions have an uniform, constant gap lower bound ($\approx 1$, assuming that the opponent plays nearly deterministically). 
Note that this RL problem is still unsupervised as the winning condition is unknown during exploration; but the uniform gap lower bound could potentially be exploited to accelerate exploration.

% This concept has also been discussed in 
% \lin{?? and such a gap information may be estimated in advance and known to the RL agent before starting the exploration. ?? why and how?}
\paragraph{Open Problem.}
In the supervised RL setting where the reward signals are available, a constant gap significantly improves the sample complexity bounds, e.g., from $\propto\widetilde{\Ocal}(1/\epsilon^{2})$ to $\propto\widetilde{\Ocal}(1)$ \citep{jaksch2010near,simchowitz2019non,yang2020q,he2020logarithmic,xu2021fine}.
% However, to the best of our knowledge, 
% no unsupervised RL algorithm is known to actively take advantage of the reward-oblivious gap information, i.e., an uniform gap lower bound.
% The following question remains open:
However, the following question remains open for unsupervised RL:

\begin{center}
\emph{Can unsupervised RL problems be solved faster when provided with a reward-agnostic gap parameter?}
\end{center}

\paragraph{A Case Study on Multi-Armed Bandit.}
To gain more intuition, let us take a quick look at the (gap-dependent) unsupervised exploration problem for multi-armed bandit (MAB).
In the worst-case setup, there is a minimax lower bound $\propto \Omega(1/\epsilon^2)$ for unsupervised exploration on an MAB instance \citep{mannor2004sample}, where $\epsilon$ is a small tolerance for the planning error. % (or simple regret in the language of MAB).
% \lin{I have messed up some of the O(1) and $O(1/\rho^2)$s, please double check.}
On the other hand, if the MAB instance has a constant gap, a rather simple \emph{uniform exploration} strategy achieves $\propto\widetilde{\Ocal}(1)$ sample complexity upper bound (see, e.g., Theorem 33.1 in \citep{lattimore2020bandit}, or Appendix \ref{appendix-section:bandit-and-mdp-with-generative-model}).
This example provides positive evidence that a constant gap parameter could accelerate unsupervised RL, too.

% \JW{emphasize $H\ge 2$}

\paragraph{Our Contributions.}
In this paper, we study the \emph{gap-dependent task-agnostic exploration} (gap-TAE) problem on a finite-horizon Markov decision process (MDP) with $S$ states, $A$ actions and $H \ge 2$ decision steps per episode. 
We consider a variant of upper-confidence-bound (UCB) algorithm that explores the unknown environment through a greedy policy that minimizes the cumulative exploration bonus \citep{zhang2020task,wang2020reward,wu2020accommodating}; our exploration bonus is of UCB-type, but is \emph{clipped} according to the gap parameter.
Theoretically, we show that $\widetilde{\mathcal{O}} (H^3SA / (\rho \epsilon) + H^4 S^2 A / \epsilon )$ number of trajectories is sufficient for the proposed algorithm to plan $\epsilon$-optimally for a task with a gap parameter $\rho$, where $\epsilon>0$ is the planning error parameter. 
This fast rate $\propto \widetilde{\mathcal{O}}(1/\epsilon)$ improves the existing, pessimistic rates $\propto \widetilde{\mathcal{O}}(1/\epsilon^2)$ \citep{zhang2020task,wang2020reward,wu2020accommodating} significantly when $\epsilon\ll \rho$.
Furthermore, we provide an information-theoretic lower bound, $\Omega (H^2SA / (\rho \epsilon) )$, on the number of trajectories required to solve the problem of gap-TAE on MDPs with $H \ge 2$.
This indicates that, for gap-TAE on MDP with $H \ge 2$, the $\propto \widetilde{\Ocal}(1/\epsilon)$ rate achieved by our algorithm is nearly the best possible.
These results naturally extend to other unsupervised RL settings.
% moreover for $\rho < \Theta(1/HS)$, our algorithm is also optimal in terms of $S, A, \rho$ ignoring logarithmic factors.

Interestingly, our results imply that RL is \emph{statistically harder} than MAB in the setting of gap-dependent unsupervised exploration. 
In particular, a finite-horizon MDP with $H=1$ reduces to an MAB problem, where it is known that $\propto\widetilde{\Ocal}(1)$ samples are sufficient for solving gap-TAE; however when $H\ge 2$ which corresponds to the general RL setting, our results show that at least $\propto{\Omega}(1/\epsilon)$ amount of samples are required for solving gap-TAE.
This is against an emerging wisdom from the supervised RL theory, that RL ($H \ge 2$) is statistically as easy as learning MAB ($H=1$) when the $H$ factor is normalized, ignoring logarithmic factors~\citep{jiang2018open,wang2020long,zhang2020reinforcement}.

\paragraph{Notations.}
For two functions $f(x) \ge 0$ and $g(x) \ge 0$ defined for $x \in [0, \infty)$,
we write $f(x) \lesssim g(x)$ if $f(x) \le c\cdot g(x)$ for some absolute constant $c > 0$; 
we write $f(x) \gtrsim g(x)$ if $g(x) \lesssim f(x)$;
and we write $f(x) \eqsim g(x)$ if $f(x) \lesssim g(x)\lesssim f(x)$.
Moreover, we write $f(x) = \Ocal (g(x))$ if $\lim_{x\to\infty} f(x) / g(x) < c$ for some absolute constant $c  >0$;
we write $f(x) = \Omega(g(x))$ if $g(x) = \Ocal (f(x))$;
and we write $f(x) = \Theta(g(x))$ if $f(x) = \Ocal (g(x))$ and $g(x) = \Ocal(f(x))$.
To hide the logarithmic factors, we write $f(x) = \widetilde{\Ocal} (g(x)) $ if $f(x) = \Ocal(g(x) \log^d x) $ for some absolute constant $d > 0$.
For $a,b \in \Rbb$, we write $a\land b := \min\{a,b\}$ and $a\lor b := \max\{a,b\}$.
For a positive integer $H$, we define $[H] := \{1,2,\dots, H\}$.
% If not noted otherwise, we use $\epsilon$ to denote the allowed error in the planning phase, and $T$ to denote the number of samples collected by an algorithm during the exploration phase; in particular if the algorithm is run on some finite-horizon MDP with horizon length $H$, we use $K$ to denote the number of trajectories collected, then $T = H K$.
% For simplicity, we use $\propto{\widetilde{\Ocal}(\cdot)}$ and $\propto{{\Omega}(\cdot)}$ to emphasize the rates' dependence on $\epsilon$ and treat all other parameters as constants, e.g., $\widetilde{\Ocal}({H^3 SA}/(\rho \epsilon))  \propto\widetilde{\Ocal}(1/\epsilon)$, and $\propto\widetilde{\Ocal}(1)$ means a rate that depends at most polylogarithmically on $\epsilon$.

% \paragraph{Paper Layout.}
% The remaining paper is organized as follows: 
% Section~\ref{sec:setups} formally sets up a gap-dependent unsupervised RL problem, then a sample-efficient algorithm is proposed in Section~\ref{sec:algorithm}. 
% Section~\ref{sec:theory} presents the theoretical results for the studied algorithm and the problem, and discusses a separation between RL vs. multi-armed bandit.
% Several additional discussions and future directions are listed in Section~\ref{sec:discussions}.
% Related works are reviewed in Section~\ref{sec:related}, and the paper is concluded by Section~\ref{sec:conclusion}.

\section{PROBLEM SETUP}\label{sec:setups}

\paragraph{Finite-Horizon MDP.}
We focus on \emph{finite-horizon Markov decision process} (MDP), which is  specified by a tuple, $\bracket{\Scal, \Acal, H, \Pbb, x_1, r}$.
$\Scal$ is a finite \emph{state} set where $\abs{\Scal} = S$.
$\Acal$ is a finite \emph{action} set where $\abs{\Acal} = A$.
$H$ is the \emph{length of the horizon}.
$\Pbb: \Scal\times\Acal \to [0,1]^\Scal$ is an \emph{unknown, stationary transition probability}.
Without lose of generality, we assume the MDP has fixed initial state $x_1$\footnote{We may as well consider MDPs with an external initial state $x_0$ with zero reward for all actions, and a transition $\Pbb_0 (\cdot \mid x_0, a) = \Pbb_0(\cdot) $ for all action $a$, which is equivalent to our setting by letting the horizon length $H$ be $H+1$.}.
For simplicity we only consider \emph{deterministic and bounded reward function}\footnote{For the sake of presentation, we focus on bounded deterministic reward functions in this work. The techniques can be readily extended to stochastic reward settings and the obtained bounds will match our presented ones.}, which is denoted by $r  = \set{r_1, \dots, r_H}$ where
$r_h:\Scal\times \Acal \to [0,1]$ is the reward function at the $h$-th step.
A \emph{policy} is represented by $\pi := \set{\pi_1,\dots, \pi_H }$, where each $\pi_h: \Scal \to [0,1]^\Acal$ is a potentially random policy at the $h$-th step.
For a policy $\pi$, the \emph{$Q$-value function} and the \emph{value function} are defined as
\begin{align*}
     Q^\pi_h (x,a) 
     &:= \Ebb_{\pi, \Pbb} \bigl[ \sum_{t\ge h} r_t(x_t, a_t) \big| x_h = x, a_h=a \bigr], \\
    V^\pi_h (x) &:= Q^\pi_h (x, \pi_h(x)),
\end{align*}
where the trajectory is given by $x_t \sim \prob{\cdot \mid x_{t-1}, a_{t-1}}$ and $a_t \sim \pi_t(x_t)$ for $t > h$.
The following well-known Bellman equation is worth mentioning:
\begin{align*}
 Q^\pi_h(x,a) = r_h(x,a) + \Ebb_{y\sim\Pbb(\cdot \mid x,a)} V^\pi_{h+1}(y).
    % \begin{cases}
    %     Q^\pi_h(x,a) = r_h(x,a) + \Ebb_{y\sim\Pbb(\cdot \mid x,a)} V^\pi_{h+1}(y); \\
    %     V^\pi_{h} (x) = Q^\pi_h (x, \pi_h(x)).
    % \end{cases}
    % \\  
    % \begin{cases}
    %     Q^*_h(x,a) = r_h(x,a) + \Ebb_{y\sim\Pbb(\cdot \mid x,a)} V^*_{h+1}(y); \\
    %     V^*_{h} = \max_a Q^*_h (x,a).
    % \end{cases}
\end{align*}
$\pi^* \in \arg\max_{\pi}V_1^\pi (x_1)$ is an optimal policy, and its induced optimal $Q$-value function and the optimal value function are denoted by \(Q^*_h(x, a) := Q^{\pi^*}_h (x, a)\) and \( V^*_h(x) := V^{\pi^*}_h(x)\), respectively.

\paragraph{Sub-Optimality Gap.}
Given an MDP, the \emph{stage-dependent state-action sub-optimality gap}~(see, e.g., \citet{simchowitz2019non}) is defined as
\[\gap_h(x,a) := V^*_h(x) - Q^*_h(x,a) \ge 0.\]
Clearly, $\gap_h(x,a) = 0$ if and only if $a$ is an optimal action at state $x$ and at the $h$-th decision step.
Intuitively, when $\gap_h(x,a) > 0$, $\gap_h(x,a)$ characterizes the difficulty to distinguish the sub-optimal action $a$ from the optimal actions at state $x$ and at the $h$-th step; and the larger $\gap_h(x,a)$ is, the easier should it be distinguishing $a$ from the optimal actions.
The \emph{minimum sub-optimality gap} is then defined as
\begin{equation}\label{eq:min-gap}
\gap_{\min} := \min_{h,x,a} \{\gap_h(x,a) :\ \gap_h(x,a) > 0 \}.
\end{equation}
Intuitively, an MDP with a constant $\gap_{\min}$ is easy to learn since constant number of visitations to an state-action pair suffices to distinguish whether it is optimal~\citep{simchowitz2019non}.

\paragraph{Task-Agnostic Exploration.}
The problem of \emph{task-agnostic exploration} (TAE)~\citep{zhang2020task} involves an MDP environment $(\Scal, \Acal, H, \Pbb, x_1)$ and a set of reward functions 
\[\Rcal  \subset \set{r: [H]\times\Scal\times \Acal \to [0,1]}\] 
that could possibly contain infinitely many reward functions.
An agent first decides exploration policies to collect $K$ trajectories from the environment, in which process reward feedback is not observable.
Then an \emph{oblivious} chooses a reward function $r$ from the candidate set $\Rcal$ and reveals it to the agent
\footnote{In \citet{zhang2020nearly}, $N$ reward functions are selected during planning and only bandit signals are available. For the sake of presentation, we assume there is only $1$ reward function and the agent is provided with the full-information of the reward function in the planning phase. Our results and techniques are ready to be extended to setting of \citet{zhang2020nearly} in a standard manner.}, i.e., $r\in\Rcal$ is selected \emph{independently} from the agent's exploration policy. 
Then the agent needs to compute an $(\epsilon, \delta)$-\emph{probably-approximately-correct} (PAC) policy $\pi$ under the reward function $r$, which means:
\begin{equation}\label{eq:TAE-PAC}
     \text{for an oblivious\footnotemark}\ r,\ \Pbb\{ V^*_1(x_1) - V^{\pi}_1(x_1) > \epsilon \} < \delta,
\end{equation}
% where the probability is over all the randomness in the procedure of producing $\pi$.
where the probability is over the randomness of trajectory collecting in the exploration phase.
The \emph{sample complexity} is measured by the number of trajectories $K$ that the agent needs to collect in the exploration phase to guarantee being $(\epsilon, \delta)$-PAC in planning phase.
\footnotetext{An oblivious reward means the reward is chosen independent of the agent's exploration policy, or equivalently, the reward is chosen prior to the beginning of the exploration phase. For example, a reward chosen uniformly at random from $\Rcal$ is oblivious.}

Here are a few remarks on the TAE setting.
Firstly, note that the reward function $r$ in TAE is oblivious and is not adversarial to the agent's exploration policy.
Due to this non-adversarial nature, TAE can be achieved with a sample complexity $\propto \widetilde{\Ocal}(S)$~\citep{zhang2020task,wu2020accommodating}, which is much cheaper than that required to estimate the transition kernel accurately ($\propto\Ocal(S^2)$, the error is measured by total variation distance).
On the other hand, if the reward function is allowed to be chosen \emph{adversarially} against the agent's exploration policy, the problem is known as \emph{reward-free exploration} (RFE)~\citep{jin2020reward}; the counter part of condition \eqref{eq:TAE-PAC} in RFE reads:
\begin{equation*}
     \text{for any}\ r\in\Rcal,\ \Pbb\set{ V^*_1(x_1) - V^{\pi}_1(x_1) > \epsilon } < \delta,
\end{equation*}
where the probability is over the randomness of trajectory collecting in the exploration phase.
Due to the adversarial nature, a RFE algorithm must estimate the environment with high precision, and a $\propto\Ocal(S^2)$ sample complexity is unavoidable~\citep{jin2020reward}.
In the following paper we will focus on the TAE setting to show a more sample-efficient algorithm for benign TAE problems.
Nonetheless, our algorithm naturally extends to other unsupervised RL settings~\citep{jin2020reward,wu2020accommodating} and the improved sample-efficiency for benign instances also holds (by a standard covering argument on value functions or reward functions). 
We refer the reader to Remark~\ref{remark:rfe} in Section~\ref{sec:discussions} for an example of applying our results in reward-free exploration.

\paragraph{Gap-Dependent Unsupervised RL.}
We now formally state the problem of \emph{gap-dependent task-agnostic exploration} (gap-TAE).
Gap-TAE is a benign TAE instance, in which we assume there is a constant, reward-agnostic gap parameter $\rho$ such that
\begin{equation}\label{eq:gap-parameter}
    0 < \rho \le \gap_{\min}(r)\ \text{for every}\ r\in \Rcal,
\end{equation}
where, with a slightly abuse of notation, $\gap_{\min}(r)$ refers to the minimum sub-optimality gap \eqref{eq:min-gap} induced by reward function $r$;
moreover, the gap parameter $\rho$ is known to the agent.
Our focus is to study whether or not gap-TAE problems can be solved in a faster rate compared to the worst-case TAE problems.
% In the following, we will show that by exploiting the gap parameter $\rho$, gap-TAE problems can be solved much faster than the worst-case TAE problems.

We end this section with the definition of a \emph{clip operator}~\citep{simchowitz2019non}:
\[
\clip_\rho [z] := z\cdot \ind{z \ge \rho}, \text{ for } z \in \Rbb \text{ and } \rho > 0,
\]
which cuts a quantity smaller than $\rho$ to $0$.

% In this problem, an agent is provided with a parameter $\rho$ to start the 
% %Assume an MDP has a constant minimum sub-optimality gap $\rho>0$. In the beginning of the
% exploration phase of gap-TAE, where the agent explores the environment without the guidance of any reward.
% %an agent is revealed with $\rho$, and then explores the environment to collect samples, without receiving any further information about the reward; then
% After the exploration phase,  the agent is revealed with a reward, with which the ground truth MDP is guaranteed to have a minimum sub-optimality gap at least $\rho$.
% The agent then computes a policy under this reward.
% In the following sections, we will propose a sample efficient algorithm for gap-TAE (Section~\ref{sec:algorithm}), and provide its sample complexity analysis (Section~\ref{sec:theory}).

%Our aim is to study whether or not knowing $\rho$ in advance can accelerate TAE, and if so, how much samples can be saved?
%The following two sections will provide an initial answer to these questions.

% There is again two phases in gap-TAE: in the exploration phase, 

% In this paper, we consider TAE with a gap prior, i.e., given that the MDP has a constant minimum non-zero gap $\rho$, can TAE be completed by some algorithm with smaller trajectory complexity?

\section{AN EFFICIENT ALGORITHM}\label{sec:algorithm}

In this section, we introduce \ALGO for solving gap-TAE in a sample-efficient manner.
The algorithm is formally presented as Algorithms~\ref{alg:exploration} and \ref{alg:planning}, for exploration and planning, respectively.

\begin{algorithm}[htb!]
    \caption{\texttt{\ALGO} (Exploration)}
    \label{alg:exploration}
    \begin{algorithmic}[1]
    \REQUIRE gap parameter $\rho$, number of episodes $K$
    \STATE initialize history $\Hcal^0 = \emptyset$
    \STATE set up a constant $\iota := \log (2HS^2 AK/\delta)$
    \FOR{episode $k=1,2,\dots,K$}
        \STATE $N^k(x,a),\ \widehat{\Pbb}^k (y \mid x,a)\leftarrow \EmpiTrans(\Hcal^{k-1})$
        \STATE compute exploration bonus \( c^k(x,a) := \clip_{\frac{\rho}{2H}}\sbracket{ \sqrt{\frac{ 8H^2 \iota }{N^k(x,a)} }} + \frac{120 (S + H)H^3\iota}{N^k(x,a)} + \frac{240 H^6S^2\iota^2}{\bracket{N^k(x,a)}^2 }\) \label{line:clipped-bonus}
        \STATE $\{ \overline{Q}^k_h(x,a), \overline{V}^k_h(x) \}_{h=1}^H  \leftarrow \UCBQ(\widehat{\Pbb}^{k}, 0, c^k)$\label{line:zero-reward}
        \STATE receive initial state $x_1^k = x_1$
        \FOR{step $h=1,2,\dots,H$}
            \STATE take action $a^k_h = \arg\max_a \overline{Q}^k_h(x^k_h, a)$
            \STATE obtain a new state $x^{k}_{h+1}$
        \ENDFOR 
        \STATE update history $\Hcal^k = \Hcal^{k-1} \cup \{x^k_h, a^k_h\}_{h=1}^H$
    \ENDFOR
    \RETURN History $\Hcal^k$

    \item[]
    
    \STATE \textbf{Function} \EmpiTrans \label{line:empi-trans}
    \STATE \textbf{Require:} history $\Hcal^{k-1}$ 
    \FOR{$(x,a,y) \in \Scal \times \Acal \times \Scal$}
    \STATE \(N^k (x,a,y) := \# \{(x,a,y) \in \Hcal^{k-1} \} \) \\
    \STATE \( N^k(x,a) := \sum_{y} N^k (x,a,y) \)
        % \STATE \(N^k (x,a,y) = \sum_{(x_t,a_t,x_{t+1}) \in \Hcal^{k-1}} \ind{(x_t,a_t,x_{t+1}) = (x,a,y)}\)
        % \STATE \( N^k(x,a) = \sum_{y} N^k (x,a,y) \)
        \IF{$N^k(x,a) > 0$}
        \STATE $\widehat{\Pbb}^k(y\mid x,a) = {N^k(x,a,y)} / {N^k(x,a)}$
        \ELSE 
        \STATE $\widehat{\Pbb}^k(y\mid x,a) = 1/S$
        \ENDIF
    \ENDFOR
    \RETURN $N^k(x,a), \widehat{\Pbb}^k(y\mid x,a)$

    \item[]

    \STATE \textbf{Function} \UCBQ \label{line:ucbq}
    \STATE \textbf{Require:} empirical transition $\widehat\Pbb^k$, reward function $r$, bonus function $b^k$ 
        \STATE set $V^k_{H+1}(x) = 0$ 
        \FOR{step $h = H, H-1, \dots , 1$}
            \FOR{$(x,a) \in \Scal \times \Acal$}
                % \STATE $Q^k_h(x,a) = \min\big\{H,\ r_h(x, a) + b^k(x,a) + \widehat{\Pbb}^k_h V^k_{h+1}(x,a)\big\}$\label{line:add-bonus}
                \STATE $Q^k_h(x,a) = r_h(x, a) + b^k(x,a) + \widehat{\Pbb}^k_h V^k_{h+1}(x,a)$\label{line:add-bonus}
                \STATE $Q^k_h(x,a) \leftarrow \min\{H,\ Q^k_h(x,a) \}$
                % \IF{cutoff $= \TRUE$}
                % \STATE $Q^k_h(x,a)\leftarrow \min\set{Q^k_h(x,a), H}$
                % \ENDIF
                \STATE $V^k_h(x)=\max_{a\in \Acal}Q^k_h(x,a)$
            \ENDFOR
        \ENDFOR
        \RETURN $\big\{ Q^k_h(x,a),\ V^k_h(x) \big\}_{h=1}^H$
    \end{algorithmic}
\end{algorithm}

In the exploration phase, \ALGO maintains an estimated maximum cumulative \emph{bonus} based on the current empirical transition kernel (Algorithm~\ref{alg:exploration}, line~\ref{line:zero-reward}), and explores the environment through executing a greedy policy that maximizes the cumulative bonus.
Note that the exploration bonus at a state-action pair is inversely proportional to the number of visitations to the state-action pair (Algorithm~\ref{alg:exploration}, line~\ref{line:clipped-bonus}), thus the exploration policy is encouraged to pay more visits to the state-actions that have not yet been visited for sufficient times, where its induced bonus is larger.
This design permits \ALGO to dynamically explore the environment.
In the planning phase, \ALGO receives an obliviously/independently revealed reward function, and computes an (averaging of a sequence of) \emph{optimistic} estimation to the optimal value function based on the collected samples (Algorithm~\ref{alg:planning}, line~\ref{line:true-reward}).
The outputted policy equivalently corresponds to the (averaged) optimistic estimated value function (Algorithm~\ref{alg:planning}, line~\ref{line:greedy-policy}).

\begin{algorithm}[htb]
    \caption{\ALGO (Planning)}
    \label{alg:planning}
    \begin{algorithmic}[1]
    \REQUIRE history $\Hcal^K$, reward function $r$
    \STATE set up a constant $\iota := \log (2HS^2 AK/\delta)$
    \FOR{$k=1,2,\dots,K$}
        \STATE $N^k(x,a),\ \widehat{\Pbb}^k (y \mid x,a)\leftarrow \EmpiTrans(\Hcal^{k-1})$
        \STATE compute planning bonus \( b^k(x,a) := \sqrt{\frac{ H^2 \iota }{2 N^k(x,a)} }\)\label{line:hoeffding-bonus}
        \STATE $\{Q^k_h(x,a),\, V^k_h(x)\}_{h=1}^H \leftarrow \UCBQ (\widehat{\Pbb}^{k}, r, b^k)$\label{line:true-reward}
        \STATE infer greedy policy $\pi^k_h (x) = \arg\max_a Q^k_h(x,a) $\label{line:greedy-policy}
    \ENDFOR
    \RETURN $\pi$ drawn uniformly from $\{\pi^1,\dots,\pi^K\}$
    \end{algorithmic}
\end{algorithm}

\ALGO is inspired by two existing model-based algorithms: 
\UCBVI~\citep{azar2017minimax} that achieves minimax optimal sample complexity for supervised RL problems,
and \PFUCB~\citep{wu2020accommodating} that efficiently solves preference-free exploration problems in the context of unsupervised multi-objective RL.
Similar to both \UCBVI and \PFUCB, in the planning phase \ALGO (Algorithm~\ref{alg:planning}) adopts an \emph{upper-confidence-bound} (UCB) type bonus to perform optimistic and model-based planning.
Similar to \PFUCB but different from \UCBVI, \ALGO (Algorithm~\ref{alg:exploration}) explores the unknown environment through a greedy policy that maximizes the cumulative exploration bonus.
Different from either \PFUCB or \UCBVI, \ALGO exploits the gap parameter by \emph{clipping} the exploration bonus (Algorithm~\ref{alg:exploration}, line~\ref{line:clipped-bonus}).

The clipped exploration bonus turns out to be a key ingredient for \ALGO to save samples for gap-TAE problems. 
Specifically, the leading-order term in the UCB-type bonus will be brute-force clipped to near zero when a state-action pair has been visited for sufficiently many times, i.e., $N(x,a) \eqsim H^4 \iota / \rho^2$ (Algorithm~\ref{alg:exploration}, line~\ref{line:clipped-bonus}).
This will cause a sudden decrease of the bonus, and discourages the agent to continue to visit this state-action pair.
Recall that the considered MDP has a sub-optimality gap at least $\rho$, thus $N(x,a) \eqsim H^4 \iota / \rho^2$ amount of samples is already sufficient to distinguish sub-optimal actions from the optimal ones at the state-action pair.
Then by clipping the bonus at such state-action pairs, \ALGO spends less unnecessary visitations to these pairs, and saves opportunities for visiting the state-actions that have not yet been visited sufficiently.
In consequence, \ALGO accelerates benign TAE instances by exploiting the gap parameter.
The above discussions are formally justified in the next section.

\section{THEORETIC RESULTS}\label{sec:theory}
We turn to present our theoretical results. 
We will first show Theorem \ref{thm:gap-upper-bound} that provides a sample-complexity upper bound for the proposed \ALGO algorithm, and Theorem \ref{thm:gap-lower-bound} that provides a sample-complexity lower bound for the gap-TAE problem.
Then we will discuss a novel statistical separation between RL vs. MAB based on these results.
% In this part, we first present a theoretic justification for the proposed algorithm, and then discuss a separation between RL vs. MAB based on the theoretical results.

\subsection{Upper and Lower Bounds}
\begin{theorem}[An upper bound for \ALGO]\label{thm:gap-upper-bound}
Suppose that Algorithm~\ref{alg:exploration} accepts a gap parameter $\rho$ that satisfies \eqref{eq:gap-parameter}, and runs for $K$ episodes to collect a dataset $\Hcal^K$.
Let policy $\pi$ be the output of Algorithm~\ref{alg:planning} for an oblivious input reward function $r \in \Rcal$ that is independent of $\Hcal^K$.
Then with probability at least $1-\delta$, the planning error is bounded by
    \begin{align*}
    V^*_1(x_1) - V_1^{\pi}(x_1)
    &\lesssim \frac{H^3SA}{\rho K}\cdot \log\frac{HSAK}{\delta} \\
    &\ + \frac{ H^4 S^2A }{K}\cdot \log(HK) \cdot \log\frac{HSAK}{\delta}.
    \end{align*}
\end{theorem}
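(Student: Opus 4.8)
My plan is to turn the desired PAC guarantee into a \emph{gap-dependent constant-regret} bound over the $K$ planning policies. Since the returned $\pi$ is uniform over $\{\pi^1,\dots,\pi^K\}$, one has $V^*_1(x_1)-V^\pi_1(x_1)=\frac1K\sum_{k=1}^K\bigl(V^*_1(x_1)-V^{\pi^k}_1(x_1)\bigr)$ as a deterministic function of $\Hcal^K$, so it suffices to show that on a $(1-\delta)$-probability event $\mathcal E$ the cumulative error $\sum_{k=1}^K\bigl(V^*_1(x_1)-V^{\pi^k}_1(x_1)\bigr)$ is $\lesssim \frac{H^3SA}{\rho}\,\iota + H^4S^2A\,\iota\log(HK)$ --- i.e.\ a bound with no $\sqrt K$ term at all. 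I would first take $\mathcal E$ to be the intersection of: (a) the Hoeffding deviations $|(\widehat\Pbb^k-\Pbb)_h V^*_{h+1}(x,a)|\le b^k(x,a)$ for all $(k,h,x,a)$, which is legitimate because $r$ is oblivious so $V^*$ is a fixed function and $\iota=\log(2HS^2AK/\delta)$ absorbs the union bound; (b) empirical-Bernstein transition bounds, in particular $\|\widehat\Pbb^k(\cdot\mid x,a)-\Pbb(\cdot\mid x,a)\|_1\lesssim\sqrt{S\iota/N^k(x,a)}$ together with the per-coordinate refinement $|\widehat\Pbb^k(y\mid x,a)-\Pbb(y\mid x,a)|\le\frac1H\Pbb(y\mid x,a)+\frac{cH\iota}{N^k(x,a)}$; (c) Freedman-type control of the martingales used below. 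On $\mathcal E$, a backward induction using (a) and the truncation inside \UCBQ\ gives planning optimism $Q^k_h\ge Q^*_h$, hence $V^*_1(x_1)-V^{\pi^k}_1(x_1)\le V^k_1(x_1)-V^{\pi^k}_1(x_1)$.

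Next I would run the usual value-difference expansion of $V^k_1(x_1)-V^{\pi^k}_1(x_1)$ along the occupancy of $\pi^k$ under $\Pbb$; using (a)--(b), with the self-bounding piece $\frac1H\Pbb(V^k_{h+1}-V^{\pi^k}_{h+1})$ absorbed into the left-hand side at the cost of a constant, this yields a surplus bound $V^k_1(x_1)-V^{\pi^k}_1(x_1)\lesssim\Ebb_{\pi^k,\Pbb}\bigl[\sum_h\mathrm{surp}^k_h(x_h,a_h)\bigr]$ where $\mathrm{surp}^k_h(x,a)$ is dominated by a constant multiple of $b^k(x,a)$ plus the correction terms $\frac{(S+H)H^3\iota}{N^k(x,a)}+\frac{H^6S^2\iota^2}{(N^k(x,a))^2}$ present in $c^k$. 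The gap enters through a clipped-regret decomposition in the style of \citet{simchowitz2019non}: since $\rho\le\gap_{\min}(r)$ and $\pi^k$ is greedy with respect to the optimistic $Q^k$, per-step surpluses below a threshold $\Theta(\rho/H)$ can be discarded, so $V^*_1(x_1)-V^{\pi^k}_1(x_1)\lesssim\Ebb_{\pi^k,\Pbb}\bigl[\sum_h\clip_{\Theta(\rho/H)}[\mathrm{surp}^k_h(x_h,a_h)]\bigr]$; clipping the leading $b^k$-term at threshold $\Theta(\rho/H)$ is dominated by the clipped Hoeffding term of $c^k$ and the correction terms are dominated by the $1/N$ and $1/N^2$ terms of $c^k$, so this is $\lesssim\Ebb_{\pi^k,\Pbb}\bigl[\sum_h c^k(x_h,a_h)\bigr]$. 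Matching the clip thresholds and constants so that this chain closes is exactly what dictates the otherwise odd form of $c^k$.

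The remaining task --- bounding $\sum_{k=1}^K\Ebb_{\pi^k,\Pbb}\bigl[\sum_h c^k(x_h,a_h)\bigr]$, an expected exploration bonus of \emph{reward-dependent} planning policies, using only the \emph{reward-free} exploration phase --- is where I expect the real difficulty. The key is that $\{\overline Q^k_h,\overline V^k_h\}$ from Line~\ref{line:zero-reward} are, on $\mathcal E$, optimistic for the uncertainty-MDP with reward $c^k$: by (b) the correction terms in $c^k$ are just large enough that the transition error of the nonnegative, $H$-truncated value iteration for reward $c^k$ is self-absorbed, so $\overline V^k_1(x_1)\ge\Ebb_{\pi,\Pbb}[\sum_h c^k(x_h,a_h)]$ for every policy $\pi$, including $\pi^k$. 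Since $\bar\pi^k$ is greedy w.r.t.\ $\overline Q^k$, a matching tightness argument gives $\overline V^k_1(x_1)\lesssim\Ebb_{\bar\pi^k,\Pbb}[\sum_h (c^k\wedge H)(x_h,a_h)]$ up to lower-order corrections; a Freedman step (item (c)) then replaces this by the \emph{realized} exploration trajectory $\{(x^k_h,a^k_h)\}$, and --- crucially, because the iteration truncates at $H$ --- the effective per-step quantity is $c^k\wedge H$, which is finite even when $N^k=0$, and the Freedman martingale is self-bounding (its conditional variance is $\lesssim H\cdot\Ebb_{\bar\pi^k}[\sum_h c^k\wedge H]$), so no $\sqrt K$ term appears. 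I would finish by estimating $\sum_{k,h}(c^k\wedge H)(x^k_h,a^k_h)$ via a pigeonhole over visitation counts: the clipped Hoeffding term is nonzero only for the first $\widetilde{\Ocal}(H^4/\rho^2)$ visits to a state-action pair, giving $\lesssim H^3\iota/\rho$ per pair and $\lesssim H^3SA\iota/\rho$ in total; the term $\frac{(S+H)H^3\iota}{N}\wedge H$ gives $\lesssim (S+H)H^3\iota\log(HK)$ per pair, i.e.\ $\lesssim H^4S^2A\iota\log(HK)$; and $\frac{H^6S^2\iota^2}{N^2}\wedge H$ sums to $\lesssim H^{7/2}S\iota$ per pair, hence the lower-order $\lesssim H^4S^2A\iota$. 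Dividing by $K$ and using $S+H\lesssim SH$ gives the claim.

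The step I expect to be hardest is the double coupling in the third paragraph: transferring the gap-driven clipping, which is imposed on the \emph{exploration} bonus $c^k$, into a bound on the \emph{planning} regret of a \emph{different}, reward-dependent policy. This forces simultaneously a Simchowitz--Jamieson-type clipped decomposition on the planning side and an optimism-of-the-uncertainty-MDP argument with carefully calibrated correction terms on the exploration side, and a large part of the work goes into making the two compatible. A secondary nuisance is keeping every martingale self-bounding (using variance-aware rather than Hoeffding-type concentration) so that the final bound scales as $1/K$ rather than $1/\sqrt K$, and book-keeping the second-order $1/N^2$ term, which is tamed only through the $\wedge H$ truncation.
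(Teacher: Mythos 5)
Your proposal is correct and follows essentially the same architecture as the paper's proof: planning-side optimism plus a Simchowitz--Jamieson clipped surplus decomposition to bound the per-episode error by $\Ebb_{\pi^k,\Pbb}\sum_h c^k$, a bridge through $H$-truncated ``bonus value functions'' under $\Pbb$ and $\widehat{\Pbb}^k$ to the exploration value $\overline{V}_1^k(x_1)$ maximized by the exploration policy, and a pigeonhole/integration argument showing $\sum_k\overline{V}_1^k(x_1)$ is logarithmic in $K$. The only substantive difference is in the last concentration step, where you invoke a self-bounding Freedman argument to pass to realized exploration trajectories, whereas the paper instead conditions on a visitation-count event ($N^k(x,a)\ge\tfrac12\sum_{j<k}w^j(x,a)-H\log(HSA/\delta)$, à la Dann et al.) and sums the expected bonuses directly --- both routes avoid the $\sqrt{K}$ term and yield the same bound.
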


\begin{theorem}[A lower bound for gap-TAE]\label{thm:gap-lower-bound}
    Fix $S\ge 5, A\ge 2, H \ge 2+\log_A S$. 
    There exist positive constants $c_1, c_2, \rho_0, \delta_0$, such that for every $\rho \in (0, \rho_0)$, $\epsilon \in (0, \rho)$, $\delta\in (0,\delta_0)$, and for every $(\epsilon, \delta)$-PAC algorithm (see condition \eqref{eq:TAE-PAC}) that runs for $K$ episodes, there exists some gap-TAE instances with a gap parameter $\rho$ that satisfies \eqref{eq:gap-parameter}, such that 
    \[
    \Ebb [K] \ge  c_1\cdot \frac{ H^2 SA}{\rho \epsilon }\cdot\log\frac{c_2}{\delta},   
    \]
    where the expectation is taken with respect to the randomness of choosing the gap-TAE instance.
    %  and the randomness of the algorithm.
\end{theorem}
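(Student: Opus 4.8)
The plan is a change-of-measure (information-theoretic) lower bound, tailored to reward-blind exploration. Since the statement only requires \emph{some} hard gap-TAE instance, it is enough to exhibit a finite family $\{\mathcal{M}_\theta : \theta\in\Theta\}$ of instances that share one revealed reward, each satisfying $\gap_{\min}\ge\rho$, and to lower bound the Bayes average $\Ebb_{\theta\sim\mathrm{Unif}(\Theta)}\Ebb_\theta[K]$; some $\theta$ then attains at least this value. The governing structural point is that the exploration phase never observes a reward, so the hidden parameter $\theta$ must live in the \emph{transition kernel}, and the (fixed) revealed reward has to be designed so that producing an $\epsilon$-optimal policy for it is equivalent to (essentially) identifying $\theta$.

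\textbf{The hard family.} I would use a layered ``navigation $+$ bandit'' MDP. Its first $d:=\lceil\log_A S\rceil$ steps form a full $A$-ary tree whose transitions are action-\emph{independent}, so a trajectory reaches a uniformly random leaf among $\Theta(S)$ of them; this is exactly where $H\ge 2+\log_A S$ is used (it leaves $m:=H-d=\Theta(H)$ residual steps and $\Theta(S)$ leaves), and it pins every leaf to be visited only $\approx K/S$ times no matter how the agent behaves. Each leaf is an $A$-armed gadget whose arms transition into an absorbing good state $\mathsf{g}$ or bad state $\mathsf{b}$; the single revealed reward is $r(\cdot)=\ind{\cdot=\mathsf{g}}$, so reaching $\mathsf{g}$ is worth the $m=\Theta(H)$ remaining steps and each leaf becomes a Bernoulli bandit whose arm means are its success probabilities into $\mathsf{g}$, amplified by $m$. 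The parameter $\theta$ designates, at each leaf, one arm whose success probability is bumped by $\Delta$ above a baseline $g_0$; picking $\Delta$ with $\Delta\, m\asymp\rho$ makes the sub-optimality gap of every sub-optimal (leaf, arm) pair equal to $\rho$ while all remaining gaps vanish, so the family is a legitimate gap-$\rho$ instance. The quantitative crux is the joint calibration of $(g_0,\Delta,m)$ --- together with a small ``near-tie'' slack --- so that simultaneously (i) the value discrepancy between neighboring instances is $\Theta(\epsilon)$, which ties the planning precision to $\epsilon$, and (ii) the per-visit Kullback--Leibler separation between those neighbors is only $\Theta(\rho\epsilon/H^2)$ rather than $\Theta(\rho^2/H^2)$. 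Step (ii) is what converts one factor $1/\rho$ into $1/\epsilon$, and it is the step I expect to be the main obstacle: one must make the instances as statistically indistinguishable as the $\epsilon$-slack in the PAC criterion permits, while being unable to push them any closer without destroying $\gap_{\min}\ge\rho$ --- and this is precisely where obliviousness of the reward is indispensable (and why no such $1/\epsilon$ appears in supervised gap-dependent RL).

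\textbf{The information argument.} With the family fixed, three by-now-standard steps finish the proof. First, a simulation/Wald identity relates $\Ebb_\theta[K]$ to the expected visitation counts $\Ebb_\theta[N(s,a)]$ at the $\Theta(SA)$ critical (leaf, arm) pairs. Second, the chain rule for KL along episodic trajectories gives, for $\theta,\theta'$ differing only at one critical pair $(s,a)$, $\mathrm{KL}\big(\Pbb_\theta^{(K)}\,\big\|\,\Pbb_{\theta'}^{(K)}\big)=\Ebb_\theta[N(s,a)]\cdot\mathrm{KL}_{\mathrm{step}}\lesssim \Ebb_\theta[N(s,a)]\cdot\rho\epsilon/H^2$. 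Third, an $(\epsilon,\delta)$-PAC policy must be $\epsilon$-optimal on both $\mathcal{M}_\theta$ and $\mathcal{M}_{\theta'}$, and a single wrong arm at $(s,a)$ costs more than $\epsilon$ in value, so the induced binary test has error below $\delta$ under both laws; Bretagnolle--Huber then forces $\mathrm{KL}_{\mathrm{step}}\cdot\Ebb_\theta[N(s,a)]\gtrsim\log(1/\delta)$, i.e.\ $\Ebb_\theta[N(s,a)]\gtrsim (H^2/(\rho\epsilon))\log(1/\delta)$. (Here $\delta<\delta_0$ is what makes $\log(c_2/\delta)>0$, and $\rho<\rho_0$ keeps all bumped probabilities inside $[0,1]$.) Averaging over $\theta$ and summing these per-pair bounds over the $\Theta(SA)$ critical pairs --- each episode contributing $\Ocal(1)$ such visits since it touches one leaf --- yields $\Ebb[K]\gtrsim (H^2 SA/(\rho\epsilon))\cdot\log(c_2/\delta)$, as claimed. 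The remaining care is to run the change of measure against history-adapted (closed-loop) strategies and to keep the ``one wrong arm costs $>\epsilon$'' bookkeeping valid uniformly over $\epsilon\in(0,\rho)$, for which the number of active leaves and the per-leaf reach probability must be tuned to the regime.
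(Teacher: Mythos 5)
Your overall architecture (a family of tree-plus-bandit-gadget instances, a change-of-measure/Bretagnolle--Huber argument, summing visitation lower bounds over $\Theta(SA)$ critical pairs) matches the paper's in spirit, and the paper likewise reduces to an $SA$-armed bandit identification problem (it black-boxes the Mannor--Tsitsiklis lower bound where you would run the KL computation by hand; that difference is cosmetic). However, the step you yourself flag as the main obstacle --- item (ii), calibrating the \emph{per-visit} KL at a critical (leaf, arm) pair down to $\Theta(\rho\epsilon/H^2)$ --- is not merely hard, it is information-theoretically impossible, and this is where your proposal breaks. The gap constraint forces the transition-probability perturbation $\Delta$ at a critical pair to satisfy $\Delta\cdot m \gtrsim \rho$ with $m \le H$ residual value, hence $\Delta \gtrsim \rho/H$; but $\mathrm{KL}\bigl(\mathrm{Bern}(g_0)\,\|\,\mathrm{Bern}(g_0+\Delta)\bigr) \ge 2\Delta^2 \gtrsim \rho^2/H^2$ by Pinsker, for \emph{any} baseline $g_0$. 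No choice of $(g_0,\Delta,m)$ can push the per-visit KL below $\rho^2/H^2$ while preserving $\mathrm{gap}_{\min}\ge\rho$, so your argument as written only yields $\Ebb[N(s,a)] \gtrsim (H^2/\rho^2)\log(1/\delta)$ and hence $\Ebb[K] \gtrsim H^2SA/\rho^2$, which is strictly weaker than the claimed $H^2SA/(\rho\epsilon)$ whenever $\epsilon < \rho$.

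The missing idea, which is the crux of the paper's construction, is that the factor $1/\epsilon$ must come from \emph{reachability}, not from the KL at the gadget: the paper inserts, in front of each bandit gadget, a gateway state that transitions (action-independently) into the gadget with probability only $\epsilon/\rho$ and otherwise into a worthless absorbing state. This keeps the per-visit KL at the gadget at $\Theta(\rho^2/H^2)$ --- so $\Omega(H^2SA/\rho^2)$ gadget visits are still required --- while (a) the value cost at the initial state of playing a wrong arm in a gadget is $(\epsilon/\rho)\cdot\Theta(\rho) = \Theta(\epsilon)$, so $\epsilon$-optimality still forces best-arm identification there, and (b) each episode reaches a gadget with probability only $\epsilon/\rho$, so $\Ebb[N_b] = (\epsilon/\rho)\,\Ebb[K]$ and the visit requirement converts into $\Ebb[K] \gtrsim H^2SA/(\rho\epsilon)$. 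Your construction, in which every leaf is reached with probability $\approx 1/S$ per episode, has no such bottleneck and cannot produce the $1/\epsilon$ dependence; you would need to replace your KL-calibration step with this low-probability gateway (and then your three-step information argument goes through essentially as you describe it, applied to the visit count $N_b$ of the gadgets rather than to $K$ directly).
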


\begin{remark}
According to Theorem~\ref{thm:gap-upper-bound}, \ALGO only requires $\propto\widetilde{\Ocal}(1/\epsilon)$ number of episodes to solve TAE provided with a constant gap parameter, which improves the existing, pessimistic rates $\propto\widetilde{\Ocal}(1/\epsilon^2)$ achieved by algorithms that focus on worst-case TAE instances~\citep{zhang2020task,wang2020reward,wu2020accommodating}.
Moreover, according to Theorem~\ref{thm:gap-lower-bound}, this $\propto\widetilde{\Ocal}(1/\epsilon)$ rate is nearly optimal upto logarithmic factors, which exhibits some fundamental limitations of the acceleration afforded by a constant gap parameter.
A numerical simulation for the acceleration phenomenon is provided in Appendix~\ref{appendix-section:experiment}.
\end{remark}

\begin{remark}
If $\rho \lesssim 1/(HS)$, the error upper bound in Theorem~\ref{thm:gap-upper-bound} is simplified to $\widetilde{\Ocal}(H^3 SA / (\rho K))$, i.e., \ALGO needs at most $K =\widetilde{\Ocal} ( {H^3SA}/{\rho \epsilon} )$ episodes to be $(\epsilon, \delta)$-PAC.
In this regime, Theorem~\ref{thm:gap-lower-bound} suggests that \ALGO achieves a nearly optimal rate in terms of $S$, $A$, $\rho$ and $\epsilon$ (or $K$) ignoring logarithmic factors. Still, the dependence of $H$ is improvable, which we leave as a future work.
\end{remark}

% \begin{remark}
% For $\rho \gtrsim 1/(HS)$, the presented bound in Theorem~\ref{thm:gap-upper-bound} could be loose due to the second term, $\widetilde{\Ocal}(H^4 S^2 A / K)$.
% Note this term seems to be unavoidable for model-based algorithms~\citep{azar2017minimax,zanette2019tighter,simchowitz2019non}, and could potentially be mitigated by model-free algorithms~\citep{jin2018q,yang2020q,xu2021fine}. Removing this term is left as another future work.
% \end{remark}

\begin{remark}\label{remark:rfe}
As explained before, our algorithm can be applied to other unsupervised RL settings as well, e.g., reward-free exploration~\citep{jin2020reward} with a gap parameter $\rho$ that satisfies \eqref{eq:gap-parameter}.
In this case, the upper bound in Theorem~\ref{thm:gap-upper-bound} needs to be revised to 
\[
\bigOT{ \frac{H^3 S^2 A}{\rho K} + \frac{H^4 S^3 A}{K} }.
\]
This is obtained by a standard converging and union bound argument on the set of all possible value functions, and from where the extra $S$ factor stem\footnote{For more details, we refer the reader to event \ref{eq:G1} defiend in Appendix \ref{appendix-section:upper-bound-proof}. To apply our result in reward-free exploration, we need \ref{eq:G1} holds for every optimal value function $V_h^*$, which can be guaranteed by covering and union bound argument over all possible value functions, $V_h\in [0,H]^S$.}\footnote{Due to a similar reasoning, when applied to the task-agnostic setting with $N$ selected planning tasks \citep{zhang2020task}, the $\log(HSAK/\delta)$ factor in our bound needs to be modified to $\log(N HSAK/\delta)$.}.
Comparing to the worst-case optimal rate $\propto\Ocal(1/\sqrt{K})$ for RFE \citep{jin2018q,zhang2020nearly}, we again achieve a quadratic saving in terms of $K$ for benign RFE instances with a constant $\rho$.
\end{remark}

\paragraph{Proof Sketch of Theorem~\ref{thm:gap-upper-bound}.}
We first look at the planning phase~(Algorithm~\ref{alg:planning}). Since the reward induces a sub-optimality gap at least $\rho$, with some computations we can obtain the following error estimation of the planning error per episode: for every $k$,
\begin{equation}
        V^*_1(x_1) - V_1^{\pi^k}(x_1) \lesssim \Ebb_{\pi^k, \Pbb} \sum_{h=1}^H c^k(x_h, a_h),
        \label{eq:main:planning-error-bounded-by-sum-of-bonus}
\end{equation}
where $\pi^k$ is the planning policy at the $k$-th episode and $c^k(x,a)$ is the \emph{clipped} exploration bonus at the $k$-th episode.
The right hand side of \eqref{eq:main:planning-error-bounded-by-sum-of-bonus} can be further improved to have a uniform upper bound $H$ per decision step.
Note that the right hand side of \eqref{eq:main:planning-error-bounded-by-sum-of-bonus} is the expected cumulative bonus over the trajectory induced by policy $\pi^k$ and the \emph{true} transition $\Pbb$, and that the bonus contains lower order terms that control the error of an inaccurately estimated probability transition (Algorithm~\ref{alg:exploration}, line~\ref{line:clipped-bonus}).
Therefore, upto some constant factors, it can be bounded by the expected cumulative bonus over the trajectory induced by policy $\pi^k$ and the \emph{empirical} transition $\widehat{\Pbb}^k$.
The above analysis reflects \eqref{eq:main:empi-sum-of-bonus}.
Moreover, \eqref{eq:main:exploration-value} holds naturally according to Algorithm~\ref{alg:exploration}, since $\overline{V}_1^k(x_1)$ maximizes the cumulative bonus over the empirical transition by dynamic programming.
\begin{align}
    V^*_1(x_1) - V_1^{\pi^k}(x_1) 
    &\lesssim \Ebb_{\pi^k, \widehat{\Pbb}^k } \sum_{h=1}^H  H \land  c^k(x_h, a_h) \label{eq:main:empi-sum-of-bonus} \\
    &\le \overline{V}_1^k(x_1). \label{eq:main:exploration-value}
\end{align}
Finally, a standard regret analysis for the exploration phase shows that the total exploration values in the exploration phase is logarithmic, thanks to the clipped exploration bonus. Thus the total planning error is also logarithmic.
The presented error upper bound is established by averaging over the $K$ episodes.
Some of proving techniques are motivated by~\citep{simchowitz2019non,wu2020accommodating}. 
Our key novelty is to carefully incorporate the clip operator in the bonus function and use that to build a connection between the planning and exploration phases.
See Appendix~\ref{appendix-section:upper-bound-proof} for more details.

\begin{figure*}
    \centering
    \includegraphics[width=0.9\linewidth]{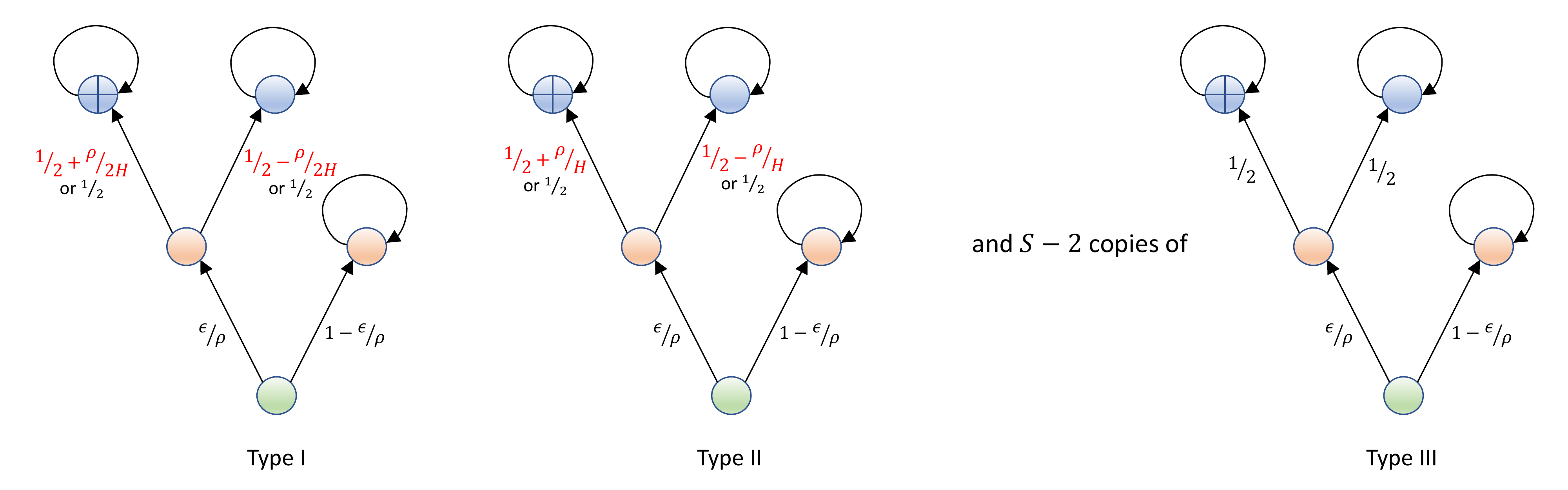}
    \caption{\small A hard-to-learn MDP example. States are denoted by circles. A state-determined reward function takes value $1$ at the state denoted by a circle with a plus sign and takes value zero otherwise. 
    The plot only shows the structure of the last three layers of the MDP, which consists of a Type I model, a Type II model, and $S-2$ Type III model. These models are connected by a $(\log_A S)$-layer tree with $A$-branches in each layer, and with deterministic and known transition.
    In the Type III model, from the green state and for all actions, it transits to the left orange state with probability ${\epsilon}/{\rho}$, or to the self-absorbing right orange state with probability $1-{\epsilon}/{\rho}$.
    Then from the left orange state and for all actions, it transits to the two blue states evenly.
    The Type I (II) model is only different from the Type III model at the left orange state: in Type I (II) model, there exists and only one action such that it transits to the left blue state with probability $1/2 + {\rho}/(2H)$ (with probability $1/2 + {\rho}/{H}$), and to the right blue state otherwise.
    In other words, in the left orange states, an optimal action exists in the Type II model, a second-to-the-best action exists in the Type I model, and all other actions are equivalent and are sub-optimal.
    }
    \label{fig:hard-example}
\end{figure*}

\paragraph{Proof Sketch of Theorem~\ref{thm:gap-lower-bound}.}
A hard instance that witnesses the lower bound is shown in Figure \ref{fig:hard-example}.
The hard instance is motivated by~\citep{mannor2004sample,dann2015sample}. 
One can verify that this instance has a minimum sub-optimality gap $\rho/2$.
Indeed, the only states that have sub-optimal actions are the \emph{left orange states} in Type I model or Type II model. For the left orange state in Type I model, the optimal action has value $H/2 + \rho /2$, but the sub-optimal ones have value $ H / 2$, where the gap is $\rho  / 2$. Similarly we can verify the gap in Type II model is $\rho$.
Moreover, in order to be $\epsilon/3$-correct, the agent needs to identify the optimal action at the left orange states, otherwise it takes a sub-optimal action and incurs a value error at least $\epsilon/ \rho \cdot \rho / 2 = \epsilon / 2$.
Now the problem is reduced to identify the best action at the left orange states. Let us ignore $H, S, A$ factors and focus on $\rho$ and $\epsilon$.
Then at the left orange states, the probability gap between the best action and the second best action is $\propto\Theta(\rho)$, thus $
\propto\Omega(1/\rho^2)$ samples are needed to identify the optimal action.
On the other hand, in each episode there is only $\epsilon / \rho$ chance to visit the left orange states, thus $\propto\Omega(1/(\rho\epsilon))$ episodes are needed to provide $\propto\Omega(1/\rho^2)$ samples at the left orange sates. This justifies the $\propto\Omega(1/(\rho\epsilon))$ rate in the lower bound.
A complete proof is deferred to Appendix~\ref{appendix-section:lower-bound-proof}.

\subsection{Comparison with Multi-Armed Bandit}\label{sec:bandit}
Theorems~\ref{thm:gap-upper-bound} and~\ref{thm:gap-lower-bound} show that for unsupervised RL problems, even when the instance is benign and has a constant gap parameter, a $\propto{\Omega}(1/\epsilon)$ sample complexity must be paid. 
This establishes a stark contrast to the gap-dependent unsupervised exploration problems on a \emph{multi-armed bandit} (MAB). 
In particular, for an MAB with $A$ arms and a minimum sub-optimality gap $\rho$ (that is the gap between the expected rewards of the best action and the second-best action),
a rather simple \emph{uniform exploration} strategy, with $T = \Ocal\big(\frac{A}{\rho^2}\log \frac{A}{\delta}\big) \propto \widetilde{\Ocal}(1)$ pulls of the arms, is $(\epsilon, \delta)$-correct for identifying the best arm (see, e.g., Theorem 33.1 in \citet{lattimore2020bandit}, or Appendix~\ref{appendix-section:bandit-and-mdp-with-generative-model}).
These observations from gap-dependent unsupervised exploration establish an interesting statistical separation between general RL (corresponds to MDP with $H \ge 2$) and MAB (corresponds to MDP with $H =1$), with a sample complexity comparison $\propto\widetilde{\Ocal}(1/\epsilon)$ vs. $\propto\widetilde{\Ocal}(1)$.
This separation suggests that \emph{unsupervised RL is significantly more challenging than unsupervised MAB} even after normalizing the $H$ factor.
The conclusion goes against an emerging wisdom from the supervised RL theory, that RL is nearly as easy as MAB in the supervised setting, given that $H$ factor is normalized~\citep{jiang2018open,wang2020long,zhang2020reinforcement}.

% We emphasize that this is an interesting contrast to the problem of regret minimization for known task, where learning episodic MDP is no harder than learning multi-arm bandits \citep{jiang2018open,wang2020long,zhang2020reinforcement}. 

Let us take a deeper look at where RL is harder than MAB.
The hard instance in Figure~\ref{fig:hard-example} clearly illustrates the issue:
\emph{there could exist some important states in MDP that cannot be ignored, but are hard to reach in the same time}, e.g., ignoring the left orange states in Figure \ref{fig:hard-example} would result in a $\Theta(\epsilon)$ error, but there is only $\Theta(\epsilon)$ chance to reach these states per episode, thus in order to reach the left orange states for at least constant times, an algorithm needs at least $\propto\Omega(1/\epsilon)$ number of trajectories.

To further verify our understanding, let us consider an MDP with a sampling simulator~\citep{sidford2018near,sidford2018variance,wang2017randomized,azar2013minimax}. The sampling simulator allows us to draw samples at any state-action pair, thus exempts the ``hard-to-reach'' states.
The following theorem shows that for MDP with a sampling simulator, the gap-TAE problem can also be solved with $\propto\widetilde{\Ocal}(1)$ samples, as in the case of MAB. A proof is included in Appendix \ref{appendix-section:bandit-and-mdp-with-generative-model}.

% Indeed, exploration problems on episodic MDPs must suffer an at least $\propto\Omega(1/\epsilon)$ sample complexity caused by exploring the some important but hard-to-reach states, e.g., the left orange state in Figure \ref{fig:hard-example}, and this issue cannot be exempted even with a gap prior.
% Such circumstances do not exist for exploration problems on multi-arm bandits, and hence a gap prior can accelerate exploration to a logarithmic rate for multi-arm bandits. 

\begin{theorem}[MDP with a sampling simulator]\label{thm:with-generative-model}
Suppose there is a sampling simulator for the MDP considered in the gap-TAE problem.
Consider exploration with the uniformly sampling strategy, and planning with the dynamic programming method with the obtained empirical probability.
If $T$ samples are drawn, where 
\[
T \ge \frac{2H^4 SA}{\rho^2} \cdot \log \frac{2HSA}{\delta},
\]
then with probability at least $1-\delta$, the obtained policy is optimal ($\epsilon = 0$).
\end{theorem}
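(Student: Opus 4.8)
}
The plan is to show that, under the stated budget, dynamic programming on the empirical model produces a $Q$-function that is uniformly within $\rho/2$ of the true $Q^*$; since by \eqref{eq:gap-parameter} every nonzero sub-optimality gap of the revealed reward is at least $\rho$, the greedy policy extracted from this $Q$-function must take an \emph{exactly} optimal action at every $(h,x)$, hence is an optimal policy of the true MDP, and $\epsilon=0$.

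Concretely, first I would query each state--action pair the same number $n := T/(SA) \ge \tfrac{2H^4}{\rho^2}\log\tfrac{2HSA}{\delta}$ of times (round-robin; integer rounding is immaterial), producing at each $(x,a)$ that many i.i.d.\ transitions. Fix (condition on) the obliviously revealed reward $r$; then the true optimal value functions $V^*_h \in [0,H]^S$ are \emph{deterministic} vectors, independent of the samples, so for each $(h,x,a)$ the quantity $\widehat{\Pbb}_h V^*_{h+1}(x,a)$ is an empirical average of $n$ i.i.d.\ variables bounded in $[0,H]$ with mean $\Pbb_h V^*_{h+1}(x,a)$. Hoeffding's inequality and a union bound over the $HSA$ triples give that, with probability at least $1-\delta$,
\[
\bigl|(\widehat{\Pbb}_h - \Pbb_h) V^*_{h+1}(x,a)\bigr| < H\sqrt{\frac{\log(2HSA/\delta)}{2n}} \le \frac{\rho}{2H}\qquad\text{for all }(h,x,a),
\]
where the last inequality is exactly the choice of $n$. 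The essential point, and the only place the oblivious-reward (task-agnostic) structure is used, is that we concentrate $\widehat{\Pbb}_h V^*_{h+1}$ against a \emph{fixed} vector rather than bound $\|\widehat{\Pbb}_h(\cdot\mid x,a) - \Pbb_h(\cdot\mid x,a)\|_1$; the latter would cost an extra $\sqrt{S}$ and only yield the weaker requirement $T\gtrsim H^4 S^2 A/\rho^2$. This is the step to get right; the rest is bookkeeping.

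Next I would propagate the one-step error across the horizon. Let $\widehat{Q}^*_h,\widehat{V}^*_h$ be the outputs of dynamic programming on $(\widehat{\Pbb},r)$, with $\widehat{V}^*_{H+1}=V^*_{H+1}=0$. The Bellman recursions give, for every $(h,x,a)$,
\[
\widehat{Q}^*_h(x,a)-Q^*_h(x,a) = \widehat{\Pbb}_h\bigl(\widehat{V}^*_{h+1}-V^*_{h+1}\bigr)(x,a) + (\widehat{\Pbb}_h-\Pbb_h)V^*_{h+1}(x,a),
\]
and since a pointwise maximum is $1$-Lipschitz in sup-norm and $\widehat{\Pbb}_h(\cdot\mid x,a)$ is a probability vector, $\|\widehat{V}^*_h-V^*_h\|_\infty \le \|\widehat{Q}^*_h-Q^*_h\|_\infty < \|\widehat{V}^*_{h+1}-V^*_{h+1}\|_\infty + \rho/(2H)$. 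Unrolling from $h=H$ down to $h=1$ gives $\|\widehat{Q}^*_h-Q^*_h\|_\infty < H\cdot\rho/(2H)=\rho/2$ for every $h$.

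Finally I would extract the policy. Let $\widehat{\pi}_h(x)\in\arg\max_a \widehat{Q}^*_h(x,a)$ be the planning policy and $a^*$ an optimal action at $(h,x)$; using that $\widehat{\pi}_h(x)$ maximizes $\widehat{Q}^*_h(x,\cdot)$,
\[
\gap_h\bigl(x,\widehat{\pi}_h(x)\bigr) = Q^*_h(x,a^*)-Q^*_h\bigl(x,\widehat{\pi}_h(x)\bigr) \le \widehat{Q}^*_h(x,a^*)-\widehat{Q}^*_h\bigl(x,\widehat{\pi}_h(x)\bigr) + 2\|\widehat{Q}^*_h-Q^*_h\|_\infty < \rho.
\]
By \eqref{eq:gap-parameter}, every nonzero gap of $r$ is at least $\rho$, so $\gap_h(x,\widehat{\pi}_h(x))=0$: $\widehat{\pi}_h(x)$ is optimal at \emph{every} $(h,x)$. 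A backward induction on $h$ (base case $V^{\widehat{\pi}}_{H+1}=V^*_{H+1}=0$; inductive step via the Bellman equation and $\gap_h(x,\widehat{\pi}_h(x))=0$) then gives $V^{\widehat{\pi}}_h=V^*_h$ for all $h$, in particular $V^{\widehat{\pi}}_1(x_1)=V^*_1(x_1)$, so the returned policy is optimal. The only real obstacle is the $\sqrt{S}$-free concentration in the first step; everything else is routine.
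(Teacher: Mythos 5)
Your proof is correct, and it follows the same overall strategy as the paper: uniform sampling, Hoeffding concentration of $\widehat{\Pbb}V^*_{h+1}$ against the \emph{fixed} vector $V^*_{h+1}$ (union bound over $HSA$ triples, giving the one-step error $\rho/(2H)$ without the $\sqrt{S}$ penalty of an $\ell_1$ model bound), and then the observation that a value error below $\rho$ combined with the gap assumption forces the greedy action to be exactly optimal, whence $\epsilon=0$ by backward induction. Your good event and sample-size arithmetic match the paper's Lemma~\ref{lemma:good-event-prob-generative} exactly. The one place you diverge is the error-propagation step: you bound $\|\widehat{Q}^*_h - Q^*_h\|_\infty \le \rho/2$ via a self-contained recursion comparing the two \emph{optimal} value functions, so the "every action of $\widehat\pi$ is optimal" conclusion drops out in one shot, independently of the final induction. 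The paper's Lemma~\ref{lemma:error-under-good-event-generative} instead decomposes $V^*_h - V^\pi_h$ through $\widehat{V}^{\pi^*}_h$ and $\widehat{V}^{\pi}_h$, and must interleave the concentration with the induction hypothesis $V^\pi_t = V^*_t$ for $t>h$ in order to replace $V^\pi_{t+1}$ by $V^*_{t+1}$ (since concentration is only available against the fixed $V^*$). Your organization avoids that interleaving and is arguably cleaner; both yield the same bound.
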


\section{ADDITIONAL DISCUSSIONS}\label{sec:discussions}
\paragraph{The Gap Parameter.}
The gap parameter $\rho$ is a hyperparameter for Algorithm~\ref{alg:exploration}.
So long as the inputted gap parameter $\rho$ is a reasonably large constant and satisfies condition \eqref{eq:gap-parameter}, our algorithm achieves a fast, $\propto\widetilde{\Ocal}(1/\epsilon)$ rate for TAE according to Theorem~\ref{thm:gap-upper-bound}.
If the inputted gap parameter $\rho$ violates condition \eqref{eq:gap-parameter}, Theorem~\ref{thm:gap-upper-bound} no longer directly holds; but one can easily show that
\begin{align}
& \ V^*_1(x_1) - V^{\pi}_1(x_1)  = \Ebb_{\pi, \Pbb} \sum_{h=1}^H \big( V^*_h(x_h) - Q^*_h(x_h, a_h) \big) \notag \\
&\le \Ebb_{\pi, \Pbb} \sum_{h=1}^H \clip_{\rho} \sbracket{ V^*_h(x_h) - Q^*_h(x_h, a_h)} + H \rho \label{eq:split-a-gap} \\
&\le \widetilde{\Ocal} \Big( \frac{H^3 SA}{\rho K} + \frac{H^4 S^2 A}{K} + H\rho \Big), \label{eq:apply-bound-to-gap-part}
\end{align}
where \eqref{eq:split-a-gap} is by the definition of clip operator, and \eqref{eq:apply-bound-to-gap-part} is by applying Theorem \ref{thm:gap-upper-bound} to the first term in \eqref{eq:split-a-gap} (that equals to the error on an MDP with a sub-optimality gap $\rho$).
In this way, if the TAE instance is in fact ``hard'' in that condition \eqref{eq:gap-parameter} cannot be met for any reasonably large $\rho$, one can still choose a small $\rho  \eqsim { \epsilon / H}$ and run \ALGO for $K = \widetilde{\Ocal}(H^2 SA / \rho^2 + H^3 S^2 A / \rho)$ episodes to obtain a policy that is $(H\rho, \delta)$-PAC.
Note that this $\propto\widetilde{\Ocal}(1/\epsilon^2)$ rate is minimax optimal ignoring logarithmic factors (and the $H^2$ factor) \citep{jin2020reward,dann2015sample}.
An open question is: is there a sample-efficient algorithm for gap-TAE that does not need an inputted gap parameter?

\paragraph{The $H$ Dependence.}
In the regime that $\rho \lesssim 1/ (HS)$, our upper bound can potentially be improved for an $H$ factor, comparing with the provided lower bound.
Technically, this is because Algorithm~\ref{alg:exploration} utilizes a Hoeffding-type bonus, which is known to be less tight compared with a Bernstein-type bonus~\citep{azar2017minimax}.
Hoeffding-type bonus has the benefits of being \emph{reward-independent}, which allows us to construct a reward-independent, hence unsupervised, exploration policy that minimizes an upper bound of the per-episode, reward-dependent planning error.
In contrast, Bernstein-type bonus is reward-dependent as it requires an (good) estimation to the value function which relies on reward signals.
Since the reward signal is not available in the exploration phase, we do not see a clear way to adopt Bernstein-type bonus in our problem.
% Due to this issue, a Bernstein-type bonus is not helping in our problem. 
We leave the issue of further tightening the $H$ factor as a future work.

\paragraph{Removing the Gap-Independent Term.}
The bound presented in Theorem~\ref{thm:gap-upper-bound} has a gap-independent term, $\widetilde{\Ocal}(H^4 S^2 A / K)$, which is still linear in $\widetilde{\Ocal}(1/K)$ but is quadratic in $S$.
This $S^2$-dependence appears in the sample-complexity lower-order term of all the model-based algorithms that we are aware of, e.g., ~\citep{azar2017minimax,zanette2019tighter,simchowitz2019non}, and could potentially be mitigated by model-free algorithms~\citep{jin2018q,yang2020q}.

\paragraph{Visiting Ratio Based Approaches.}
In the context of reward-free exploration, \citet{zhang2020nearly} extend a visiting ratio based approach that is initially proposed by~\citet{jin2020reward}, and achieves a nearly minimax optimal sample complexity for reward-free exploration.
The visiting ratio based method has the advantage of supporting plug-in planners (i.e., an approximate MDP solver given trasition matrix and reward) \citep{jin2020reward}.
% These approaches, however, involve a $S^2$-dependence on the obtained bound, which is sub-optimal in the context of task-agnostic exploration (see \citet{zhang2020task,wu2020accommodating} and Theorem~\ref{thm:gap-lower-bound}).
Such approach can be adapted to obtain a $\widetilde{\Ocal}(1/\epsilon)$ fast rate for gap-TAE as well; 
however, it involves a $S^2$-dependence on the obtained bound, which is sub-optimal in the context of task-agnostic exploration (see \citet{zhang2020task,wu2020accommodating} and Theorem~\ref{thm:gap-lower-bound}).

\paragraph{A Refined Gap Dependence?}
For supervised RL, the finest gap-dependent sample complexity bound can accurately characterize the role of each state-action gap \citep{simchowitz2019non,xu2021fine}.
In contrast, for unsupervised RL, our derived gap-dependent bound is stated as a function of a gap parameter that reflects only the \emph{minimum} nonzero state-action gap.
We argue that obtaining a refined gap dependence is generally not
possible in the unsupervised setting.
In particular, let us consider a gap-TAE problem with two candidate rewards, where one of them is specified as in our lower
bound construction (see Theorem \ref{thm:gap-lower-bound} and Figure \ref{fig:hard-example}) and the other induces better state-action gaps;
in the planning phase, one of the two rewards is selected uniformly at
random.
Then any algorithm for this problem instance will need to solve the hard instance in Theorem \ref{thm:gap-lower-bound} with probability $1/2$, therefore it must suffers from the ``worst'' gap parameter instead of the better state-action gaps.

\section{RELATED WORKS}\label{sec:related}

\paragraph{Supervised RL with Gap Dependence.}
In the literature of supervised RL, the state-action sub-optimality gap has been long embraced to characterize instance-dependent theoretic guarantees. \citet{ortner2007logarithmic,tewari2007optimistic,ok2018exploration} study gap-dependent bounds in the asymptotic sense, and \citet{jaksch2010near} establish a finite time, gap-dependent bound.
More recently, in the setting of finite-horizon MDP and for a broad class of UCB-type, model based algorithms, \citet{simchowitz2019non} provide a finite, gap-dependent regret bound that comprehensively interpolates the minimax, gap-independent regret bound and a logarithmic, gap-dependent regret bound.
Similar results are further built for model-free algorithms~\citep{yang2020q}, linear MDP~\citep{he2020logarithmic}, and MDP with corruptions~\citep{lykouris2019corruption}.
More recently, \citet{dann2021beyond} improves the gap-dependent bounds in \citet{simchowitz2019non} by considering the reachability of state-action pairs.
In addition to this line, \citet{jonsson2020planning} study the gap-dependent bound for a Monte-Carlo tree search algorithm, \citet{xu2021fine} establish a fine-grained gap-dependent bound through a non-UCB type algorithm, and \citet{al2021navigating,wagenmaker2021beyond} investigate gap-dependent bound for the best-policy-identification problem\footnote{Similarly to our lower bound construction, \citet{wagenmaker2021beyond} also exploit a transition probability to show their gap-dependent lower bound. We emphasize that our works are concurrent and the similar idea is used for solving different problems.}.
However, all these results explores the environment under the guidance of an observable reward signal, thus are not applicable to the unsupervised exploration problems studied in this paper.

\paragraph{Unsupervised RL in the Worst Case.}
The arguably most typical unsupervised exploration problem is the reward-free exploration problem formalized by \citet{jin2020reward}, where the agent collects samples in the unsupervised fashion in order to be able to plan nearly optimally for arbitrary rewards. 
Prior to \citet{jin2020reward}, \citet{hazan2018provably,brafman2002r,du2019provablyefficient} also study exploratory policies with certain covering properties; 
and following \citet{jin2020reward}, the sample complexity of reward-free exploration is further improved to nearly minimax optimal by \citet{kaufmann2020adaptive,menard2020fast,zhang2020nearly}.
Besides reward-free exploration problems, \citet{zhang2020task,wang2020reward} introduce and study task-agnostic exploration problems where the planning reward is fixed but unknown during exploration, and \citet{wu2020accommodating} study preference-free exploration problems in the context of multi-objective RL.
Nonetheless, the above considerations of unsupervised exploration problems and their algorithms take no advantage of a gap parameter, and the obtained theoretic results are pessimistic and restricted by the worst cases.

\paragraph{Lenient Regret.}
Our results can also be understood from the viewpoint of \emph{lenient error}~\citep{merlis2020lenient}, which is first introduced in the context of MAB and aims to capture a regret that tolerances small errors per decision step.
This notion naturally generalizes to RL as follows:
\begin{equation}\label{eq:lenient-error}
\begin{aligned}
    & \mathtt{LenientError}(\pi) := \\
    &\qquad \Ebb_{\pi, \Pbb} \sum_{h=1}^H \clip_{\rho} \sbracket{ V^*_h(x_h) - Q^*_h(x_h, a_h)}.
\end{aligned}
\end{equation}
Clearly, solving a gap-TAE problem with a gap parameter $\rho$ under the error measured by $V^*_1(x_1) - V^\pi_1(x_1)$ is equivalent to solving a TAE problem under a lenient error measured by \eqref{eq:lenient-error}.
From this viewpoint, an interesting future direction to extend our results to general gap function (see Definition 1 in \citet{merlis2020lenient}) beyond the $\clip_\rho [\cdot]$ studied in this work.

\section{CONCLUSION}\label{sec:conclusion}
In this paper we study sample-efficient algorithms for gap-dependent unsupervised exploration problems in RL. When the targeted planning tasks have a constant gap parameter, the proposed algorithm achieves a gap-dependent sample complexity upper bound that significantly improves the existing pessimistic bounds. 
Moreover, an information-theoretic lower bound is provided to justify the tightness of the obtained upper bound. 
These results establish an interesting statistical separation between RL and MAB (or RL with a simulator) in terms of gap-dependent unsupervised exploration problems.

\section*{Acknowledgements}

We would like to thank the anonymous reviewers and area chairs for their helpful comments. 
% This project was supported in part by DARPA-PA-20-02-11-ShELL-FP-007.
This research was supported by the Defense Advanced Research Projects Agency (DARPA) under Contract No. HR00112190130.
\bibliography{ref}

%%%%%%%%%%%%%%%%%%%%%%%%%%%%%%%%%%%%%%%%%%%%%%%%%%%%%%%%%%%%
\newpage
\appendix
\onecolumn

\section{NUMERICAL SIMULATIONS}\label{appendix-section:experiment}
Figure~\ref{fig:rate} illustrates the fast rate achieved by \ALGO for task-agnostic exploration on a benign MDP with constant minimum sub-optimality gap. The curve for \ALGO indicates the planning error of \ALGO when running on a random MDP with $H=5, S=10, A=10, \rho=0.4$ and $K=50,000$.
By comparing with the minimax rate, we observe that \ALGO solves task-agnostic exploration with a faster rate, when the task has a constant minimum sub-optimality gap.

In this experiment, the random MDP can be regarded as a Grid World. 
It is generated as follows. 
The reward is $1$ at a state $x^*$ and is $0$ otherwise. The initial state is fixed $x_0 \neq x^*$ and cannot be revisited in the rest steps in one game. 
$x^*$ can only be reached by taking an optimal action $a^*$ at two states, $x_0$ and $x^*$, where $\Pbb \{ x^* \vert x_0, a^*\} = \rho$ and $\Pbb\{ x^* \vert x^*, a^* \} = 1$.
Except for these constraints, the transition kernel is filled with random numbers uniformly distributed in $(0,1)$ and is then properly normalized.
This MDP has a gap parameter $\rho$. 
In experiment the exploration bonus in Algorithm \ref{alg:exploration} is simplified: only the first two terms are considered (which are leading terms) and absolute constants and logarithmic factors are set to be $1$. 
% Experimental results verify that our algorithm solves gap-TAE faster than the minimax-optimal rate for TAE. 

\begin{figure}
    \centering
    \includegraphics[width=0.6\linewidth]{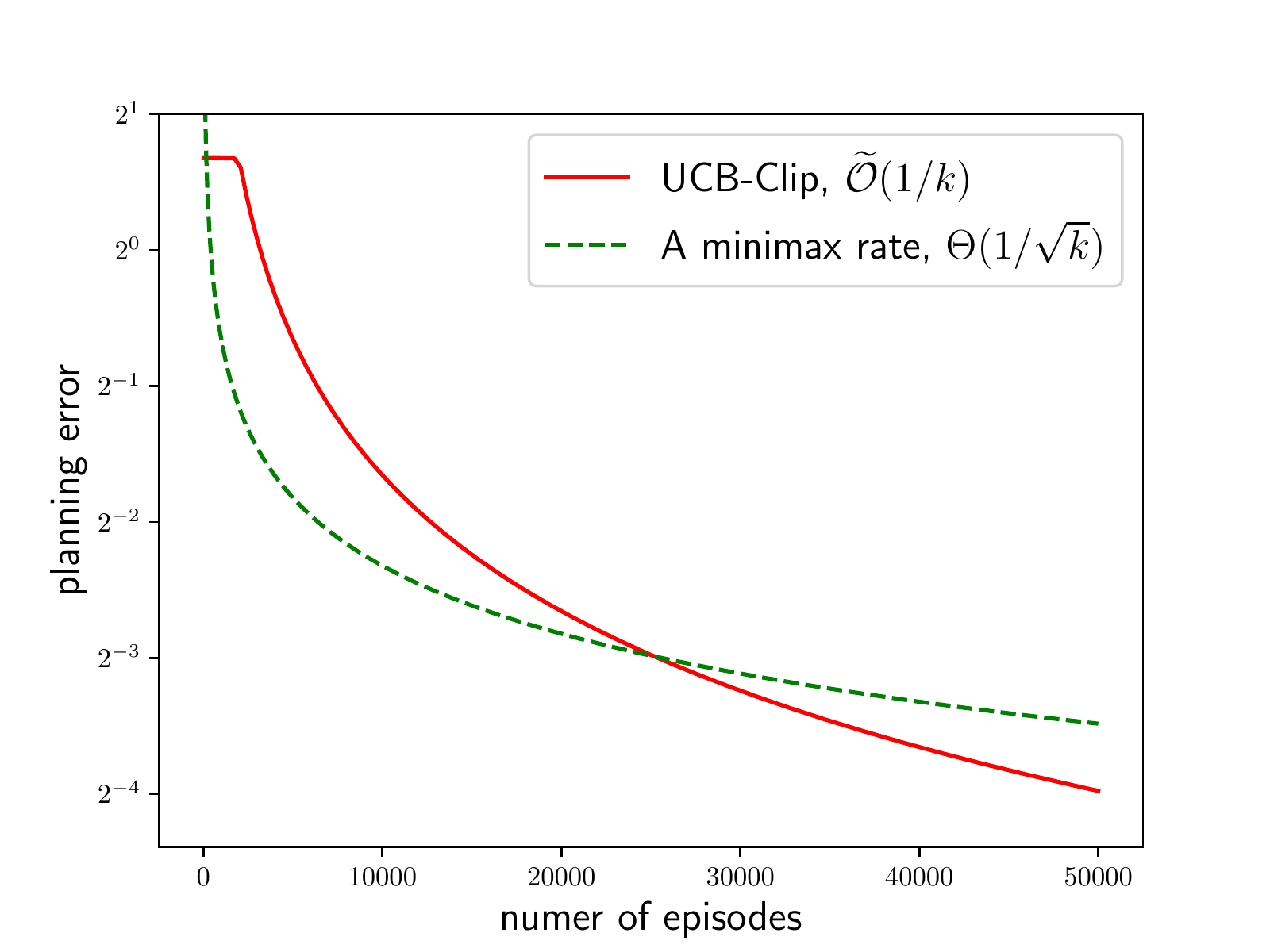}
    \caption{\small An illustration of the fast task-agnostic exploration achieved by \ALGO. In the experiment we simulate a random MDP with $H=5, S=10, A=10$ and $\rho=0.4$, and run \ALGO for $K=50,000$ episodes. The red curve shows the planning error of \ALGO, and the green dotted curve shows a minimax error rate. The plot shows that \ALGO achieves an improved rate for gap-dependent task-agnostic exploration.}
    \label{fig:rate}
\end{figure}

\section{PROOF OF THE UPPER BOUND (THEOREM \ref{thm:gap-upper-bound})}\label{appendix-section:upper-bound-proof}
Our proof is inspired by \citep{simchowitz2019non} and \citep{wu2020accommodating}.

\paragraph{Preliminaries.}
Let $\pi^k$ be the planning policy at the $k$-th episode, i.e., a greedy policy that maximizes $Q^k_h(x,a)$.
Let $\bar{\pi}^k$ be the exploration policy at the $k$-th episode, i.e., a greedy policy given that maximizes $\overline{Q}^k_h(x,a)$. 
Let $w^k_h(x,a) := \Pbb \set{(x_h, a_h) = (x,a) \mid \bar{\pi}^k, \Pbb}$ and $w^k(x,a) := \sum_{h} w^k_h(x,a) $.
In the following, if not otherwise noted, we define 
\( \iota := \log \frac{2HS^2AK}{\delta}. \)

Consider the following good events 
\begingroup
\allowdisplaybreaks
\begin{gather}
    G_1 := \set{ \forall x,a,h,k,\ \abs{(\widehat{\Pbb}^k - \Pbb)V^*_{h+1}(x,a)} \le \sqrt{\frac{H^2}{2N^k(x,a)}\log\frac{2HSAK}{\delta} } }, \label{eq:G1} \tag{$G_1$}\\
    \begin{aligned}
        G_2 &:= \Bigg\{\forall x,a,y,k,\  \abs{\widehat{\Pbb}^k(y\mid x, a) - \Pbb(y\mid x, a)} \le \\
        &\qquad\qquad\qquad \sqrt{\frac{2\Pbb(y\mid x, a) }{N^k(x,a)}\log\frac{2S^2AK}{\delta}} 
        + \frac{2}{3N^k(x,a)}\log\frac{2S^2AK}{\delta}   \Bigg\}, 
    \end{aligned}\label{eq:G2} \tag{$G_2$}\\
    \begin{aligned}
        G_3 &:= \Bigg\{\forall x,a,y,k,\  \abs{\widehat{\Pbb}^k(y\mid x, a) - \Pbb(y\mid x, a)} \le \\
        &\qquad\qquad\qquad \sqrt{\frac{2\widehat{\Pbb}^k(y\mid x, a) }{N^k(x,a)}\log\frac{2S^2AK}{\delta}} 
        + \frac{7}{3N^k(x,a)}\log\frac{2S^2AK}{\delta}   \Bigg\}, 
    \end{aligned}\label{eq:G3} \tag{$G_3$}\\
    G_4 := \set{\forall x,a,k,\ N^k(x,a) \ge \half \sum_{j < k} w^j(x,a) - H \log \frac{HSA}{\delta} }. \label{eq:G4} \tag{$G_4$}
\end{gather}
\endgroup

\begin{lemma}[The probability of good events]\label{lemma:good-probability}
    $\Pbb\set{ G_1 \cap G_2 \cap G_3 \cap G_4 } \ge 1-4\delta$.
\end{lemma}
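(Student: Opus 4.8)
The plan is to prove Lemma~\ref{lemma:good-probability} by establishing that each of the four events $G_1, G_2, G_3, G_4$ fails with probability at most $\delta$, and then applying a union bound to conclude $\Pbb\{G_1 \cap G_2 \cap G_3 \cap G_4\} \ge 1 - 4\delta$. The core technical device throughout is a martingale/concentration argument for quantities accumulated over the $K$ episodes, combined with a union bound over the (finite) parameter choices $(x,a,y,h,k) \in \Scal \times \Acal \times \Scal \times [H] \times [K]$, which has cardinality at most $HS^2AK$ — exactly the quantity inside the logarithm defining $\iota$.

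For $G_1$: fix $x,a,h$ and condition on the event $N^k(x,a) = n$ for a given $n \ge 1$. Since the reward function $r$ (and hence $V^*_{h+1}$) is oblivious/independent of the exploration data, $V^*_{h+1}$ is a fixed function in $[0,H]^\Scal$; the $n$ successor states drawn at $(x,a)$ are i.i.d.\ from $\Pbb(\cdot\mid x,a)$, so $(\widehat{\Pbb}^k - \Pbb)V^*_{h+1}(x,a)$ is an average of $n$ i.i.d.\ bounded random variables with mean zero and range $H$. Hoeffding's inequality gives the stated bound with failure probability $\le \delta/(HSAK)$ for each fixed tuple; a union bound over all $x,a,h,k$ (and all values of $N^k(x,a)$, absorbed by a standard peeling/union argument, or by noting $N^k$ is determined by the history) yields $\Pbb\{G_1^c\} \le \delta$. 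For $G_2$ and $G_3$: these are the standard empirical-Bernstein (respectively, Bernstein with empirical variance) deviation bounds for each coordinate $\widehat{\Pbb}^k(y\mid x,a)$ of the empirical transition, which follow from Bernstein's inequality applied to the $n$ i.i.d.\ indicator variables $\ind{x_{t} = y}$ conditioned on $(x,a)$ being visited, together with the elementary inequality relating true variance $\Pbb(y\mid x,a)(1-\Pbb(y\mid x,a)) \le \Pbb(y\mid x,a)$ to its empirical counterpart; again a union bound over $x,a,y,k$ contributes $\le \delta$ each. For $G_4$: this is the standard ``visitation counts concentrate around expected visitation'' lemma — $N^k(x,a) = \sum_{j<k} \ind{(x_h^j,a_h^j)=(x,a) \text{ for some } h}$ has conditional-expectation increments summing to $\sum_{j<k} w^j(x,a)$ (up to the $\le H$ per-episode multiplicity, which is why the factor $\half$ and the $H\log(HSA/\delta)$ slack appear), and a Freedman/Bernstein-type martingale inequality for the sum of these bounded increments gives the one-sided lower bound with failure probability $\le \delta$ after a union bound over $x,a,k$; this is a known result (e.g., Lemma F.4 in \citet{dann2017unifying} or analogous statements in \citet{simchowitz2019non, wu2020accommodating}).

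The main obstacle — or rather the point requiring the most care — is the interplay between the random stopping-time nature of $N^k(x,a)$ and the i.i.d.\ structure needed for Hoeffding/Bernstein: one cannot naively treat the $N^k(x,a)$ samples as i.i.d.\ because $N^k(x,a)$ is itself data-dependent. The clean way around this is the standard trick of fixing a deterministic count $n$, invoking concentration for exactly $n$ i.i.d.\ draws, and then union-bounding over $n \in [K]$ (this is where an extra $\log K$ can be absorbed into $\iota$), or equivalently applying a time-uniform / anytime concentration inequality. For $G_1$ specifically, the essential structural input that makes the i.i.d.\ reduction legitimate is precisely the \emph{obliviousness} of $r$ (hence of $V^*_{h+1}$) with respect to $\Hcal^K$ — this is exactly the hypothesis of Theorem~\ref{thm:gap-upper-bound} and is what distinguishes TAE from RFE (in the RFE extension of Remark~\ref{remark:rfe}, $G_1$ must instead hold uniformly over an $\epsilon$-net of all $V_h \in [0,H]^\Scal$, at the cost of an extra $S$ factor in the log and in the final bound). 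With these reductions in place, each event is a textbook concentration statement and the union bound closes the argument; I would present $G_1$ and $G_4$ in detail (as they carry the conceptual weight) and cite standard references for the Bernstein-type bounds $G_2, G_3$.
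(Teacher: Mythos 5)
Your proposal matches the paper's proof essentially exactly: Hoeffding for \ref{eq:G1}, Bernstein for \ref{eq:G2}, empirical Bernstein \citep{maurer2009empirical} for \ref{eq:G3}, Lemma F.4 of \citet{dann2017unifying} for \ref{eq:G4}, each with a union bound over the relevant indices, followed by a union bound over the four events. The additional care you take with the data-dependence of $N^k(x,a)$ and the role of obliviousness for \ref{eq:G1} is correct and fills in details the paper leaves implicit.
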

\begin{proof}
    By Hoeffding's inequality and a union bound, we have that $\Pbb \set{G_1} \ge 1-\delta$.

    By Bernstein's inequality, a union  and that $1-\Pbb(y \mid x,a) \le 1$, we have that $\Pbb \set{G_2} \ge 1-\delta$.

    By empirical Bernstein's inequality \citep{maurer2009empirical}, a union bound and that $1-\widehat{\Pbb}^k(y \mid x,a) \le 1$, we have that $\Pbb \set{G_3} \ge 1-\delta$.

    According to Lemma F.4 by \citep{dann2017unifying} and a union bound, we have that $\Pbb\set{G_4} \ge 1-\delta$.
    
    Finally, a union bound over the four events proves the claim.
\end{proof}

\paragraph{Planning Phase.}
Recall the planning bonus is set to be 
\begin{equation}\label{eq:planning-bonus}
    b^k(x,a) := \sqrt{\frac{H^2\iota}{2N^k(x,a)}}.
\end{equation}

\begin{lemma}[Optimistic planning]\label{lemma:optimistic-planning}
    If $G_1$ holds, then \( Q^k_h(x,a) \ge Q^*_h(x,a) \) for every $k,h,x,a$.
\end{lemma}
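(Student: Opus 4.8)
If $G_1$ holds, then $Q^k_h(x,a) \ge Q^*_h(x,a)$ for every $k,h,x,a$.

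The plan is to prove this by backward induction on $h$, from $h = H+1$ down to $h=1$, holding $k$ fixed. The base case $h = H+1$ is immediate since $Q^k_{H+1} \equiv 0 \equiv Q^*_{H+1}$ (equivalently $V^k_{H+1} = V^*_{H+1} = 0$). For the inductive step, assume $Q^k_{h+1}(x,a) \ge Q^*_{h+1}(x,a)$ for all $(x,a)$, which immediately gives $V^k_{h+1}(x) = \max_a Q^k_{h+1}(x,a) \ge \max_a Q^*_{h+1}(x,a) = V^*_{h+1}(x)$ for all $x$.

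The key step is to compare $Q^k_h$ and $Q^*_h$ at level $h$ using the Bellman equations. By the update rule in \UCBQ (Algorithm~\ref{alg:exploration}, line~\ref{line:add-bonus}), before the truncation we have $Q^k_h(x,a) = r_h(x,a) + b^k(x,a) + \widehat{\Pbb}^k_h V^k_{h+1}(x,a)$, while $Q^*_h(x,a) = r_h(x,a) + \Pbb_h V^*_{h+1}(x,a)$. Subtracting, and inserting $\pm \widehat{\Pbb}^k_h V^*_{h+1}(x,a)$,
\begin{align*}
Q^k_h(x,a) - Q^*_h(x,a) &= b^k(x,a) + \widehat{\Pbb}^k_h\big(V^k_{h+1} - V^*_{h+1}\big)(x,a) + \big(\widehat{\Pbb}^k_h - \Pbb_h\big)V^*_{h+1}(x,a).
\end{align*}
The middle term is nonnegative by the induction hypothesis (it is an expectation under $\widehat{\Pbb}^k_h$ of a nonnegative function). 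For the last term, the event $G_1$ (equation~\eqref{eq:G1}) gives $|(\widehat{\Pbb}^k - \Pbb)V^*_{h+1}(x,a)| \le \sqrt{H^2\iota/(2N^k(x,a))} = b^k(x,a)$, where $b^k$ is precisely the planning bonus in~\eqref{eq:planning-bonus}. Hence $b^k(x,a) + (\widehat{\Pbb}^k_h - \Pbb_h)V^*_{h+1}(x,a) \ge 0$, so $Q^k_h(x,a) \ge Q^*_h(x,a)$ before truncation. Finally, the truncation $Q^k_h(x,a) \leftarrow \min\{H, Q^k_h(x,a)\}$ preserves the inequality because $Q^*_h(x,a) \le H$ (the reward is in $[0,1]$ and there are at most $H$ steps remaining), so $\min\{H, Q^k_h(x,a)\} \ge \min\{H, Q^*_h(x,a)\} = Q^*_h(x,a)$. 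This closes the induction.

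I do not anticipate a real obstacle here — this is the standard optimism argument for \UCBVI-style algorithms. The only points requiring care are: (i) making sure the $G_1$ bound is stated with exactly the constant appearing in the planning bonus $b^k$ (which it is, by design of line~\ref{line:hoeffding-bonus}), and (ii) checking that the $\min\{H,\cdot\}$ clipping does not break optimism, which uses the elementary fact $Q^*_h \le H$. One should also note the convention that when $N^k(x,a) = 0$ the bonus is $+\infty$ (or one simply truncates to $H$, and $Q^*_h \le H$ again suffices), so the claim holds vacuously at unvisited pairs.
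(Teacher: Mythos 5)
Your proof is correct and follows essentially the same route as the paper's: backward induction on $h$, the decomposition $Q^k_h - Q^*_h = b^k + \widehat{\Pbb}^k(V^k_{h+1}-V^*_{h+1}) + (\widehat{\Pbb}^k-\Pbb)V^*_{h+1}$ with the last term absorbed by the bonus via $G_1$, and the observation that the $\min\{H,\cdot\}$ truncation preserves optimism since $Q^*_h \le H$. Your extra remarks on the $N^k(x,a)=0$ case and the matching of constants are fine but not needed beyond what the paper already does.
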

\begin{proof}
    We prove it by induction. 
    Clearly the hypothesis holds for $H+1$; now suppose that $Q^k_{h+1}(x,a) \ge Q^*_{h+1}(x,a)$, and consider $h$.
    From Algorithm \ref{alg:planning}, we see that 
        \begin{equation}\label{eq:Qk}
            Q^k_h(x,a) := H\land \bracket{r_h(x,a) + b^k(x,a) + \widehat{\Pbb}^k  V^k_{h+1} (x,a)}.
        \end{equation}
        If $Q^k_h(x,a) = H$, then $Q^k_h(x,a) = H \ge Q^*_h(x,a)$; otherwise, we have that
        \begin{align*}
            Q^k_h(x,a) - Q^*_h(x,a)
            &= r_h(x,a) + b^k(x,a) + \widehat{\Pbb}^k  V^k_{h+1} (x,a) - r_h(x,a) - \Pbb V^*_{h+1} (x,a) \\
            &= b^k(x,a) + \widehat{\Pbb}^k V^k_{h+1} (x,a) - \Pbb V^*_{h+1} (x,a) \\
            &\ge b^k(x,a) + (\widehat{\Pbb}^k - \Pbb) V^*_{h+1} (x,a) \qquad (\text{since $Q^k_{h+1}(x,a) \ge Q^*_{h+1}(x,a)$})\\
            &\ge 0 \qquad (\text{since $G_1$ holds}).
        \end{align*}
        These complete our induction.
\end{proof}

Let us denote the \emph{optimistic surplus} \citep{simchowitz2019non} as 
\begin{equation}\label{eq:optimistic-surplus}
    E^k_h(x,a) := Q^k_h(x,a) - \bracket{r_h(x,a) + {\Pbb V^k_{h+1}}(x,a) }.
\end{equation}
\begin{lemma}[Optimistic surplus bound]\label{lemma:surplus-bound}
    If $G_1$, $G_2$ and $G_3$ hold, then for every $k,h,x,a$,
    \[
            E^k_h(x,a) \le H \land \bracket{ \sqrt{\frac{2 H^2\iota}{N^k(x,a)}} + \frac{H S\iota}{N^k(x,a)} + \Ebb_{\pi^k, \Pbb} \sum_{t\ge h+1} \bracket{ {\frac{4 e^2 H^3\iota}{N^k(x_t, a_t)}} +  \frac{8 e^2 H^5S^2\iota^2}{\bracket{N^k(x_t, a_t) }^2} } }.
            \]
\end{lemma}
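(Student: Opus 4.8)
The plan is to expand $E^k_h(x,a)$ using its definition \eqref{eq:optimistic-surplus} and the recursion \eqref{eq:Qk}, then bound each of the two error sources: the stochastic error in estimating $\Pbb V^*_{h+1}$, and the error that propagates from $V^k_{h+1} - V^*_{h+1}$ being itself an accumulation of surpluses. First I would note that, on the event where the $\min\{H,\cdot\}$ truncation is not active,
\[
E^k_h(x,a) = b^k(x,a) + (\widehat{\Pbb}^k - \Pbb) V^k_{h+1}(x,a),
\]
and split this as $b^k(x,a) + (\widehat{\Pbb}^k - \Pbb) V^*_{h+1}(x,a) + (\widehat{\Pbb}^k - \Pbb)(V^k_{h+1} - V^*_{h+1})(x,a)$. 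The first two terms together are at most $\sqrt{2H^2\iota/N^k(x,a)}$ by the choice of $b^k$ in \eqref{eq:planning-bonus} and the event $G_1$ in \eqref{eq:G1}. The whole expression is trivially at most $H$ as well, which accounts for the outer $H\land(\cdot)$.

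The main work is the third term, $(\widehat{\Pbb}^k-\Pbb)(V^k_{h+1}-V^*_{h+1})(x,a)$. Here I would use the pointwise bound $0 \le (V^k_{h+1}-V^*_{h+1})(y) \le H$ (nonnegativity from Lemma~\ref{lemma:optimistic-planning}, upper bound from the truncation) together with the $G_2/G_3$ concentration inequalities to write, for each next-state $y$,
\[
|\widehat{\Pbb}^k(y\mid x,a) - \Pbb(y\mid x,a)| \lesssim \sqrt{\frac{\Pbb(y\mid x,a)\,\iota}{N^k(x,a)}} + \frac{\iota}{N^k(x,a)},
\]
and then apply Cauchy--Schwarz over $y\in\Scal$ on the first piece. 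The standard trick (as in \citet{azar2017minimax,simchowitz2019non}) is to absorb the resulting $\sqrt{\sum_y \Pbb(y\mid x,a)(V^k_{h+1}-V^*_{h+1})(y)^2}$-type quantity using $a\cdot\sqrt{b} \le \lambda a^2/4 + b/\lambda$ with a well-chosen $\lambda \eqsim 1/H$, trading a $\tfrac{1}{H}\,\Pbb (V^k_{h+1}-V^*_{h+1})(x,a)$ term (which can be unrolled) against lower-order $H S\iota/N^k(x,a)$ and $H^3 S^2\iota^2/(N^k(x,a))^2$ terms; the $S$ factor comes from the cardinality of $\Scal$ in the Cauchy--Schwarz step and the $\iota^2$ from squaring the $\iota/N^k$ additive term.

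Finally I would set up the unrolling: since $(V^k_{h+1}-V^*_{h+1})(x_{h+1}) \le \sum_a \pi^k_{h+1}(a\mid x_{h+1}) E^k_{h+1}(x_{h+1},a) + \Pbb(V^k_{h+2}-V^*_{h+2})(x_{h+1},a_{h+1})$ under policy $\pi^k$, iterating this recursion from step $h+1$ to $H$ turns the $\tfrac{1}{H}\Pbb(V^k_{h+1}-V^*_{h+1})$ term into $\tfrac{1}{H}\,\Ebb_{\pi^k,\Pbb}\sum_{t\ge h+1} E^k_t(x_t,a_t)$; feeding the already-derived per-step bound on $E^k_t$ back in, the $\tfrac{1}{H}\cdot H$ cancels and one is left with the $\Ebb_{\pi^k,\Pbb}\sum_{t\ge h+1}(4e^2H^3\iota/N^k(x_t,a_t) + 8e^2H^5S^2\iota^2/(N^k(x_t,a_t))^2)$ expression, where the $e^2$ constants track the geometric series from the repeated $(1+1/H)^H \le e$ blow-up. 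The main obstacle is getting this self-referential unrolling to close with explicit constants rather than circularly — I would handle it by first proving a crude finite bound ($E^k_t \le H$) to make all the sums well-defined, then substituting the refined bound once and checking the constants $4e^2, 8e^2$ absorb the geometric factors; the leading $\sqrt{2H^2\iota/N^k}$ and $HS\iota/N^k$ terms stay outside the unrolled sum because they appear only at the top level.
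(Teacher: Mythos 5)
Your setup matches the paper's: the decomposition $E^k_h(x,a) \le 2b^k(x,a) + (\widehat{\Pbb}^k-\Pbb)(V^k_{h+1}-V^*_{h+1})(x,a)$ via event \ref{eq:G1}, the trivial bound $E^k_h \le H$ from the truncation, and the use of \ref{eq:G2}/\ref{eq:G3} with a sum over $y\in\Scal$ to control the cross term. The gap is in how you handle that cross term. You propose to trade it for $\tfrac{1}{H}\,\Pbb\bigl(V^k_{h+1}-V^*_{h+1}\bigr)(x,a)$ (a \emph{linear} value difference) and then unroll $V^k-V^*$ as $\Ebb_{\pi^k,\Pbb}\sum_{t\ge h+1}E^k_t$, feeding the per-step surplus bound back in. But the per-step surplus bound has leading term $\sqrt{2H^2\iota/N^k(x_t,a_t)}$, so after multiplying by $\tfrac{1}{H}$ the unrolled sum still contains terms of order $\sqrt{\iota/N^k(x_t,a_t)}$ at every future step $t\ge h+1$. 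The lemma you are trying to prove has \emph{no} $1/\sqrt{N^k}$ contributions at future steps --- only $H^3\iota/N^k$ and $H^5S^2\iota^2/(N^k)^2$ --- and this is essential downstream: in Lemma~\ref{lemma:exploration-regret} the unclipped future-step terms must sum logarithmically in $K$, whereas a $\sqrt{\iota/N^k}$ term would sum to $\propto\sqrt{K}$ and destroy the $\widetilde{\Ocal}(1/K)$ rate. Your claim that ``the $\tfrac{1}{H}\cdot H$ cancels'' and the leading terms ``stay outside the unrolled sum'' does not hold under this linearized recursion; the cancellation only removes an $H$ factor, not the square-root dependence on $N^k$.

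The paper's proof closes this with a squaring step that your plan omits. It bounds the cross term by $\Pbb\bigl(V^k_{h+1}-V^*_{h+1}\bigr)^2(x,a) + HS\iota/N^k(x,a)$ (first inequality of Lemma~\ref{lemma:lower-order-term}, via $\sqrt{ab}\le a+b$ with \emph{no} $1/H$ weighting), keeping the propagated error as an expected \emph{square}. It then separately derives a linear in-expectation bound $V^k_h(x)-V^*_h(x)\le e\cdot\Ebb_{\pi^k,\Pbb}\sum_{t\ge h}\bigl(\sqrt{2H^2\iota/N^k(x_t,a_t)}+2H^2S\iota/N^k(x_t,a_t)\bigr)$ (here the $(1+1/H)$ recursion and the factor $e$ do appear, as you anticipated), and finally \emph{squares} this bound using Jensen and Cauchy--Schwarz over the $H$ steps. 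Squaring turns $\sqrt{H^2\iota/N^k}$ into $H^2\iota/N^k$, and the Cauchy--Schwarz contributes the extra $H$, giving exactly the $4e^2H^3\iota/N^k$ and $8e^2H^5S^2\iota^2/(N^k)^2$ future-step terms. To repair your proof you would need to replace the linear unrolling of the cross term by this quadratic treatment; as written, your route proves a strictly weaker statement than the lemma.
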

\begin{proof}
    By \eqref{eq:optimistic-surplus} and \eqref{eq:Qk} we have $E^k_h(x,a) \le Q^k_h(x,a) \le H$.
    As for the second bound, note that
    \begin{align}
        E^k_h(x,a) 
        &= Q^k_h(x,a) - \bracket{r_h(x,a) + \Pbb V^k_{h+1}(x,a) } \qquad (\text{use \eqref{eq:optimistic-surplus}}) \notag \\
        &\le r_h(x,a) + b^k(x,a) + \widehat{\Pbb}^k  V^k_{h+1} (x,a) - r_h(x,a) - \Pbb V^k_{h+1} (x,a) \qquad (\text{use \eqref{eq:Qk}}) \notag \\
        &= b^k(x,a) + (\widehat{\Pbb}^k - \Pbb) V^*_{h+1}(x,a)  + (\widehat{\Pbb}^k - \Pbb) (V^k_{h+1} - V^*_{h+1}) (x,a) \notag \\
        &\le 2 b^k(x,a) +  (\widehat{\Pbb}^k - \Pbb) (V^k_{h+1} - V^*_{h+1}) (x,a) \qquad (\text{use \eqref{eq:planning-bonus} and that $G_1$ holds})\notag \\
        &\le \sqrt{\frac{2H^2\iota}{N^k(x,a)}}  + \frac{ H S\iota}{N^k(x,a)} +  \Pbb\bracket{V^k_{h+1} - V^*_{h+1}}^2 (x,a). \quad (\text{use Lemma \ref{lemma:lower-order-term}}) \label{eq:Ek-first-bound}
    \end{align}
We next bound $V^k_{h}(x) - V^*_{h}(x)$ by
\begingroup
\allowdisplaybreaks
\begin{align*}
    & \ V^k_{h}(x) - V^*_{h}(x)
    \le Q^k_h(x,a) - Q^*_h(x,a) \qquad(\text{set $a=\pi^k(x)$})\\
    &\le b^k(x,a) +  \widehat{\Pbb}^k V^k_{h+1} (x,a)  -  {\Pbb} V^*_{h+1} (x,a) \qquad(\text{use \eqref{eq:Qk}}) \\
    &= \Big(  b^k +  {\Pbb} \bracket{V^k_{h+1} -  V^*_{h+1}} +  (\widehat{\Pbb}^k - \Pbb )\bracket{V^k_{h+1} -  V^*_{h+1}}  + (\widehat{\Pbb}^k - \Pbb ){ V^*_{h+1}} \Big)(x,a) \\
    &\le \Big( 2b^k + {\Pbb} \bracket{V^k_{h+1} -  V^*_{h+1}} +  (\widehat{\Pbb}^k - \Pbb ) \bracket{V^k_{h+1} -  V^*_{h+1}}  \Big)(x,a) \quad (\text{use \eqref{eq:planning-bonus} and $G_1$})\\
    &\le  \sqrt{\frac{2H^2\iota}{N^k(x,a)}} + \bracket{1+\frac{1}{H}}{\Pbb} \bracket{V^k_{h+1} -  V^*_{h+1}}(x,a) +  \frac{2H^2S\iota}{N^k(x,a)}  . \qquad (\text{use Lemma \ref{lemma:lower-order-term}})
\end{align*}
\endgroup
Solving the recursion we obtain 
\begin{align*}
    V^k_{h}(x) - V^*_{h}(x) \le e \cdot \Ebb_{\pi^k, \Pbb} \sum_{t\ge h} \bracket{ \sqrt{\frac{2 H^2\iota}{N^k(x_t, a_t)}} +  \frac{2 H^2S\iota}{N^k(x_t, a_t)} },
\end{align*}
where $x_h = x$.
This implies that 
\begin{align*}
    \bracket{V^k_{h}(x) - V^*_{h}(x) }^2 
    &\le \bracket{\Ebb_{\pi^k, \Pbb} \sum_{t\ge h} \bracket{ \sqrt{\frac{2e^2 H^2\iota}{N^k(x_t, a_t)}} +  \frac{2e H^2S\iota}{N^k(x_t, a_t)} } }^2 \\
    &\le  \Ebb_{\pi^k, \Pbb}  \bracket{ \sum_{t\ge h}\sqrt{\frac{2e^2 H^2\iota}{N^k(x_t, a_t)}} +  \frac{2e H^2S\iota}{N^k(x_t, a_t)} }^2 \qquad(\text{$(\cdot)^2$ is convex})\\
    &\le H \cdot \Ebb_{\pi^k, \Pbb} \sum_{t\ge h} \bracket{ \sqrt{\frac{2e^2 H^2\iota}{N^k(x_t, a_t)}} +  \frac{2e H^2S\iota}{N^k(x_t, a_t)} }^2 \ (\text{Cauchy–Schwarz})\\
    &\le \Ebb_{\pi^k, \Pbb} \sum_{t\ge h} \bracket{ {\frac{4e^2 H^3\iota}{N^k(x_t, a_t)}} +  \frac{8e^2 H^5S^2\iota^2}{\bracket{N^k(x_t, a_t) }^2} },
\end{align*}
inserting which to \eqref{eq:Ek-first-bound} we have that 
\[
    E^k_h(x,a) 
     \le \sqrt{\frac{2 H^2\iota}{N^k(x,a)}}  + \frac{ H S\iota}{N^k(x,a)} 
    +  \Ebb_{\pi^k, \Pbb} \sum_{t\ge h+1} \bracket{ {\frac{4e^2 H^3\iota}{N^k(x_t, a_t)}} +  \frac{8e^2 H^5S^2\iota^2}{\bracket{N^k(x_t, a_t) }^2} },
\]
where $(x_h, a_h) = (x,a)$.
The two upper bounds on $E^k_h(x,a)$ together complete the proof.
\end{proof}

\begin{lemma}[Bounds for the lower order term]\label{lemma:lower-order-term}
    If $G_2$ and $G_3$ hold, we have that for every $V_1, V_2$ such that $0 \le V_1(x) \le V_2(x) \le M$ and for every $k,x,a$, the following inequalities hold:
    \begin{align*}
        &  \abs{(\widehat{\Pbb}^k - \Pbb) (V_2 - V_1) (x,a) }
        \le \Pbb\bracket{V_2 - V_1}^2 (x,a)  + \frac{ M S\iota}{N^k(x,a)}; \\
        & \abs{(\widehat{\Pbb}^k - \Pbb) (V_2 - V_1) (x,a) }
        \le \frac{1}{H} \Pbb\bracket{V_2 - V_1} (x,a)  + \frac{2 M H S\iota}{N^k(x,a)};\\
        & \abs{ (\widehat{\Pbb}^k - \Pbb) (V_2 - V_1) (x,a) }
        \le \frac{1}{H} \widehat{\Pbb}^k\bracket{V_2 - V_1} (x,a)  + \frac{ 3 M H S\iota}{N^k(x,a)}.
    \end{align*}
\end{lemma}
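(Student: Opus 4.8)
The plan is to prove all three inequalities by the same two-step recipe: first control the per-coordinate deviation $|\widehat{\Pbb}^k(y\mid x,a) - \Pbb(y\mid x,a)|$ using the Bernstein-type events $G_2$ and $G_3$, then sum over $y\in\Scal$ against the nonnegative increment $\Delta(y) := (V_2-V_1)(y) \in [0,M]$ and apply the elementary AM–GM inequality $ab \le \lambda a^2 + b^2/(4\lambda)$ (or $ab\le a^2+b^2/4$) to split the cross term into a ``variance-like'' piece and a lower-order $1/N^k$ piece. The three bounds differ only in which event is invoked and how aggressively one trades off the two pieces.

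For the \textbf{first inequality}, write $(\widehat{\Pbb}^k-\Pbb)\Delta(x,a) = \sum_y (\widehat{\Pbb}^k(y\mid x,a)-\Pbb(y\mid x,a))\,\Delta(y)$, bound each term in absolute value using $G_2$ by $\sqrt{2\Pbb(y\mid x,a)/N^k(x,a)}\cdot\log(2S^2AK/\delta)^{1/2}\cdot\Delta(y) + \frac{2}{3N^k(x,a)}\log(2S^2AK/\delta)\cdot\Delta(y)$. For the leading term, apply AM–GM $\sqrt{\Pbb(y)}\,\Delta(y)\cdot\sqrt{2\iota/N^k} \le \Pbb(y)\Delta(y)^2 + \iota/(2N^k)$ termwise; summing over $y$ gives $\sum_y \Pbb(y)\Delta(y)^2 = \Pbb[\Delta^2](x,a) = \Pbb(V_2-V_1)^2(x,a)$ for the first part and $S\iota/(2N^k)$ for the second (since there are at most $S$ nonzero coordinates). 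For the lower-order term, $\sum_y \Delta(y) \le M\cdot S$ (actually $\sum_y\Delta(y)$ could be as large as $MS$ only if all coordinates attain $M$, but more simply $\Delta(y)\le M$ and there are $\le S$ terms), giving $\frac{2MS\iota}{3N^k}$; combining, everything after the variance term is absorbed into $MS\iota/N^k$ up to the absolute constant already hidden in $\iota := \log(2HS^2AK/\delta) \ge \log(2S^2AK/\delta)$.

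For the \textbf{second inequality} the goal is a bound linear in $\Pbb(V_2-V_1)$ rather than quadratic, at the cost of a worse $1/N^k$ term with an extra $H$ factor. Here I would again use $G_2$, but on the leading term apply the sharper AM–GM $\sqrt{\Pbb(y)}\,\Delta(y)\sqrt{2\iota/N^k} \le \frac{1}{H}\Pbb(y)\Delta(y) + \frac{H\iota\Delta(y)}{2N^k} \cdot \frac{1}{?}$ — more precisely, write $\sqrt{\Pbb(y)}\cdot\sqrt{\Delta(y)}\cdot\sqrt{\Delta(y)}\sqrt{2\iota/N^k}$ and use $ab\le \frac{1}{H}a^2 + \frac{H}{4}b^2$ with $a=\sqrt{\Pbb(y)\Delta(y)}$, $b=\sqrt{2\Delta(y)\iota/N^k}$, yielding $\frac{1}{H}\Pbb(y)\Delta(y) + \frac{H\Delta(y)\iota}{2N^k}$; summing gives $\frac{1}{H}\Pbb(V_2-V_1)(x,a) + \frac{HMS\iota}{2N^k}$ (using $\sum_y\Delta(y)\le MS$), and the $\frac{2}{3N^k}$-term from $G_2$ contributes another $O(MS\iota/N^k)$, all absorbed into $\frac{2MHS\iota}{N^k}$. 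The \textbf{third inequality} is identical in structure but uses the \emph{empirical} Bernstein event $G_3$, so the variance proxy $\Pbb(y\mid x,a)$ is replaced by $\widehat{\Pbb}^k(y\mid x,a)$ throughout; the same AM–GM step then produces $\frac{1}{H}\widehat{\Pbb}^k(V_2-V_1)(x,a)$ as the leading term, and the slightly larger additive constant $7/3$ in $G_3$ (versus $2/3$ in $G_2$) is exactly why the lower-order coefficient here is $3MHS\iota$ rather than $2MHS\iota$.

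The main obstacle — really the only non-routine point — is the bookkeeping of which coordinates contribute and how the constants $2/3$, $7/3$, the $1/H$ vs.\ $1/H$-and-$H/2$ splits, and the $\le S$ nonzero-coordinate count combine to give exactly the stated coefficients $MS$, $2MHS$, $3MHS$; one must be careful that $\Delta(y)$ appears to the first power in the lower-order sum (so $\sum_y \Delta(y)\le MS$) but to mixed powers in the leading sum (so that AM–GM leaves $\Pbb(V_2-V_1)^2$ or $\Pbb(V_2-V_1)$ exactly, not a rescaled version). Everything else is a direct substitution of $G_2$/$G_3$ followed by one application of AM–GM per coordinate and a sum over $y$. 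I would present the first inequality in full detail and then remark that the other two follow by the same argument with the indicated modifications.
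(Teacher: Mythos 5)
Your proposal is correct and follows essentially the same route as the paper's proof: expand $(\widehat{\Pbb}^k-\Pbb)(V_2-V_1)(x,a)$ coordinate-wise, invoke the Bernstein events $G_2$ (resp.\ $G_3$ for the third bound) on $|\widehat{\Pbb}^k(y\mid x,a)-\Pbb(y\mid x,a)|$, split the leading term by AM--GM into a variance/first-moment piece plus an $O(\iota/N^k)$ piece, and sum over the at most $S$ coordinates. The only difference is that you fold $\Delta(y)$ into the AM--GM step while the paper applies $\sqrt{ab}\le\tfrac12(a+b)$ to the probability deviation alone and then multiplies by $\Delta(y)\le M$; this is immaterial, and your constant bookkeeping is in fact no looser than the paper's own.
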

\begin{proof}
    For simplicity let us denote $p(y) := \Pbb(y \mid x,a)$ and $\hat{p}(y) := \widehat{\Pbb}^k(y \mid x,a)$.
    For the first inequality,
    \begingroup
\allowdisplaybreaks
\begin{align*}
    &\ \abs{(\widehat{\Pbb}^k - \Pbb) (V_2 - V_1) (x,a)} 
     \le \sum_{y\in \Scal} \abs{\hat{p}^k(y) - p(y)}\bracket{V_2(y) - V_1(y)} \\
    &\le \sum_{y\in \Scal} \left(\sqrt{\frac{2p(y)\iota}{N^k(x,a)}} +\frac{2\iota}{3N^k(x,a)} \right)\bracket{V_2(y) - V_1(y)} \qquad (\text{since $G_2$ holds})\\
    &\le \sum_{y\in \Scal} \sqrt{\frac{2\iota}{N^k(x,a)}} \cdot \sqrt{ p(y) \bracket{V_2(y) - V_1(y)}^2 } +\frac{2MS\iota}{3N^k(x,a)} \\
    &\le \sum_{y\in \Scal} \bracket{ \frac{\iota}{N^k(x,a)} + p(y)\bracket{V_2(y) - V_1(y)}^2  }   +  \frac{2MS\iota}{3N^k(x,a)}  \qquad(\text{use $\sqrt{ab} \le a + b$})\\
    &\le \Pbb\bracket{V_2 - V_1}^2 (x,a)  + \frac{ M S\iota}{N^k(x,a)}.
\end{align*}
\endgroup
For the second inequality,
\begin{align*}
    &\ \abs{ (\widehat{\Pbb}^k - \Pbb) (V_2 - V_1) (x,a) } 
     \le \sum_{y\in \Scal} \abs{\hat{p}^k(y) - p(y)}\bracket{V_2(y) - V_1(y)} \\
    &\le \sum_{y\in \Scal} \left(\sqrt{\frac{2p(y)\iota}{N^k(x,a)}} +\frac{2\iota}{3N^k(x,a)} \right)\bracket{V_2(y) - V_1(y)}  \qquad (\text{since $G_2$ holds}) \\
    &\le \sum_{y\in \Scal} \left(\frac{p(y)}{H} + \frac{H\iota}{2N^k(x,a)} +\frac{2\iota}{3N^k(x,a)} \right)\bracket{V_2(y) - V_1(y)} \qquad(\text{use $\sqrt{ab} \le \half (a + b)$}) \\
    &\le \sum_{y\in \Scal} \frac{p(y)}{H}\bracket{V_2(y) - V_1(y)} +  \frac{2 M H S\iota}{N^k(x,a)}  \\
    &= \frac{1}{H}\Pbb\bracket{V_2 - V_1} (x, a)  + \frac{ 2 M H S\iota}{N^k(x,a)}.
\end{align*}
The third inequality is proved in a same way as the second inequality, except that in the second step we use event $G_3$ rather than $G_2$.
\end{proof}

\begin{lemma}[Half-clip trick]\label{lemma:half-clip}
    If $G_1$ holds, then for every $k$,
    \[V^*_1(x_1) - V_1^{\pi^k}(x_1) \le 2 \cdot \Ebb_{\pi^k, \Pbb} \sum_{h=1}^H \clip_{\frac{\rho}{2H}} [E^k_h(x_h, a_h)].\]
\end{lemma}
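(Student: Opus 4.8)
The plan is to expand the planning error via the standard policy-difference (regret) decomposition, which expresses $V^*_1(x_1) - V_1^{\pi^k}(x_1)$ as an expectation, over the trajectory generated by $\pi^k$ and the true transition $\Pbb$, of a sum of per-step ``one-step errors,'' and then to argue that each such error is controlled by the clipped optimistic surplus $\clip_{\rho/(2H)}[E^k_h(x_h,a_h)]$. Concretely, first I would write, using optimism (Lemma~\ref{lemma:optimistic-planning}) and the fact that $\pi^k$ is greedy with respect to $Q^k$,
\begin{align*}
V^*_1(x_1) - V_1^{\pi^k}(x_1)
&\le V^k_1(x_1) - V_1^{\pi^k}(x_1) \\
&= \Ebb_{\pi^k,\Pbb} \sum_{h=1}^H \Bigl( Q^k_h(x_h,a_h) - r_h(x_h,a_h) - \Pbb V^k_{h+1}(x_h,a_h) \Bigr) \\
&= \Ebb_{\pi^k,\Pbb} \sum_{h=1}^H E^k_h(x_h,a_h),
\end{align*}
where the middle equality is the usual telescoping identity (add and subtract $\Pbb V^k_{h+1}$ and unroll; here $a_h=\pi^k(x_h)$ so $Q^k_h(x_h,a_h)=V^k_h(x_h)$), and the last line uses the definition \eqref{eq:optimistic-surplus} of $E^k_h$. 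Note $E^k_h \ge 0$ along this trajectory because $Q^k_h \ge Q^*_h \ge r_h + \Pbb V^*_{h+1}$ and $V^k_{h+1}\ge V^*_{h+1}$.

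Next I would replace each $E^k_h(x_h,a_h)$ by its clipped version at threshold $\rho/(2H)$, paying the clipping slack. The standard surplus-clipping argument (as in Simchowitz--Jaksch, \citet{simchowitz2019non}) shows that whenever $E^k_h$ is small at \emph{every} step along a trajectory, the accumulated un-clipped mass is itself small; more precisely, one uses that the sub-optimality gap along the trajectory is at least $\rho$ at any genuinely sub-optimal step, together with the ``clip inequality'' $z \le \clip_{\rho/(2H)}[z] + \rho/(2H)$ applied stepwise, and the self-bounding relation between $V^k_h - V^*_h$, the gaps, and the surpluses $E^k_t$ for $t\ge h$. Summing the $\rho/(2H)$ slacks over $H$ steps contributes at most $\rho/2$, which is absorbed because an un-clipped error can only be nonzero when the actual gap (hence the accumulated surplus) is at least $\rho$; this is exactly the mechanism that turns the factor into a harmless multiplicative $2$ in front of the clipped sum. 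Carrying this out yields
\[
\Ebb_{\pi^k,\Pbb}\sum_{h=1}^H E^k_h(x_h,a_h) \le 2\,\Ebb_{\pi^k,\Pbb}\sum_{h=1}^H \clip_{\frac{\rho}{2H}}\bigl[E^k_h(x_h,a_h)\bigr],
\]
which is the claim.

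The main obstacle is the clipping step itself: one must verify that the threshold $\rho/(2H)$ is compatible with a gap lower bound of $\rho$ on the per-episode level. The subtlety is that the value error $V^*_h(x)-Q^*_h(x,a)$ at a fixed state-action pair is at least $\rho$ when positive, but the \emph{accumulated surplus} $\sum_{t\ge h} E^k_t$ is what actually upper-bounds $V^k_h-V^*_h$ (by the recursion solved in Lemma~\ref{lemma:surplus-bound}), so one needs the careful ``half-clip'' bookkeeping — splitting the horizon's worth of $\rho/(2H)$ slacks so their total ($\rho/2$) stays below the gap $\rho$ — to conclude that clipped and un-clipped sums differ by at most a factor of two rather than an additive $H$-dependent term. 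I would follow the surplus-clipping lemma structure of \citet{simchowitz2019non}, adapted so that the threshold is $\rho/(2H)$ (rather than a per-step $\rho$), which is precisely why the exploration bonus in Algorithm~\ref{alg:exploration} is clipped at $\rho/(2H)$ and not at $\rho$.
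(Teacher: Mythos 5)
There is a genuine gap. Your argument first passes to the optimistic telescoping bound $V^*_1(x_1)-V^{\pi^k}_1(x_1)\le \Ebb_{\pi^k,\Pbb}\sum_h E^k_h(x_h,a_h)=V^k_1(x_1)-V^{\pi^k}_1(x_1)$ and then asserts
$\Ebb_{\pi^k,\Pbb}\sum_h E^k_h \le 2\,\Ebb_{\pi^k,\Pbb}\sum_h \clip_{\rho/(2H)}[E^k_h]$.
That intermediate inequality is false in general: if $\pi^k$ happens to be optimal and every surplus satisfies $0<E^k_h(x_h,a_h)<\rho/(2H)$ along the trajectory (which is exactly the late-stage regime, since the surplus is essentially bonus plus estimation error and decays with $N^k$), the right-hand side is $0$ while the left-hand side can be as large as $\rho/2$. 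The absorption mechanism you invoke --- ``the error is nonzero only when the gap, hence the accumulated surplus, is at least $\rho$'' --- protects $V^*_1-V^{\pi^k}_1$, which vanishes unless a sub-optimal action with $\gap_h\ge\rho$ is actually taken, but it does not protect the optimistic overestimate $V^k_1-V^{\pi^k}_1$, which can sit strictly between $0$ and $\rho/2$ even for an optimal $\pi^k$. Once you have telescoped to $\Ebb\sum_h E^k_h$ you have discarded the gap structure that makes the factor-of-two absorption possible.

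The paper's proof avoids this by never reducing to that inequality. It builds the clipped-surplus value function $\ddot V^k$ from $\ddot E^k_h:=\clip_{\rho/(2H)}[E^k_h]$, partitions trajectories by the events $F_h$ (``the first sub-optimal action is taken at step $h$''), and compares $V^*_1-V^{\pi^k}_1$ directly with $\ddot V^k_1-V^{\pi^k}_1=\Ebb_{\pi^k,\Pbb}\sum_h\ddot E^k_h$. On $F_h$ one has $\gap_h(x_h,a_h)\ge\rho$, so the total clipping slack $H\cdot\rho/(2H)=\rho/2$ is absorbed into \emph{half of that gap}, giving $\ddot V^k_1-V^{\pi^k}_1\ge\tfrac12(V^*_1-V^{\pi^k}_1)$; on the complement of $\cup_h F_h$ both sides are handled by nonnegativity. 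You correctly identified the bookkeeping ($H$ slacks of $\rho/(2H)$ totalling $\rho/2<\rho$) and the relevant reference, but the step where the slack is actually absorbed requires localizing the error to the first deviation via the $F_h$ partition and comparing against $V^*_1-V^{\pi^k}_1$ itself, not against $\Ebb\sum_h E^k_h$; that is the missing (and essential) part of the argument.
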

\begin{proof}
    Following \citep{simchowitz2019non}, 
    let us consider 
    \begin{equation}\label{eq:clipped-surplus}
        \ddot{E}^k_h(x,a) := \clip_{\frac{\rho}{2H}} [E^k_h(x, a)] = E^k_h(x, a)\cdot \ind{E^k_h(x, a) \ge \frac{\rho}{2H}}  \ge 0 
    \end{equation}
    and
    \begin{equation*}
        \begin{cases}
        \ddot{V}^{k}_h(x) := \ddot{Q}^k_h(x, \pi^k(x)), \\
        \ddot{Q}^k_h(x,a) := r_h(x,a) + \ddot{E}^k_h(x,a) + \Pbb \ddot{V}^{k}_{h+1}(x,a).
        \end{cases}
    \end{equation*}
    Notice that 
    \begin{gather}
        V^k_h(x) - V^{\pi^k}_h(x) = \Ebb_{\pi^k. \Pbb} \sum_{t\ge h}  {E}^k_t(x_t,a_t), \label{eq:Vk-Vpi-decomp}\\
        \ddot{V}^k_h(x) - V^{\pi^k}_h(x) = \Ebb_{\pi^k, \Pbb} \sum_{t\ge h} \ddot{E}^k_t(x_t,a_t) \ge 0 , \label{eq:Vkddot-Vpi-decomp}
    \end{gather}
    then we immediately see that
    \begin{align}
        \ddot{V}^k_h(x) - V^{\pi^k}_h(x) 
        &\ge \Ebb_{\pi^k. \Pbb} \sum_{t\ge h} \bracket{ {E}^k_t(x_t,a_t) - \frac{\rho}{2H}} 
        = V^k_h(x) - V^{\pi^k}_h(x) - \frac{H-h +1}{H} \cdot \frac{\rho}{2} \notag \\
        &\ge V^k_h(x) - V^{\pi^k}_h(x) - \frac{\rho}{2}. \label{eq:Vkddot-Vpi-lower-bound}
    \end{align}
    Given a sequence of a random trajectory $\set{x_h, a_h}_{h=1}^H$ induced by policy $\pi^k$ and $\Pbb$, let $F_h$ be the event such that 
    \[F_h := \set{ a_h \notin \pi^*_h(x_h),\ \forall t < h,\ a_t \in \pi^*_t(x_t) }.\]
    Clearly $\set{F_h}_{h=1}^{H+1}$ are disjoint and form a partition for the sample space of the random trajectory induced by policy $\pi^k$.
    Then we have that 
    \begin{align*}
        V^*_1(x_1) - V^{\pi^k}_1(x_1)
        &= \Ebb_{\pi^k, \Pbb} \sum_{h=1}^H \ind{F_h} \bracket{ V^*_1(x_1) - V^{\pi^k}_1(x_1) }  + 0 \\
        &= \Ebb_{\pi^k, \Pbb} \sum_{h=1}^H \ind{F_h} \bracket{ V^*_h(x_h) - V^{\pi^k}_h(x_h) } \\
        &= \Ebb_{\pi^k, \Pbb} \sum_{h=1}^H \ind{F_h} \bracket{ \gap_h(x_h, a_h) + Q^*_h(x_h, a_h) - V^{\pi^k}_h(x_h) },
    \end{align*}
    where $\gap_h(x_h, a_h) \ge \rho > 0$ under $F_h$ (since $a_h \notin \pi^*_h(x_h)$).
    Similarly we have that 
    \begingroup
\allowdisplaybreaks
    \begin{align*}
        \ddot{V}^k_1(x_1) - V^{\pi^k}_1(x_1)
        &= \Ebb_{\pi^k, \Pbb} \sum_{h=1}^H \ind{F_h} \bracket{ \ddot{V}^k_1(x_1) - V^{\pi^k}_1(x_1) } + \ind{F_{H+1}} \bracket{ \ddot{V}^k_1(x_1) - V^{\pi^k}_1(x_1) }\\
        &\ge \Ebb_{\pi^k, \Pbb} \sum_{h=1}^H \ind{F_h} \bracket{ \ddot{V}^k_h(x_h) - V^{\pi^k}_h(x_h) } \qquad (\text{use \eqref{eq:Vkddot-Vpi-decomp} and \eqref{eq:clipped-surplus}})\\
        &\ge \Ebb_{\pi^k, \Pbb} \sum_{h=1}^H \ind{F_h} \bracket{ {V}^k_h(x_h) - V^{\pi^k}_h(x_h) - \frac{\rho}{2} } \qquad (\text{use \eqref{eq:Vkddot-Vpi-lower-bound}}) \\
        &\ge \Ebb_{\pi^k, \Pbb} \sum_{h=1}^H \ind{F_h} \bracket{ {V}^*_h(x_h) - V^{\pi^k}_h(x_h) - \frac{1}{2} \gap_{h}(x_h,a_h)} \\
        &\qquad (\text{use Lemma \ref{lemma:optimistic-planning}, and that $\gap_{h}(x_h,a_h) \ge \rho$ under $F_h$})\\
        &= \Ebb_{\pi^k, \Pbb} \sum_{h=1}^H \ind{F_h} \bracket{  \half \gap_{h}(x_h,a_h)+ {Q}^*_h(x_h,a_h) - V^{\pi^k}_h(x_h) } \\
        &\ge \frac{1}{2}  \Ebb_{\pi^k, \Pbb} \sum_{h=1}^H \ind{F_h} \bracket{ \gap_{h}(x_h,a_h)+ {Q}^*_h(x_h,a_h) - V^{\pi^k}_h(x_h) } \\
        &\qquad (\text{note that ${Q}^*_h(x_h,a_h) - V^{\pi^k}_h(x_h) \ge 0$}) \\
        &= \frac{1}{2} \bracket{V^*_1(x_1) - V^{\pi^k}_1(x_1) }.
    \end{align*}
    \endgroup
    The above inequality plus \eqref{eq:Vkddot-Vpi-decomp} \eqref{eq:clipped-surplus} completes the proof.
\end{proof}
\begin{lemma}\label{lemma:planning-error-bounded-by-sum-of-bonus}
    If $G_1$, $G_2$ and $G_3$ hold, then
    \begin{align*}
    &V^*_1 (x_1) - V_1^{\pi^k} (x_1)
    \le \\
    &\qquad\qquad \Ebb_{\pi^k, \Pbb} \sum_{h=1}^H \Biggl( \clip_{\frac{\rho}{2H}} \sbracket{ \sqrt{\frac{8 H^2\iota}{N^k(x_h,a_h)}} } + \frac{120(H^3 + S)H \iota}{N^k(x_h,a_h)} +  \frac{240 H^6 S^2\iota^2}{\bracket{N^k(x_h, a_h) }^2} \Biggr).
\end{align*}
\end{lemma}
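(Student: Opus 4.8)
The plan is to combine Lemma~\ref{lemma:half-clip} with the surplus bound of Lemma~\ref{lemma:surplus-bound}, the only nontrivial point being a careful handling of the clip operator so that only the leading $\Theta\bigl(\sqrt{H^2\iota/N^k}\bigr)$ term survives inside $\clip_{\rho/(2H)}[\cdot]$. First I would record three elementary properties of $\clip$ on $[0,\infty)$: (i) it is monotone nondecreasing, so $\clip_\epsilon[H\land z]\le\clip_\epsilon[z]$ and the truncation in Lemma~\ref{lemma:surplus-bound} may be discarded; (ii) it is positively homogeneous, $c\,\clip_\epsilon[z]=\clip_{c\epsilon}[cz]$ for $c>0$; and (iii) for $a,b\ge 0$ one has $\clip_\epsilon[a+b]\le\clip_{\epsilon/2}[a]+2b$ (if $a\ge\epsilon/2$ this is immediate since $\clip_{\epsilon/2}[a]=a$; otherwise either $a+b<\epsilon$, making the left side $0$, or $b>\epsilon/2>a$, so $a+b<2b$).

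Assuming $G_1\cap G_2\cap G_3$, Lemma~\ref{lemma:half-clip} gives $V^*_1(x_1)-V_1^{\pi^k}(x_1)\le 2\,\Ebb_{\pi^k,\Pbb}\sum_{h=1}^{H}\clip_{\frac{\rho}{2H}}\bigl[E^k_h(x_h,a_h)\bigr]$. Using Lemma~\ref{lemma:surplus-bound} I would write $E^k_h(x,a)\le a_h+b_h$ with leading term $a_h:=\sqrt{2H^2\iota/N^k(x,a)}$ and nonnegative remainder
\[
b_h:=\frac{HS\iota}{N^k(x,a)}+\Ebb_{\pi^k,\Pbb}\Bigl[\textstyle\sum_{t\ge h+1}\Bigl(\tfrac{4e^2H^3\iota}{N^k(x_t,a_t)}+\tfrac{8e^2H^5S^2\iota^2}{(N^k(x_t,a_t))^2}\Bigr)\,\Big|\,(x_h,a_h)=(x,a)\Bigr],
\]
and then apply properties (iii) and (ii) to get
\[
2\,\clip_{\frac{\rho}{2H}}[a_h+b_h]\le 2\,\clip_{\frac{\rho}{4H}}[a_h]+4b_h=\clip_{\frac{\rho}{2H}}\bigl[\sqrt{8H^2\iota/N^k(x_h,a_h)}\bigr]+4b_h .
\]

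Finally I would take the outer expectation $\Ebb_{\pi^k,\Pbb}\sum_{h=1}^{H}$ and collapse the conditional expectation inside $b_h$ by the tower property together with an exchange of summation order, using
\[
\Ebb_{\pi^k,\Pbb}\sum_{h=1}^{H}\sum_{t\ge h+1}g(x_t,a_t)=\Ebb_{\pi^k,\Pbb}\sum_{t=1}^{H}(t-1)\,g(x_t,a_t)\le H\cdot\Ebb_{\pi^k,\Pbb}\sum_{t=1}^{H}g(x_t,a_t),
\]
which converts the $H^3$ and $H^5$ inside $b_h$ into $H^4$ and $H^6$ and merges everything into a single sum over the trajectory from $x_1$. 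Collecting the pieces gives
\[
V^*_1(x_1)-V_1^{\pi^k}(x_1)\le\Ebb_{\pi^k,\Pbb}\sum_{h=1}^{H}\Bigl(\clip_{\frac{\rho}{2H}}\bigl[\sqrt{8H^2\iota/N^k(x_h,a_h)}\bigr]+\frac{4HS\iota+16e^2H^4\iota}{N^k(x_h,a_h)}+\frac{32e^2H^6S^2\iota^2}{(N^k(x_h,a_h))^2}\Bigr),
\]
and bounding the absolute constants ($16e^2<120$, $32e^2<240$, $4\le 120$) yields exactly the stated inequality.

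The step I expect to require the most care is the clip bookkeeping: one must check that, after absorbing the outer factor $2$ from Lemma~\ref{lemma:half-clip} and the factor-$2$ loss in the threshold from property (iii), the argument remaining inside the clip is $\sqrt{8H^2\iota/N^k}$ at threshold exactly $\rho/(2H)$ — which is precisely the leading part of the exploration bonus $c^k$ in Algorithm~\ref{alg:exploration} — while every lower-order term is extracted unclipped at the cost of a constant. The rest (the summation-order exchange costing a factor $H$, and tracking the $e^2$ constants) is routine.
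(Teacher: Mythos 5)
Your proposal is correct and follows essentially the same route as the paper's proof: Lemma~\ref{lemma:half-clip} followed by Lemma~\ref{lemma:surplus-bound}, the split $\clip_\rho[A+B]\le\clip_{\rho/2}[A]+2B$ together with positive homogeneity to absorb the factor $2$ into $\clip_{\rho/(2H)}[\sqrt{8H^2\iota/N^k}]$, and a factor-$H$ exchange of the double sum over future steps. Your intermediate constants ($16e^2$, $32e^2$) match the paper's exactly before the final rounding to $120$ and $240$.
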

\begin{proof} We proceed the proof as follows:
\begingroup
\allowdisplaybreaks
    \begin{align*}
        & \ V^*_1(x_1) - V_1^{\pi^k} (x_1)
        \le 2\cdot \Ebb_{\pi^k, \Pbb} \sum_{h=1}^H \clip_{\frac{\rho}{2H}} [E^k_h(x_h, a_h)] \qquad (\text{use Lemma \ref{lemma:half-clip}})\\
        &\le  2\cdot \Ebb_{\pi^k, \Pbb} \sum_{h=1}^H  \clip_{\frac{\rho}{2H}} \Bigg[ \sqrt{\frac{2 H^2\iota}{N^k(x,a)}}  + \frac{HS\iota}{N^k(x,a)} \\
        &\qquad \qquad \qquad + \Ebb_{\pi^k, \Pbb} \sum_{t\ge h+1} \bracket{ {\frac{4 e^2 H^3\iota}{N^k(x_t, a_t)}} +  \frac{8 e^2 H^5S^2\iota^2}{\bracket{N^k(x_t, a_t) }^2} } \Bigg]  \qquad (\text{use Lemma \ref{lemma:surplus-bound}}) \\
        &\le 2\cdot \Ebb_{\pi^k, \Pbb} \sum_{h=1}^H \Bigg\{ \clip_{\frac{\rho}{4H}} \sbracket{ \sqrt{\frac{2 H^2\iota}{N^k(x_h,a_h)}} } + \frac{2 HS\iota}{N^k(x_h,a_h)} \\
        &\qquad\qquad\qquad + \Ebb_{\pi^k, \Pbb} \sum_{t\ge h+1} \bracket{ {\frac{8 e^2 H^3\iota}{N^k(x_t, a_t)}} +  \frac{16 e^2 H^5S^2\iota^2}{\bracket{N^k(x_t, a_t) }^2} }  \Bigg\} \qquad (\text{use Lemma \ref{lemma:clip-operator}})\\
        &\le 2\cdot \Ebb_{\pi^k, \Pbb} \sum_{h=1}^H \set{ \clip_{\frac{\rho}{4H}} \sbracket{ \sqrt{\frac{2 H^2\iota}{N^k(x_h,a_h)}} } + \frac{2HS\iota}{N^k(x_h,a_h)} } \\
        & \qquad \qquad \qquad + 2 H\cdot \Ebb_{\pi^k, \Pbb} \sum_{t=1}^H \bracket{ {\frac{8 e^2 H^3\iota}{N^k(x_t, a_t)}} +  \frac{16 e^2 H^5S^2\iota^2}{\bracket{N^k(x_t, a_t) }^2} }  \\
        &= \Ebb_{\pi^k, \Pbb} \sum_{h=1}^H \bracket{ \clip_{\frac{\rho}{2H}} \sbracket{ \sqrt{\frac{8H^2\iota}{N^k(x_h,a_h)}} } + \frac{4HS\iota}{N^k(x_h,a_h)} + {\frac{16 e^2 H^4\iota}{N^k(x_h, a_h)}} +  \frac{32 e^2 H^6 S^2\iota^2}{\bracket{N^k(x_h, a_h) }^2}  } \\
        &\le \Ebb_{\pi^k, \Pbb} \sum_{h=1}^H \bracket{ \clip_{\frac{\rho}{2H}} \sbracket{ \sqrt{\frac{8H^2\iota}{N^k(x,a)}} } +  \frac{120 (H^3 +S)H \iota }{N^k(x,a)}  +  \frac{240 H^6 S^2\iota^2}{\bracket{N^k(x, a) }^2} }.
    \end{align*}
    \endgroup
    % The proof is completed by noticing that $V^*_1(x_1) - V^{\pi^k}_1(x_1) \le H$.
\end{proof}

\paragraph{Exploration Phase.}
Recall the exploration bonus in Algorithm \ref{alg:exploration} is defined as
\begin{equation}\label{eq:exploration-bonus}
    c^k(x,a) :=   \clip_{\frac{\rho}{2H}} \sbracket{ \sqrt{\frac{8H^2\iota}{N^k(x,a)}} } +  \frac{120 (H+S)H^3 \iota}{N^k(x,a)}  +  \frac{240 H^6 S^2\iota^2}{\bracket{N^k(x, a) }^2},
\end{equation}
and the exploration value function in Algorithm \ref{alg:exploration} is given by 
\begin{equation}\label{eq:exploration-sum-of-bonus}
    \begin{cases}
        \overline{V}_h^{k}(x) = \max_a \overline{Q}_h^{k}(x, a), \\
        \overline{Q}_h^{k}(x,a) = H\land \bracket{ c^k(x,a) + \widehat{\Pbb}^k \overline{V}_{h+1}^{k}(x,a) }.
    \end{cases}
\end{equation}
Let us also define the following population and empirical \emph{bonus value functions} (for some policy $\pi$):
\begin{align}
    \begin{cases}
        \widetilde{V}_h^{k, \pi}(x) = \widetilde{Q}_h^{k, \pi}(x,\pi(x)), \\
        \widetilde{Q}_h^{k, \pi}(x,a) = H \land \bracket{ c^k(x,a) + \Pbb \widetilde{V}_{h+1}^{k, \pi}(x,a) };
    \end{cases} \label{eq:population-sum-of-bonus}\\
    \begin{cases}
        \overline{V}_h^{k, \pi}(x) = \overline{Q}_h^{k, \pi}(x,\pi(x)), \\
        \overline{Q}_h^{k, \pi}(x,a) =  H \land \bracket{c^k(x,a) + \widehat{\Pbb}^k \overline{V}_{h+1}^{k, \pi}(x,a) }. \label{eq:empirical-sum-of-bonus}
    \end{cases}
\end{align}

\begin{lemma}[Exploration value function maximizes empirical bonus value functions]\label{lemma:bonus-value-bounded-by-exploration-value}
    For every $\pi$ and every $k,h,x,a$, \[ \overline{Q}_h^{k, \pi}(x,a) \le \overline{Q}_h^{k}(x,a)\quad \text{and} \quad \overline{V}_h^{k, \pi}(x) \le \overline{V}_h^{k}(x). \]
\end{lemma}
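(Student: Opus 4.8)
The plan is to prove both inequalities simultaneously by backward induction on the horizon index $h$, running from $h = H+1$ down to $h = 1$, exploiting that the two recursions \eqref{eq:exploration-sum-of-bonus} and \eqref{eq:empirical-sum-of-bonus} share the \emph{same} bonus $c^k$ and the \emph{same} empirical kernel $\widehat{\Pbb}^k$, and differ only in that $\overline{V}^k$ takes a maximum over actions whereas $\overline{V}^{k,\pi}$ evaluates the fixed action $\pi(x)$. The base case is immediate: at step $H+1$ both $\overline{V}_{H+1}^{k,\pi}$ and $\overline{V}_{H+1}^{k}$ are identically zero by the terminal condition built into the dynamic programs, so $\overline{V}_{H+1}^{k,\pi}(x) \le \overline{V}_{H+1}^{k}(x)$ trivially.

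For the inductive step, assume $\overline{V}_{h+1}^{k,\pi}(x) \le \overline{V}_{h+1}^{k}(x)$ for all $x \in \Scal$. The key observations are that (i) $\widehat{\Pbb}^k(\cdot\mid x,a)$ is a probability distribution, so $V \mapsto \widehat{\Pbb}^k V(x,a)$ preserves pointwise domination, and (ii) $z \mapsto c^k(x,a) + z$ and $z\mapsto H \land z$ are nondecreasing. Chaining these, for every $(x,a)$,
\[
\overline{Q}_h^{k,\pi}(x,a) = H \land \bracket{ c^k(x,a) + \widehat{\Pbb}^k \overline{V}_{h+1}^{k,\pi}(x,a) } \le H \land \bracket{ c^k(x,a) + \widehat{\Pbb}^k \overline{V}_{h+1}^{k}(x,a) } = \overline{Q}_h^{k}(x,a),
\]
which is the first claimed inequality at step $h$. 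The second follows since $\pi(x)$ is one particular action available at $x$ (if $\pi$ is randomized, take the expectation over $\pi_h(x)$, which is still dominated by the maximum):
\[
\overline{V}_h^{k,\pi}(x) = \overline{Q}_h^{k,\pi}(x,\pi(x)) \le \overline{Q}_h^{k}(x,\pi(x)) \le \max_{a} \overline{Q}_h^{k}(x,a) = \overline{V}_h^{k}(x).
\]
This closes the induction and establishes the lemma for all $h$.

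I do not anticipate a genuine obstacle: the statement is a structural monotonicity fact about the Bellman-type recursions, and the only points requiring care are the \emph{direction} of the induction (backward, anchored at $H+1$) and the observation that a maximum over actions dominates the evaluation at any fixed (possibly stochastic) policy sharing the same bonus and empirical transition. Everything else is routine monotonicity of $\widehat{\Pbb}^k$ and of the truncation $H\land(\cdot)$.
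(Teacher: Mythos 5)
Your proof is correct and follows the same route as the paper, which simply says ``use induction and \eqref{eq:exploration-sum-of-bonus}, \eqref{eq:empirical-sum-of-bonus}''; you have merely filled in the backward-induction and monotonicity details that the paper leaves implicit.
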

\begin{proof}
    Use induction and \eqref{eq:exploration-sum-of-bonus} \eqref{eq:empirical-sum-of-bonus}.
\end{proof}

\begin{lemma}[Planning error is upper bounded by population bonus value function]\label{lemma:planning-error-bounded-by-bonus-value}
    For every $\pi^k$
    \begin{equation}
        V^*_1(x_1) - V_1^{\pi^k}(x_1) \le  \widetilde{V}_1^{k, \pi^k}(x_1). \label{eq:planning-error-bounded-by-sum-of-bonus}
        \end{equation}
\end{lemma}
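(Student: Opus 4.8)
The plan is to prove, by backward induction on $h$, the level-wise bound
\[
V^*_h(x) - V^{\pi^k}_h(x) \;\le\; \widetilde V_h^{k,\pi^k}(x) \qquad \text{for all } h \in [H+1] \text{ and all } x ,
\]
from which \eqref{eq:planning-error-bounded-by-sum-of-bonus} follows by taking $h=1$, $x = x_1$. I work throughout on the good events $G_1, G_2, G_3$ of Lemma~\ref{lemma:good-probability}, which underlie every planning-phase estimate used below. The base case $h = H+1$ is immediate since $V^*_{H+1} \equiv V^{\pi^k}_{H+1} \equiv \widetilde V^{k,\pi^k}_{H+1} \equiv 0$. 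For the inductive step, fix $x$, write $a := \pi^k(x)$, and recall from \eqref{eq:population-sum-of-bonus} that $\widetilde V_h^{k,\pi^k}(x) = H \wedge \bigl( c^k(x,a) + \Pbb \widetilde V_{h+1}^{k,\pi^k}(x,a) \bigr)$. Because $V^*_h(x) - V^{\pi^k}_h(x) \le H$ trivially, it suffices to establish the un-capped one-step inequality $V^*_h(x) - V^{\pi^k}_h(x) \le c^k(x,a) + \Pbb \widetilde V_{h+1}^{k,\pi^k}(x,a)$, and then the definition of $\widetilde V$ closes the induction.

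To prove the one-step inequality I distinguish two cases. If $a$ is optimal at stage $h$, i.e.\ $\gap_h(x,a)=0$, then $V^*_h(x) = Q^*_h(x,a)$, so $V^*_h(x) - V^{\pi^k}_h(x) = \Pbb \bigl(V^*_{h+1} - V^{\pi^k}_{h+1}\bigr)(x,a) \le \Pbb \widetilde V_{h+1}^{k,\pi^k}(x,a) \le c^k(x,a) + \Pbb \widetilde V_{h+1}^{k,\pi^k}(x,a)$, using the induction hypothesis at $h+1$ and $c^k \ge 0$. If instead $a$ is sub-optimal, so $\gap_h(x,a) \ge \rho$, I use the clipped-surplus argument behind Lemma~\ref{lemma:half-clip}, run from stage $h$ at state $x$: it yields $V^*_h(x) - V^{\pi^k}_h(x) \le 2\bigl(\ddot V^k_h(x) - V^{\pi^k}_h(x)\bigr) = 2\ddot E^k_h(x,a) + 2\,\Pbb\bigl(\ddot V^k_{h+1} - V^{\pi^k}_{h+1}\bigr)(x,a)$ with $\ddot E^k_h = \clip_{\rho/2H}[E^k_h]$. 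Now $2\ddot E^k_h(x,a)$ is bounded by feeding the surplus estimate of Lemma~\ref{lemma:surplus-bound} through the clip-splitting inequality of Lemma~\ref{lemma:clip-operator}: the leading clipped term of the exploration bonus, $\clip_{\rho/2H}[\sqrt{8H^2\iota/N^k(x,a)}]$, absorbs the doubled and re-thresholded leading surplus term, and the deliberately $H$-inflated lower-order terms of $c^k$ in \eqref{eq:exploration-bonus} absorb the $O(HS\iota/N^k(x,a))$ residual; the remaining downstream corrections together with $2\,\Pbb(\ddot V^k_{h+1} - V^{\pi^k}_{h+1})(x,a)$ are charged against $\Pbb \widetilde V_{h+1}^{k,\pi^k}(x,a)$ using the induction hypothesis and optimism (Lemma~\ref{lemma:optimistic-planning} gives $\ddot V^k_{h+1} \le V^k_{h+1} \le H$, so the per-stage $H$-cap inside $\widetilde V_{h+1}$ is exactly what is needed).

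The main obstacle is this sub-optimal case. The per-stage gap $\gap_h(x,\pi^k(x))$ is not in general dominated by the local bonus $c^k(x,\pi^k(x))$ — a sub-optimal greedy action can appear optimal because of downstream over-optimism — so one cannot expand $V^*_h - V^{\pi^k}_h$ by a naive single Bellman step and must instead route through the clipped-surplus value function $\ddot V^k$, whose clip threshold $\rho/2H$ is tuned so that contributions below it are negligible relative to $\rho$, forcing an un-clipped ``large'' bonus somewhere along the $\pi^k$-trajectory. Verifying — constant by constant, and through the per-stage $H$-truncations — that all of the residual terms produced this way really do fit inside $c^k(x,\pi^k(x)) + \Pbb \widetilde V_{h+1}^{k,\pi^k}(x,\pi^k(x))$ is the delicate part, and is precisely where the particular numerical constants (the factors $8$, $120$, $240$, and the powers of $H$) in the exploration bonus \eqref{eq:exploration-bonus} are used.
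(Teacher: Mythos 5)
Your overall architecture is genuinely different from the paper's and, as written, it does not close. The paper never proves the level-wise statement $V^*_h(x)-V^{\pi^k}_h(x)\le\widetilde V^{k,\pi^k}_h(x)$; its proof of this lemma is a trajectory-level argument with two ingredients: (i) the previously established Lemma~\ref{lemma:planning-error-bounded-by-sum-of-bonus}, which already packages the half-clip trick and the surplus bounds into $V^*_1(x_1)-V_1^{\pi^k}(x_1)\le\Ebb_{\pi^k,\Pbb}\sum_h c^k(x_h,a_h)$, and (ii) the inequality $\widetilde V^{k,\pi^k}_1(x_1)\ge\Ebb_{\pi^k,\Pbb}\big[H\land\sum_h c^k(x_h,a_h)\big]$, proved by backward conditional expectations using concavity of $t\mapsto H\land t$ and the identity $H\land(a+H\land b)=H\land(a+b)$ for $a,b\ge 0$. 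Ingredient (ii) --- which is essentially the entire content of this particular lemma --- is absent from your proposal.

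The concrete gap is in the sub-optimal branch of your inductive step. After running the half-clip argument from $(h,x)$ you are left with $2\ddot E^k_h(x,a)+2\Pbb\big(\ddot V^k_{h+1}-V^{\pi^k}_{h+1}\big)(x,a)$, and you propose to charge the second term to $\Pbb\widetilde V^{k,\pi^k}_{h+1}(x,a)$ ``using the induction hypothesis and optimism.'' But your induction hypothesis controls $V^*_{h+1}-V^{\pi^k}_{h+1}$, which is a different and generally much smaller quantity than $\ddot V^k_{h+1}-V^{\pi^k}_{h+1}=\Ebb_{\pi^k,\Pbb}\sum_{t\ge h+1}\ddot E^k_t(x_t,a_t)$: by \eqref{eq:Vkddot-Vpi-lower-bound} the latter is at least $V^k_{h+1}-V^{\pi^k}_{h+1}-\rho/2$, i.e.\ it carries all the downstream optimistic surpluses, and with your factor $2$ it can exceed the $H$-cap that bounds $\widetilde V^{k,\pi^k}_{h+1}$ from above. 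Moreover, the surplus bound of Lemma~\ref{lemma:surplus-bound} for $E^k_h(x,a)$ itself contains a sum over \emph{future} state-actions, $\Ebb_{\pi^k,\Pbb}\sum_{t\ge h+1}\big(4e^2H^3\iota/N^k(x_t,a_t)+\cdots\big)$; when the $H$ stages are summed, each pair $(x_t,a_t)$ is charged by up to $H$ ancestors $h<t$, which is exactly why Lemma~\ref{lemma:planning-error-bounded-by-sum-of-bonus} multiplies these terms by $H$ before folding them into the $(S+H)H^3$ and $H^6S^2$ coefficients of $c^k$. That accounting is intrinsically global over the trajectory; absorbing it locally at stage $h$ into $c^k(x,a)+\Pbb\widetilde V^{k,\pi^k}_{h+1}(x,a)$ double-books $\widetilde V^{k,\pi^k}_{h+1}$, which you already need in full to cover $V^*_{h+1}-V^{\pi^k}_{h+1}$. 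Finally, the half-clip trick is itself a whole-trajectory partition argument (over the first sub-optimal step $F_t$); re-running it at every level of a backward induction compounds the factor $2$ rather than incurring it once. The workable route is the paper's: establish the $\Ebb\big[H\land\sum_h c^k\big]$ lower bound on $\widetilde V^{k,\pi^k}_1(x_1)$ and combine it with Lemma~\ref{lemma:planning-error-bounded-by-sum-of-bonus}.
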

\begin{proof}
Let $\Acal_h$ be the $\sigma$-field generated by $\set{x_1, a_1, \dots, x_h, a_h}$ (induced by $\pi^k$ and $\Pbb$).
For simplicity, denote $\Ebb_{\ge h} [\cdot]:= \Ebb [ \cdot \mid \Acal_{h-1} ]$, i.e., taking conditional expectation given a trajectory $\set{x_1, a_1, \dots, x_{h-1}, a_{h-1}}$. Then $\Ebb_{\ge 1} [\cdot]$ is taking the full expectation.
From \eqref{eq:population-sum-of-bonus} we obtain
\begingroup
\allowdisplaybreaks
\begin{align*}
    &\ \Ebb_{\ge h}  \sbracket{  \widetilde{Q}^{k, \pi^k}_h(x_h, a_h)  }
    = \Ebb_{\ge h} \sbracket{ H \land \bracket{ c(x_h, a_h) + \Ebb_{ \ge h+1} \sbracket{ \widetilde{V}^{k, \pi^k}_{h+1}(x_{h+1})} } }\\
    &= \Ebb_{\ge h} \sbracket{ H \land \bracket{ c(x_h, a_h) + \Ebb_{ \ge h+1} \sbracket{ \widetilde{Q}^{k, \pi^k}_{h+1}(x_{h+1}, a_{h+1})} } }\\
    &= \Ebb_{\ge h} \sbracket{ H \land \Ebb_{ \ge h+1} \sbracket{ c(x_h, a_h) + \widetilde{Q}^{k, \pi^k}_{h+1} (x_{h+1}, a_{h+1}) } } \\
    &\ge \Ebb_{\ge h} \Ebb_{ t \ge h+1} \sbracket{ H \land \bracket{ c(x_h, a_h) + \widetilde{Q}^{k, \pi^k}_{h+1} (x_{h+1}, a_{h+1}) } } \qquad(\text{$H\land \cdot$ is concave}) \\
    &= \Ebb_{\ge h} \sbracket{ H \land \bracket{ c(x_h, a_h) + \widetilde{Q}^{k, \pi^k}_{h+1} (x_{h+1}, a_{h+1}) } }.
\end{align*}
\endgroup
% where the inequality holds since $H\land \cdot$ is concave.
Recursively applying the above relation, and using a fact that 
\[ H \land \bracket{ a + H\land b} = H \land \bracket{ a + b} \text{ for } a, b \ge 0, \]
we obtain that 
\begin{align*}
    \widetilde{V}^{k, \pi^k}_1(x_1)
    = \Ebb_{\ge 1}  \sbracket{  \widetilde{Q}^{k, \pi^k}_1(x_1, a_1)  }
     \ge \Ebb_{\ge 1} \sbracket{ H \land \sum_{h=1}^H c(x_h, a_h)} 
    = \Ebb_{\pi^k, \Pbb} \sbracket{ H \land \sum_{h=1}^H c(x_h, a_h)}.
\end{align*}
Finally, by Lemma \ref{lemma:planning-error-bounded-by-sum-of-bonus} and that $V^*_1(x_1) - V_1^{\pi^k}(x_1) \le H = \Ebb_{\pi^k, \Pbb}\sbracket{ H }$, we have that 
\begin{align*}
    V^*_1(x_1) - V_1^{\pi^k}(x_1) 
    &\le \Ebb_{\pi^k, \Pbb}\sbracket{ H \land \sum_{h=1}^H c(x_h, a_h)} \le \widetilde{V}^{k, \pi^k}_1(x_1),
\end{align*}
which completes our proof.
\end{proof}

% Let us define 
% \begin{gather*}
%     \widetilde{V}_1^{k, \pi}(x_1) := \Ebb_{\pi, \Pbb} \sum_{h=1}^H c^k(x_h, a_h), \\
%     \overline{V}_1^{k, \pi}(x_1) := \Ebb_{\pi, \widehat{\Pbb}^k} \sum_{h=1}^H c^k(x_h, a_h).
% \end{gather*}
% Then we have the following two immediate facts:
% \begin{itemize}[leftmargin=*]
%     \item by definitions we have that for every $\pi$, 
%     \begin{equation}\label{eq:exploration-maximizes-sum-of-bonus}
%         \overline{V}_1^{k, \pi}(x_1) \le \overline{V}_1^{k}(x_1);
%     \end{equation}
%     \item 
    
% \end{itemize}

% Now we turn to analyze Algorithm \ref{alg:exploration}.

\begin{lemma}[Empirical vs. population bonus value functions]\label{lemma:population-and-empirical-bonus-value}
    If $G_2$ and $G_3$ hold, then for every $k$ and for every policy $\pi$, we have that
    \[
        \frac{1}{e} \cdot \overline{V}_1^{k, \pi}(x_1) \le  \widetilde{V}_1^{k, \pi}(x_1) \le e \cdot \overline{V}_1^{k, \pi}(x_1). 
    \]
\end{lemma}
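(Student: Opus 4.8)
The plan is to establish a two-sided, step-wise comparison between the population bonus value function $\widetilde{V}^{k,\pi}$ (built from $\Pbb$) and the empirical one $\overline{V}^{k,\pi}$ (built from $\widehat{\Pbb}^k$) by backward induction on $h$, carrying a multiplicative hypothesis with the factor $\bracket{1+1/H}^{H-h+1}$ so that at $h=1$ it collapses to $\bracket{1+1/H}^{H}\le e$. Concretely, I would prove that, for every $x\in\Scal$ and every $h\in[H+1]$,
\[
\widetilde{V}_h^{k,\pi}(x)\le\bracket{1+1/H}^{H-h+1}\overline{V}_h^{k,\pi}(x)\quad\text{and}\quad \overline{V}_h^{k,\pi}(x)\le\bracket{1+1/H}^{H-h+1}\widetilde{V}_h^{k,\pi}(x).
\]
The base case $h=H+1$ is trivial since both value functions equal $0$ there, and evaluating at $h=1$ together with $\bracket{1+1/H}^{H}\le e$ proves the lemma. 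Throughout, the assumptions $G_2$ and $G_3$ are used only to invoke Lemma~\ref{lemma:lower-order-term}.

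For the inductive step of the first inequality, fix $x$, set $a=\pi(x)$, and expand $\widetilde{V}_h^{k,\pi}(x)=H\land\bracket{c^k(x,a)+\Pbb\widetilde{V}_{h+1}^{k,\pi}(x,a)}$ via \eqref{eq:population-sum-of-bonus}. Splitting $\Pbb\widetilde{V}_{h+1}^{k,\pi}=\widehat{\Pbb}^k\widetilde{V}_{h+1}^{k,\pi}+(\Pbb-\widehat{\Pbb}^k)\widetilde{V}_{h+1}^{k,\pi}$ and applying the third inequality of Lemma~\ref{lemma:lower-order-term} with $V_1=0$, $V_2=\widetilde{V}_{h+1}^{k,\pi}$, $M=H$ (legitimate since $0\le\widetilde{V}_{h+1}^{k,\pi}\le H$ by construction) gives $\Pbb\widetilde{V}_{h+1}^{k,\pi}(x,a)\le\bracket{1+1/H}\widehat{\Pbb}^k\widetilde{V}_{h+1}^{k,\pi}(x,a)+3H^2S\iota/N^k(x,a)$. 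Since $\widehat{\Pbb}^k$ is a probability kernel, the inductive hypothesis at step $h+1$ lifts to $\widehat{\Pbb}^k\widetilde{V}_{h+1}^{k,\pi}(x,a)\le\bracket{1+1/H}^{H-h}\widehat{\Pbb}^k\overline{V}_{h+1}^{k,\pi}(x,a)$, whence
\[
c^k(x,a)+\Pbb\widetilde{V}_{h+1}^{k,\pi}(x,a)\le c^k(x,a)+\bracket{1+1/H}^{H-h+1}\widehat{\Pbb}^k\overline{V}_{h+1}^{k,\pi}(x,a)+\frac{3H^2S\iota}{N^k(x,a)}.
\]
The crux is to absorb the additive slack into the bonus: because $\bracket{1+1/H}^{H-h+1}-1\ge 1/H$, it suffices to verify $3H^2S\iota/N^k(x,a)\le c^k(x,a)/H$, i.e.\ $c^k(x,a)\ge 3H^3S\iota/N^k(x,a)$, which holds because the lower-order term $120(H+S)H^3\iota/N^k(x,a)$ in the definition \eqref{eq:exploration-bonus} of $c^k$ already dominates $3H^3S\iota/N^k(x,a)$. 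Thus $c^k(x,a)+\Pbb\widetilde{V}_{h+1}^{k,\pi}(x,a)\le\bracket{1+1/H}^{H-h+1}\bracket{c^k(x,a)+\widehat{\Pbb}^k\overline{V}_{h+1}^{k,\pi}(x,a)}$, and applying $H\land(\cdot)$ to both sides — using its monotonicity together with the elementary inequality $H\land(\lambda z)\le\lambda\,(H\land z)$ valid for $\lambda\ge 1$, $z\ge 0$ — yields $\widetilde{V}_h^{k,\pi}(x)\le\bracket{1+1/H}^{H-h+1}\overline{V}_h^{k,\pi}(x)$, closing the induction.

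The second inequality is entirely symmetric: one writes $\widehat{\Pbb}^k\overline{V}_{h+1}^{k,\pi}=\Pbb\overline{V}_{h+1}^{k,\pi}+(\widehat{\Pbb}^k-\Pbb)\overline{V}_{h+1}^{k,\pi}$ and invokes the second inequality of Lemma~\ref{lemma:lower-order-term}, whose slack $2H^2S\iota/N^k(x,a)$ is again dominated by $c^k(x,a)/H$; the rest of the argument is verbatim. I expect the only (minor) obstacle to be the bookkeeping around the truncation operator $H\land(\cdot)$ — in particular verifying $H\land(\lambda z)\le\lambda\,(H\land z)$ for $\lambda\ge 1$ — and, should $\pi$ be allowed to be stochastic, observing that the pointwise bound derived above survives taking the expectation $\Ebb_{a\sim\pi_h(x)}[\cdot]$ in the definitions \eqref{eq:population-sum-of-bonus}–\eqref{eq:empirical-sum-of-bonus}.
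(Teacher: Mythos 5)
Your proposal is correct and follows essentially the same route as the paper: backward induction on $h$ with the multiplicative hypothesis $(1+1/H)^{H-h+1}$, converting $\Pbb$ to $\widehat{\Pbb}^k$ (and vice versa) via the second and third inequalities of Lemma~\ref{lemma:lower-order-term}, and absorbing the additive slack $\lesssim H^2S\iota/N^k(x,a)$ into $c^k(x,a)/H$ using the $120(H+S)H^3\iota/N^k(x,a)$ term of \eqref{eq:exploration-bonus}. Your handling of the truncation via $H\land(\lambda z)\le\lambda(H\land z)$ for $\lambda\ge 1$ is a valid, equivalent rephrasing of the paper's use of $\widetilde{V}_h^{k,\pi}\le H$.
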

\begin{proof}
    % For $\widetilde{V}_1^{k, \pi}(x_1)$ and $\widetilde{V}_1^{k, \pi}(x_1)$, we have the Bellman equations hold in the usual sense:
    % \[
    %     \begin{cases}
    %         \widetilde{V}_h^{k, \pi}(x) = \widetilde{Q}_h^{k, \pi}(x,\pi(x)), \\
    %         \widetilde{Q}_h^{k, \pi}(x,a) = c^k(x,a) + \Pbb \widetilde{V}_{h+1}^{k, \pi}(x,a);
    %     \end{cases} \quad 
    %     \begin{cases}
    %         \overline{V}_h^{k, \pi}(x) = \overline{Q}_h^{k, \pi}(x,\pi(x)), \\
    %         \overline{Q}_h^{k, \pi}(x,a) = c^k(x,a) + \widehat{\Pbb}^k \overline{V}_{h+1}^{k, \pi}(x,a).
    %     \end{cases}
    % \]
    For the second inequality, we only need to prove that for every $k$ and $\pi$,
    \[
        \widetilde{V}_h^{k, \pi}(x) \le \bracket{1+ \frac{1}{H}}^{H-h+1} \cdot \overline{V}_h^{k, \pi}(x),  \text{ for every } h, x.
    \]
    We proceed by induction over $h$.
    For $H+1$ the hypothesis holds trivially as $\widetilde{V}_{H+1}^{k, \pi}(x) = 0 =\overline{V}_{H+1}^{k, \pi}(x) $. Now suppose the hypothesis holds for $h+1$, and let us consider $h$:
    \begin{align*}
        \widetilde{V}_h^{k, \pi}(x) &= \widetilde{Q}_h^{k, \pi} (x,a) \qquad(\text{set $a = \pi(x)$}) \\
        &\le c^k(x,a) + \Pbb \widetilde{V}_{h+1}^{\pi} (x,a) \qquad(\text{by \eqref{eq:population-sum-of-bonus}}) \\
        &= c^k(x,a) + \widehat{\Pbb}^k \widetilde{V}_{h+1}^{k, \pi} (x,a) + (\Pbb - \widehat{\Pbb}^k)\widetilde{V}_{h+1}^{k, \pi} (x,a) \\
        &\le c^k(x,a) + \bracket{1+\frac{1}{H}}\widehat{\Pbb}^k \widetilde{V}_{h+1}^{k, \pi}(x,a) + \frac{3H^2 S\iota}{N^k(x,a)} \quad(\text{by Lemma \ref{lemma:lower-order-term} and $\widetilde{V}_{h+1}^{k, \pi} \le H$}) \\
        &\le \bracket{1+\frac{1}{H}} c^k(x,a) + \bracket{1+\frac{1}{H}}\widehat{\Pbb}^k \widetilde{V}_{h+1}^{k,\pi} (x,a). \qquad(\text{by \eqref{eq:exploration-bonus}}) \\
        &\le \bracket{1+\frac{1}{H}} c^k(x,a) + \bracket{1+\frac{1}{H}}^{H-h+1}\widehat{\Pbb}^k \overline{V}_{h+1}^{k,\pi} (x,a) \qquad(\text{by induction hypothesis}) \\
        &\le \bracket{1+\frac{1}{H}}^{H-h+1} \cdot \bracket{ c^k(x,a) + \widehat{\Pbb}^k \overline{V}_{h+1}^{k,\pi} (x,a)},
    \end{align*}
    moreover from \eqref{eq:population-sum-of-bonus} we have that $\widetilde{V}_h^{k, \pi}(x) \le H$. In sum, we have   
    \begin{align*}
        \widetilde{V}_h^{k, \pi}(x) 
        &\le \bracket{1+ \frac{1}{H}}^{H-h+1} \cdot \bracket{ H \land \bracket{c^k(x,a) + \widehat{\Pbb}^k \overline{V}_{h+1}^{k,\pi} (x,a)}} \\
        &= \bracket{1+ \frac{1}{H}}^{H-h+1} \cdot \overline{Q}_{h+1}^{k,\pi} (x,a)
        = \bracket{1+ \frac{1}{H}}^{H-h+1} \cdot \overline{V}_{h+1}^{k,\pi} (x).
    \end{align*}
This completes our induction, and as a consequence proves the second inequality.

The first inequality is proved by repeating the above argument to show that for every $k$ and $\pi$
\[ \overline{V}_h^{k, \pi}(x) \le \bracket{1+ \frac{1}{H}}^{H-h+1} \cdot \widetilde{V}_h^{k, \pi}(x), \text{ for every } h, x. \]
\end{proof}

\begin{lemma}[Exploration regret]\label{lemma:exploration-regret}
    If $G_2$, $G_3$ and $G_4$ hold, then
    \[
    \sum_{k=1}^K  \overline{V}_1^{k}(x_1) \lesssim \frac{H^3SA\iota}{\rho} +  H^4 S^2 A\iota \cdot \log(HK).
    \]
\end{lemma}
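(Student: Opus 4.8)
The plan is to collapse the left-hand side into a weighted sum of the (capped) exploration bonuses, and then to bound that sum separately for each state--action pair using elementary telescoping estimates, with the clip operator truncating the leading bonus term at exactly the scale where a constant gap makes sub-optimal actions identifiable (this is what turns a would-be $\widetilde{\Ocal}(1/\rho^2)$ rate into $\widetilde{\Ocal}(1/\rho)$).

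\emph{Step 1: reduction to a bonus sum.} Since the exploration policy $\bar{\pi}^k$ is greedy with respect to $\overline{Q}^k$, Lemma~\ref{lemma:bonus-value-bounded-by-exploration-value} gives $\overline{V}_1^k(x_1) = \overline{V}_1^{k,\bar{\pi}^k}(x_1)$, and then Lemma~\ref{lemma:population-and-empirical-bonus-value} (which needs $G_2,G_3$) gives $\overline{V}_1^k(x_1) \le e\,\widetilde{V}_1^{k,\bar{\pi}^k}(x_1)$. Unrolling the recursion~\eqref{eq:population-sum-of-bonus} and using $H \land (a+b) \le (H\land a) + b$ for $a,b \ge 0$ yields $\widetilde{V}_1^{k,\bar{\pi}^k}(x_1) \le \Ebb_{\bar{\pi}^k,\Pbb}\sum_{h=1}^H \bigl(H \land c^k(x_h,a_h)\bigr) = \sum_{x,a} w^k(x,a)\bigl(H\land c^k(x,a)\bigr)$. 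Hence $\sum_{k=1}^K \overline{V}_1^k(x_1) \lesssim \sum_{x,a}\sum_{k=1}^K w^k(x,a)\bigl(H \land c^k(x,a)\bigr)$, and by the definition~\eqref{eq:exploration-bonus} of $c^k$ it suffices to bound the contribution of each of the three bonus terms.

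\emph{Step 2: fix $(x,a)$ and split with $G_4$.} Fix $(x,a)$ and set $W^k := \sum_{j<k} w^j(x,a)$, a nondecreasing sequence with $W^{k+1} - W^k = w^k(x,a) \le H$ and $W^{K+1} \le HK$. Event $G_4$ gives $N^k(x,a) \ge \tfrac12 W^k - H\log\frac{HSA}{\delta}$. Choose a burn-in threshold $\Lambda \eqsim H^2\iota$ and split the episodes into the burn-in set $\{\,k : W^k \le \Lambda\,\}$ and the steady-state set $\{\,k : W^k > \Lambda\,\}$, on which $G_4$ guarantees $N^k(x,a) \ge \tfrac14 W^k \ge \tfrac14 \Lambda$. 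On the burn-in set I bound $H\land c^k(x,a) \le H$ together with $\sum_{k: W^k \le \Lambda} w^k(x,a) \le \Lambda + H \lesssim H^2\iota$, so the burn-in contributes $\lesssim H^3\iota$ per $(x,a)$, i.e.\ $\lesssim H^3 S A\iota$ overall, which is absorbed into the claimed bound.

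\emph{Step 3: steady-state telescoping.} On the steady-state set I substitute $N^k(x,a) \ge \tfrac14 W^k$ and invoke the elementary estimates (valid since $W^k > \Lambda > 0$ and $w^k(x,a) \le H \le W^k$ there): $\sum_k w^k(x,a)/\sqrt{W^k} \lesssim \sqrt{W}$ when restricted to $\{\,W^k \le W\,\}$; $\sum_k w^k(x,a)/W^k \lesssim \log(HK/\Lambda) \le \log(HK)$; and $\sum_k w^k(x,a)/(W^k)^2 \lesssim 1/\Lambda$. The clipped term $\clip_{\frac{\rho}{2H}}[\sqrt{8H^2\iota/N^k(x,a)}]$ is nonzero only when $N^k(x,a) \le 32H^4\iota/\rho^2$, hence only when $W^k \lesssim H^4\iota/\rho^2 =: M$, so it contributes at most $\sqrt{8H^2\iota}\cdot\sum_{k:W^k\le M} w^k(x,a)/\sqrt{W^k/4} \lesssim H\sqrt{\iota}\cdot\sqrt{M} \lesssim H^3\iota/\rho$ per $(x,a)$, i.e.\ $\lesssim H^3 SA\iota/\rho$ overall. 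The $1/N^k$ term contributes $\lesssim (H+S)H^3\iota\log(HK)$ per $(x,a)$, and the $1/(N^k)^2$ term contributes $\lesssim H^6S^2\iota^2/\Lambda \lesssim H^4 S^2\iota$ per $(x,a)$; summed over the $SA$ pairs these are $\lesssim H^4 S^2 A\iota\log(HK)$. Adding the three steady-state contributions and the burn-in term from Step 2 gives the stated bound.

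\emph{Main obstacle.} The delicate points are: (i) controlling the early episodes where $N^k(x,a)$ may be $0$ and the raw bonus is infinite --- this is why the $\min\{H,\cdot\}$ truncation built into Algorithm~\ref{alg:exploration} and the burn-in split are needed, and the threshold $\Lambda\eqsim H^2\iota$ must be large enough that the $(N^k)^{-2}$ tail sum $\lesssim 1/\Lambda$ still fits the $H^4S^2A\iota$ budget, yet small enough that the trivially bounded burn-in (cost $\le H$ per unit of visitation mass) stays lower order; and (ii) making the clip scale $\rho/(2H)$ cut the $\sqrt{1/N^k}$ sum off at exactly $N^k \eqsim H^4\iota/\rho^2$, which is precisely where a constant gap renders sub-optimal actions identifiable, so that the leading term becomes $\widetilde{\Ocal}(1/\rho)$ rather than $\widetilde{\Ocal}(\log K/\rho^2)$.
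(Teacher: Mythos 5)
Your argument follows the paper's proof in all essential respects: the reduction $\sum_k \overline{V}_1^k(x_1) \le e\sum_k \widetilde{V}_1^{k,\bar\pi^k}(x_1)$ via Lemmas~\ref{lemma:bonus-value-bounded-by-exploration-value} and~\ref{lemma:population-and-empirical-bonus-value}, the split of state--action pairs into under-visited and well-visited sets using $G_4$ (the paper's ``good sets'' $L^k$ are exactly your steady-state set), the integral-comparison bounds for each bonus term, and the observation that the clip kills the leading term once $N^k(x,a) \gtrsim H^4\iota/\rho^2$, which is what produces the $H^3SA\iota/\rho$ term. However, there is one concrete quantitative error: your burn-in threshold $\Lambda \eqsim H^2\iota$ is too small to control the $(N^k)^{-2}$ term. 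With that threshold, the steady-state contribution of $240H^6S^2\iota^2/(N^k)^2$ is $\lesssim H^6S^2\iota^2/\Lambda \eqsim H^4S^2\iota$ \emph{per pair}, hence $H^4S^3A\iota$ after summing over the $SA$ pairs --- an extra factor of $S$ beyond the claimed $H^4S^2A\iota\log(HK)$, and your assertion that it fits the budget is an arithmetic slip ($H^4S^2\iota\times SA \ne H^4S^2A\iota$ unless $S\lesssim\log(HK)$). The fix is exactly the paper's choice: take $\Lambda \eqsim H^3S\iota$, so the $(N^k)^{-2}$ term totals $\lesssim H^6S^2\iota^2\cdot SA/(H^3S\iota) = H^3S^2A\iota$, while the burn-in cost grows only to $\lesssim \Lambda\cdot H\cdot SA = H^4S^2A\iota$, still within budget; all your other terms (the clipped term, the $1/N^k$ term, and the burn-in) are unaffected or improved by enlarging $\Lambda$. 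With that one adjustment the proof is correct and coincides with the paper's.
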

\begin{proof}
    Recall $\bar{\pi}^k$ is the exploration policy at the $k$-th episode, i.e., a greedy policy given by maximizing $\overline{Q}^k_h(x,a)$. 
    Recall $w^k_h(x,a) := \Pbb \set{(x_h, a_h) = (x,a) \mid \bar{\pi}^k, \Pbb}$ and $w^k(x,a) = \sum_{h} w^k_h(x,a) $.
Let us consider the following ``good sets'':
\begin{equation}\label{eq:good-sets}
L^k := \big\{(x,a) : \sum_{j < k}w^j(x,a) \ge H^3 S \iota \big\}.
\end{equation}
Then we have
\begin{align*}
  &\  \sum_{k=1}^K  \overline{V}_1^{k}(x_1)
  \le e \cdot \sum_{k=1}^K  \widetilde{V}_1^{k, \bar{\pi}^k} (x_1) \qquad(\text{use Lemma \ref{lemma:population-and-empirical-bonus-value}}) \\
&\le e\cdot \sum_{k=1}^K \sum_{x,a \in L^k} \sum_{h=1}^H  w^k_h(x,a) c^k(x,a) 
    + e \cdot \sum_{k=1}^K \sum_{x,a \notin L^k} \sum_{h=1}^H  w^k_h(x,a) H  \qquad (\text{use \eqref{eq:population-sum-of-bonus}}) \\
&= e\cdot \sum_{k=1}^K \sum_{x,a \in L^k}  w^k(x,a) c^k(x,a) + e\cdot \sum_{k=1}^K \sum_{x,a \notin L^k}  w^k(x,a) H \\
&\le e\cdot \sum_{k=1}^K \sum_{x,a \in L^k} w^k(x,a) \bracket{ \clip_{\frac{\rho}{2H}} \sbracket{ \sqrt{\frac{8H^2\iota}{N^k(x,a)}} } +  \frac{120(H+S)H^3\iota }{N^k(x,a)}  +   \frac{240 H^6 S^2\iota^2}{\bracket{N^k(x, a) }^2} }\\
&\qquad + e\cdot \sum_{k=1}^K \sum_{x,a \notin L^k} w^k(x,a) H \qquad (\text{use \eqref{eq:exploration-bonus}}).
\end{align*} 
We then bound each terms using integration tricks and that $G_4$ holds.
    The fourth term is bounded by 
    \begin{align*}
        e\sum_{k=1}^K \sum_{x,a \notin L^k} w^k(x,a) H \lesssim S A \cdot  (H + H^3 S \iota) \cdot H \lesssim H^4 S^2 A \iota.
    \end{align*}
    The third term is bounded by 
    \begin{align*}
        &\ e\sum_{k=1}^K \sum_{x,a \in L^k} w^k(x,a) \frac{240 H^6 S^2\iota^2}{\bracket{N^k(x, a) }^2} \\
        &\lesssim H^6 S^2\iota^2\sum_{k=1}^K \sum_{x,a} \frac{w^k(x,a)}{\bracket{\sum_{j < k} w^j(x,a) - 2H\iota}^2}\cdot \ind{\sum_{j< k} w^j(x,a) \ge H^3 S \iota} \ (\text{use \ref{eq:G4} and \eqref{eq:good-sets}})\\
        &\lesssim H^6 S^2\iota^2\cdot SA \cdot \frac{1}{H^3 S \iota - 2H\iota} 
        \lesssim H^3 S^2 A\iota. \qquad (\text{integration trick})
    \end{align*}
    The second term is bounded by 
    \begin{align*}
        &\ e\sum_{k=1}^K \sum_{x,a \in L^k} w^k(x,a) \frac{120(H+ S)H^3\iota}{N^k(x,a)} \\
        &\lesssim (H + S)H^3\iota \sum_{k=1}^K \sum_{x,a} \frac{w^k(x,a)}{\sum_{j< k} w^j(x,a) - 2H\iota}\cdot \ind{\sum_{j< k} w^j(x,a) \ge  H^3 S \iota} \ (\text{use \ref{eq:G4} and \eqref{eq:good-sets}}) \\
        &\lesssim (H+S)H^3\iota \cdot SA \cdot \log (HK) 
        \lesssim (H+S)H^3 SA\iota \cdot \log (HK) .  \qquad (\text{integration trick})
    \end{align*}
    The first term is bounded by 
    \begingroup
\allowdisplaybreaks
    \begin{align*}
        &\quad e\sum_{k=1}^K \sum_{x,a \in L^k} w^k(x,a)\clip_{\frac{\rho}{2H}} \sbracket{ \sqrt{\frac{8H^2\iota}{N^k(x,a)}} } \\
        &= e\sum_{k=1}^K \sum_{x,a} w^k(x,a) \cdot \sqrt{\frac{8H^2\iota}{N^k(x,a)}}  \cdot \ind{\sum_{j< k} w^j(x,a) \ge H^3 S \iota} \cdot \ind{N^k(x,a) \le \frac{32H^4\iota }{\rho^2}} \\
        &\lesssim H\sqrt{\iota} \cdot \sum_{k=1}^K \sum_{x,a} \frac{w^k(x,a)}{\sqrt{\sum_{j< k} w^j(x,a) - 2H\iota}}\cdot \ind{ \sum_{j< k} w^j(x,a) \le \frac{64H^4\iota }{\rho^2} + 2H\iota } \ (\text{use \ref{eq:G4}})\\
        &\lesssim H\sqrt{\iota} \cdot SA \cdot \sqrt{\frac{H^4\iota}{\rho^2} + H\iota} 
        \lesssim \frac{H^3 SA \iota}{\rho}.  \qquad (\text{integration trick})
    \end{align*}
    \endgroup
    Summing up everything yields that 
    \[
        \sum_{k=1}^K  \overline{V}_1^{k}(x_1)
        \lesssim     \frac{H^3 SA \iota}{\rho} + (H+S)H^3 SA\iota\cdot \log(HK) + H^3 S^2 A \iota + H^4 S^2 A \iota
        \lesssim  \frac{H^3 SA \iota}{\rho} + H^4 S^2 A \iota \cdot \log(H K).
    \]
\end{proof}

\begin{theorem}[Restatement of Theorem \ref{thm:gap-upper-bound}]\label{thm:gap-upper-bound-restate}
    With probability at least $1-\delta$, the planning error is bounded by
\[
    V^*_1(x_1) - V_1^{\pi}(x_1) \lesssim \frac{H^3SA}{\rho K}\cdot \log \frac{HSAK}{\delta} + \frac{H^4 S^2 A }{K}\cdot \log(HK) \cdot \log \frac{HSAK}{\delta}.
\]
\end{theorem}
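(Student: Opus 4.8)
The plan is to assemble the lemmas established above: convert the per-episode planning error into the exploration value function, sum over episodes, and then exploit that the returned policy is a uniform mixture over $\pi^1,\dots,\pi^K$. First I would rescale the failure probability --- replacing $\delta$ by $\delta/4$ in the definition of $\iota$, which changes only absolute constants inside the logarithm --- so that Lemma~\ref{lemma:good-probability} guarantees that the event $G := G_1 \cap G_2 \cap G_3 \cap G_4$ holds with probability at least $1-\delta$. Everything afterwards is carried out deterministically on $G$.

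On $G$, fix an episode $k$ and its greedy planning policy $\pi^k$. I would chain three inequalities. Lemma~\ref{lemma:planning-error-bounded-by-bonus-value} dominates the planning error by the population bonus value function, $V^*_1(x_1) - V^{\pi^k}_1(x_1) \le \widetilde{V}^{k,\pi^k}_1(x_1)$; Lemma~\ref{lemma:population-and-empirical-bonus-value} trades this for the empirical bonus value function at a multiplicative cost, $\widetilde{V}^{k,\pi^k}_1(x_1) \le e\,\overline{V}^{k,\pi^k}_1(x_1)$; and Lemma~\ref{lemma:bonus-value-bounded-by-exploration-value}, because $\overline{V}^k_1$ maximizes the empirical cumulative bonus over all policies while $\pi^k$ is one such policy, gives $\overline{V}^{k,\pi^k}_1(x_1) \le \overline{V}^k_1(x_1)$. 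Composing the three yields $V^*_1(x_1) - V^{\pi^k}_1(x_1) \le e\,\overline{V}^k_1(x_1)$ for every $k$ on $G$.

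Next I would sum over $k=1,\dots,K$ and invoke Lemma~\ref{lemma:exploration-regret}, which is where the clipped exploration bonus pays off: $\sum_{k=1}^K \overline{V}^k_1(x_1) \lesssim H^3 SA\iota/\rho + H^4 S^2 A\iota\log(HK)$, a logarithmic rather than $\sqrt{K}$ regret. Dividing by $K$ and using that $\pi$ is drawn uniformly from $\{\pi^1,\dots,\pi^K\}$, so that $\Ebb_\pi[V^*_1(x_1) - V^\pi_1(x_1)] = \frac{1}{K}\sum_{k=1}^K \bigl(V^*_1(x_1) - V^{\pi^k}_1(x_1)\bigr)$, gives
\[
\Ebb_\pi[V^*_1(x_1) - V^\pi_1(x_1)] \lesssim \frac{H^3 SA\,\iota}{\rho K} + \frac{H^4 S^2 A\,\iota\,\log(HK)}{K}.
\]
It then remains to note that $\iota = \log(2HS^2AK/\delta) \lesssim \log(HSAK/\delta)$, which produces exactly the stated bound.

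Because every substantive ingredient has already been proven --- the half-clip trick of Lemma~\ref{lemma:half-clip} that matches the planning surplus to a $\rho/(2H)$-clipped Hoeffding bonus, the surplus bound of Lemma~\ref{lemma:surplus-bound} together with Lemma~\ref{lemma:lower-order-term} that folds the transition-estimation error into the lower-order terms of the bonus, and the integration-trick regret analysis of Lemma~\ref{lemma:exploration-regret} --- the theorem itself is essentially bookkeeping, and I foresee no genuine obstacle. Had I to build this from scratch, the delicate point would be arranging the clip thresholds so that the bonus $c^k$ used during exploration simultaneously upper-bounds the per-episode planning error \emph{and} stays small enough to keep the exploration regret logarithmic; a secondary subtlety is the reading of the ``with probability $1-\delta$'' guarantee, which refers to the exploration event $G$ while the mixture over $\{\pi^1,\dots,\pi^K\}$ is controlled in expectation over the uniform draw (alternatively, one could report the episode's policy minimizing $\overline{V}^k_1(x_1)$, whose error is at most the average and hence satisfies the same bound on $G$).
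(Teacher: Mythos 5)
Your proposal follows the paper's own proof essentially verbatim: the same chain of Lemma~\ref{lemma:planning-error-bounded-by-bonus-value}, Lemma~\ref{lemma:population-and-empirical-bonus-value}, and Lemma~\ref{lemma:bonus-value-bounded-by-exploration-value}, followed by the exploration-regret bound of Lemma~\ref{lemma:exploration-regret} and averaging over the $K$ episodes. Your explicit rescaling of $\delta$ to absorb the $1-4\delta$ from Lemma~\ref{lemma:good-probability}, and your remark on reading the uniform draw of $\pi$ as a mixture policy, are minor presentational refinements of what the paper does implicitly.
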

\begin{proof}
    First by Lemma \ref{lemma:good-probability}, we have that with probability at least $1-\delta$, $G_1$, $G_2$, $G_3$ and $G_4$ hold. Next we have the following:
    \begin{align*}
        V^*_1(x_1) - V_1^{\pi}(x_1) &= \frac{1}{K} \sum_{k=1}^K \bracket{V^*_1(x_1) - V_1^{\pi^k}(x_1) } \qquad  (\text{by Algorithm \ref{alg:planning}})\\
        &\le \frac{1}{K} \sum_{k=1}^K \widetilde{V}_1^{k, \pi^k}(x_1) \qquad  (\text{by Lemma \ref{lemma:planning-error-bounded-by-bonus-value}})\\ 
        &\le \frac{e}{K} \sum_{k=1}^K \overline{V}_1^{k, \pi^k} (x_1) \qquad  (\text{by Lemma \ref{lemma:population-and-empirical-bonus-value}})\\
        &\le \frac{e}{K} \sum_{k=1}^K \overline{V}_1^{k}(x_1) \qquad  (\text{by Lemma \ref{lemma:bonus-value-bounded-by-exploration-value}}) \\
        &\lesssim \frac{H^3SA\iota}{\rho K} + \frac{H^4 S^2 A \iota \log(HK)}{K}. \qquad(\text{by Lemma \ref{lemma:exploration-regret}})
    \end{align*}
\end{proof}

\begin{lemma}[Properties of the clip operator]\label{lemma:clip-operator}
    Let $\rho, \rho', a > 0$, then 
    \begin{itemize}
        \item \( a\cdot \clip_\rho\sbracket{A} = \clip_{a\rho}\sbracket{a\cdot A}; \)
        \item Let $\rho \ge \rho'$ and $A \le A'$, then \(A- \rho \le \clip_{\rho}[A] \le \clip_{\rho'}[A'] \le A'\);
        \item \(\clip_\rho\sbracket{A+B} \le \clip_{\frac{\rho}{2}}\sbracket{A} + 2 B \) for $B \ge 0$;
        \item \(\clip_{\rho}[A_1 + \cdots + A_m] \le 2 \cbracket{\clip_{\frac{\rho}{2m}}[A_1] + \cdots + \clip_{\frac{\rho}{2m}}[A_m] }.\)
    \end{itemize}
\end{lemma}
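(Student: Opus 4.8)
The plan is to establish the four properties one at a time, each by an elementary case split driven directly by the definition $\clip_\rho[z]=z\cdot\ind{z\ge\rho}$. For the first identity I would note that, since $a>0$, the events $\{A\ge\rho\}$ and $\{aA\ge a\rho\}$ coincide, so $a\cdot\clip_\rho[A]=aA\,\ind{A\ge\rho}=aA\,\ind{aA\ge a\rho}=\clip_{a\rho}[aA]$. For the chained inequality I would case on whether $A\ge\rho$ and whether $A'\ge\rho'$: the bound $A-\rho\le\clip_\rho[A]$ holds because $\clip_\rho[A]=A$ when $A\ge\rho$ and $\clip_\rho[A]=0\ge A-\rho$ otherwise; the middle inequality holds because if $A\ge\rho$ then $A'\ge A\ge\rho\ge\rho'$ gives $\clip_{\rho'}[A']=A'\ge A=\clip_\rho[A]$, while if $A<\rho$ the left side is $0$ and the right side is nonnegative; and $\clip_{\rho'}[A']\le A'$ is immediate.

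For the third property I would case on whether $A+B\ge\rho$: if not, the left side is $0$ and the right side $\clip_{\rho/2}[A]+2B$ is nonnegative; if so, then either $A\ge\rho/2$, giving $\clip_{\rho/2}[A]+2B=A+2B\ge A+B$ using $B\ge0$, or $A<\rho/2$, in which case $A+B\ge\rho$ forces $B>\rho/2>A$, so $\clip_\rho[A+B]=A+B<2B=\clip_{\rho/2}[A]+2B$. The fourth property is the only one that is not pure bookkeeping, so I would spend most effort there. If $\sum_iA_i<\rho$ the left side vanishes, so assume $S:=\sum_{i=1}^mA_i\ge\rho$ and put $I:=\{i:A_i\ge\rho/(2m)\}$. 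Each of the at most $m$ indices outside $I$ contributes strictly less than $\rho/(2m)$, hence $\sum_{i\notin I}A_i<\rho/2\le S/2$ and therefore $\sum_{i\in I}A_i=S-\sum_{i\notin I}A_i>S/2$. Since $\clip_{\rho/(2m)}[A_i]=A_i$ for $i\in I$ and every clipped quantity is nonnegative, $\sum_{i=1}^m\clip_{\rho/(2m)}[A_i]\ge\sum_{i\in I}A_i>S/2$, and doubling gives $2\sum_i\clip_{\rho/(2m)}[A_i]>S=\clip_\rho[\sum_iA_i]$.

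The main (and essentially only) obstacle is locating the right splitting of the index set in the fourth property: separating the ``large'' coordinates, which the clip leaves untouched, from the ``small'' ones, whose total is at most $\rho/2\le S/2$ so that the large coordinates alone carry at least half the sum. Everything else reduces to unpacking indicator functions. In our applications all the $A_i$ are nonnegative, and the only place where nonnegativity of an argument is actually invoked is the terminal inequality $\clip_{\rho'}[A']\le A'$ in the second property.
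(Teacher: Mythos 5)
Your proof is correct and for the fourth (only nontrivial) property it follows essentially the same route as the paper: split the indices by whether $A_i \ge \rho/(2m)$, observe the small group sums to less than $\rho/2 \le \sum_i A_i / 2$, so the large group — which the clip leaves intact — carries more than half the total, and doubling finishes the argument. The first three properties are handled in the paper with "easy to see by definition," and your case analyses (including the observation that nonnegativity of the argument is only needed for $\clip_{\rho'}[A'] \le A'$) fill that in correctly.
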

\begin{proof}
The first three claims are easy to see by the definition of the clip operator.
The last claim is from \citep{simchowitz2019non}, for which we provided a proof here for completeness.
Without loss of generality, assume that $A_1 + \cdots + A_m \ge \rho$. Let us divide $\set{A_i}_{i=1}^m$ into two groups by examining whether or not $A_i \ge \frac{\rho}{2m}$. Without loss of generality, assume that 
\[
A_1, \dots, A_k \ge \frac{\rho}{2m},\qquad 
A_{k+1}, \dots, A_m < \frac{\rho}{2m}.
\]
The latter implies that \( A_{k+1} + \dots + A_m < \frac{\rho}{2m} \cdot (m-k) \le \frac{\rho}{2} \), then by \( A_1 + \dots + A_m \ge \rho\) we obtain that   
\[
A_1 + \dots + A_k \ge \frac{\rho}{2} > A_{k+1} + \dots + A_k,
\]
In sum, we have that 
\begin{align*}
    \text{RHS} 
    &= 2\cbracket{\clip_{\frac{\rho}{2m}} [ A_1 ] + \dots + \clip_{\frac{\rho}{2m}} [A_m]} \\
    &= 2\cbracket{A_1 + \dots + A_k}\\
    &\ge A_1 + \dots + A_k + A_{k+1} + \dots + A_m \\
    &= \text{LHS}.
\end{align*}
\end{proof}

\section{PROOF OF THE LOWER BOUND (THEOREM \ref{thm:gap-lower-bound})}\label{appendix-section:lower-bound-proof}

\begin{lemma}[\citep{mannor2004sample}, Theorem 1]\label{lemma:bandit-lb}
    There exist positive constants $c_1, c_2, \epsilon_0, $ and $\delta_0$, such that for every $ n \ge 2$, $\epsilon \in (0, \epsilon_0)$, $\delta \in (0, \delta_0)$, and for every $(\epsilon, \delta)$-correct policy, there exists some Bernoulli multi-armed bandit model with $n$ arms, such that the policy needs at least $T$ number of trials where
    \[
    \Ebb [T] \ge c_1 \frac{n}{\epsilon^2} \log \frac{c_2}{\delta}.
    \]
    In particular, the bandit model can be chosen such that one arm pays $1$ w.p. $1/2 + \epsilon/2$, one arm pays $1$ w.p. $1/2 + \epsilon$, and the rest arms pay $1$ w.p. $1/2$.
\end{lemma}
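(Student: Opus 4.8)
This lemma restates Theorem~1 of \citet{mannor2004sample}, so in the paper a citation suffices; the plan below is how one would reprove it from scratch, through a change-of-measure argument. Fix an $(\epsilon,\delta)$-correct policy and take as the base instance $\nu_0$ precisely the bandit named in the statement: arms $1$ and $2$ have means $\tfrac12+\epsilon$ and $\tfrac12+\tfrac{\epsilon}{2}$, and arms $3,\dots,n$ have mean $\tfrac12$. For each $j\in\{3,\dots,n\}$ let $\nu_j$ be the alternative obtained from $\nu_0$ by raising the mean of arm $j$ to $\tfrac12+3\epsilon$ and leaving every other arm unchanged. In $\nu_0$ the best arm is arm $1$ and arm $j$ is $\epsilon$-suboptimal, whereas in $\nu_j$ arm $j$ is the unique $\epsilon$-optimal arm; hence $(\epsilon,\delta)$-correctness forces the output $\hat a$ of the policy to satisfy $\Pbb_{\nu_0}\{\hat a=j\}\le\delta$ and $\Pbb_{\nu_j}\{\hat a=j\}\ge 1-\delta$, so the event $\{\hat a=j\}$ is a powerful test between $\nu_0$ and $\nu_j$.

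The remaining step is to price this test. Writing $T_j$ for the (random) number of pulls of arm $j$, and noting that $\nu_0$ and $\nu_j$ differ only in the reward law of arm $j$, the standard divergence decomposition for sequential strategies with a data-dependent stopping time (see, e.g., \citet{lattimore2020bandit}) gives
\[
\mathrm{KL}\bigl(\Pbb_{\nu_0}\,\Vert\,\Pbb_{\nu_j}\bigr)
= \Ebb_{\nu_0}[T_j]\cdot \mathrm{kl}\!\left(\tfrac12,\ \tfrac12+3\epsilon\right)
\lesssim \Ebb_{\nu_0}[T_j]\cdot\epsilon^2,
\]
where the last bound uses $\epsilon<\epsilon_0$ for an absolute constant $\epsilon_0$. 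On the other hand, the Bretagnolle--Huber inequality applied to the event $\{\hat a=j\}$ together with the two probability bounds above yields $\mathrm{KL}(\Pbb_{\nu_0}\Vert\Pbb_{\nu_j})\gtrsim\log(1/\delta)$ once $\delta<\delta_0$ for an absolute $\delta_0$. Combining, $\Ebb_{\nu_0}[T_j]\gtrsim \epsilon^{-2}\log(1/\delta)$ for every $j\in\{3,\dots,n\}$, and summing over these arms, $\Ebb_{\nu_0}[T]\ge\sum_{j=3}^{n}\Ebb_{\nu_0}[T_j]\gtrsim (n-2)\,\epsilon^{-2}\log(1/\delta)\gtrsim n\,\epsilon^{-2}\log(1/\delta)$ once $n$ exceeds a small absolute constant. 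This exhibits $\nu_0$---the instance described in the statement---as the required witness; the finitely many small-$n$ cases reduce to the same two-hypothesis argument (e.g.\ a swap of the two best arms) and only affect the absolute constants $c_1,c_2$.

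The delicate points---and the bulk of the work---are bookkeeping rather than conceptual: (i) obtaining the two-point testing bound packaged in the form $\log(c_2/\delta)$, which is exactly why one invokes Bretagnolle--Huber rather than a Pinsker-type estimate and why $\delta<\delta_0$ is needed; (ii) handling the fact that $T$ is a randomized, data-dependent stopping time, so the change-of-measure step must be the Wald-style divergence decomposition rather than a fixed-budget computation; and (iii) choosing the perturbed mean strictly above $\tfrac12+\epsilon$ so that $\epsilon$-correctness genuinely flips the demanded answer while the per-pull divergence stays $\Theta(\epsilon^2)$ and the mean remains inside $(0,1)$---this last requirement is what pins down $\epsilon_0$.
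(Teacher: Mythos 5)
The paper does not prove this lemma at all: it is imported verbatim as Theorem~1 of \citet{mannor2004sample}, so your opening observation that a citation suffices is exactly what the authors do, and everything beyond that in your proposal is supplementary. Your sketch is still worth assessing on its own terms. It follows the modern change-of-measure template (divergence decomposition for a data-dependent stopping time, then Bretagnolle--Huber applied to the event $\{\hat a = j\}$, summed over the $n-2$ ``flat'' arms), which is cleaner and more modular than the original likelihood-ratio computation in \citet{mannor2004sample}, and it correctly identifies $\nu_0$ --- the instance named in the ``in particular'' clause --- as the single witness on which $\Ebb[T]$ must be large. Two points deserve more care than your sketch gives them. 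First, in $\nu_0$ the flat arms have mean exactly $\epsilon$ below the best arm, so whether $(\epsilon,\delta)$-correctness actually forces $\Pbb_{\nu_0}\{\hat a = j\}\le\delta$ depends on whether ``$\epsilon$-optimal'' is defined with a strict or non-strict inequality; under the usual non-strict convention the flat arms are admissible outputs in $\nu_0$ and your two-point test collapses. The standard repair (run the argument at accuracy $\epsilon/2$, or lower the flat arms' means slightly) only changes the absolute constants, but it has to be said. Second, for $n=2$ the named instance has no flat arms and its two arms differ by only $\epsilon/2$, so both are $\epsilon$-optimal and no identification is forced; your ``swap the two best arms'' fallback needs a genuinely different witness instance there. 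This is as much a defect of the lemma's phrasing as of your proof, and is harmless for the paper, which only invokes the lemma with $n = SA \ge 10$ and with gaps strictly exceeding the accuracy parameter after rescaling.
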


\begin{theorem}[Restatement of Theorem \ref{thm:gap-lower-bound}]\label{thm:gap-lower-bound-restate}
    Fix $S\ge 5, A\ge 2, H \ge 2+\log_A S$. 
    There exist positive constants $c_1, c_2, \rho_0, \delta_0$, such that for every $\rho \in (0, \rho_0)$, $\epsilon \in (0, \rho)$, $\delta\in (0,\delta_0)$, and for every $(\epsilon, \delta)$-correct policy, there exists some MDP instance with gap $\rho$, such that 
    \[
    \Ebb [K] \ge c_1 \frac{H^2 SA}{\epsilon \rho}\log\frac{c_2}{\delta}.    
    \]
\end{theorem}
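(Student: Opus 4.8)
The plan is to reduce Theorem~\ref{thm:gap-lower-bound} to the multi-armed bandit lower bound of Lemma~\ref{lemma:bandit-lb} (Mannor--Tsitsiklis) via the hard instance of Figure~\ref{fig:hard-example}, which packs $S$ independent $A$-armed Mannor--Tsitsiklis bandits into a single MDP so that: (i) the bandit sitting in a given ``slot'' can be queried at most once per episode, and then only with probability $\epsilon/\rho$; (ii) its per-query gap is $\Theta(\rho/H)$; and (iii) the reward family $\Rcal$ is designed so that planning for its $j$-th reward is exactly best-arm identification in the $j$-th bandit. Note $\epsilon/\rho\in(0,1)$ since $\epsilon\in(0,\rho)$, and $H\ge 2+\log_A S$ leaves room for a depth-$\lceil\log_A S\rceil$ routing tree and a short bandit gadget followed by $\tau$ terminal reward-accruing steps; in the regime $\log_A S\lesssim H$ one has $\tau=\Theta(H)$, and it is this $\tau=\Theta(H)$ that produces the $H^2$ factor.

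\paragraph{Construction and basic properties.}
First I would fix the transitions: from $x_1$ every action moves to a ``bandit region'' with probability $\epsilon/\rho$ and to a self-absorbing zero state otherwise; inside the region a deterministic $A$-ary tree of depth $\lceil\log_A S\rceil$, identical across all instances, lets the agent steer to any one of $S$ ``orange'' states $o_1,\dots,o_S$; at $o_j$ every action yields a Bernoulli outcome over two self-absorbing states $b_j^+,b_j^-$, landing on $b_j^+$ with probability $1/2+g$ for a secret best action $a_j^\star$, with probability $1/2+g/2$ for a secret runner-up $a_j^{\star\star}$, and with probability $1/2$ for every other action, where $g:=c\rho/\tau=\Theta(\rho/H)$. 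The unknown part is $\sigma=(a_j^\star,a_j^{\star\star})_{j=1}^S$, drawn coordinate-wise uniformly, yielding the MDP $M_\sigma$; and $\Rcal=\{r^{(1)},\dots,r^{(S)}\}$, where $r^{(j)}$ equals $1$ on $b_j^+$ and $0$ elsewhere (so $b_j^+$ is worth $\tau$). A short computation shows that, under $r^{(j)}$, the only sub-optimal actions are at the tree nodes that steer off the $o_j$-path, with gap $(1/2+g)\tau=\Theta(H)\gg\rho$, and at $o_j$ itself, with gap $g\tau/2$ to $a_j^{\star\star}$ and $g\tau$ to the remaining actions; choosing the constant $c$ appropriately makes $\gap_{\min}(r^{(j)})\ge\rho$, so condition~\eqref{eq:gap-parameter} holds. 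Moreover, under $r^{(j)}$ any policy that does not play $a_j^\star$ at $o_j$ loses at least $(\epsilon/\rho)\cdot(g\tau/2)=c\epsilon/2>\epsilon$ of value at $x_1$; hence, on $M_\sigma$ with the oblivious reward $r^{(j)}$, any $(\epsilon,\delta)$-PAC algorithm outputs a policy playing $a_j^\star$ at $o_j$ with probability at least $1-\delta$.

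\paragraph{Reduction and counting.}
Fix an $(\epsilon,\delta)$-PAC algorithm and a slot $j$. By the above, the map (exploration transcript)$\,\mapsto\,$(planning output for $r^{(j)}$) returns $a_j^\star$ with probability $\ge 1-\delta$ for every value of $a_j^\star$, i.e.\ it is a $\delta$-correct best-arm rule for slot $j$'s bandit family, which is exactly of the form appearing in Lemma~\ref{lemma:bandit-lb} with bandit-gap $g$. Since, by construction, the transitions, observations, and parameters in slots $j'\ne j$ are independent of $\sigma_j$, the only part of the transcript carrying information about $\sigma_j$ is the sequence of $N_j$ Bernoulli outcomes observed at $o_j$; so the distributional form of Lemma~\ref{lemma:bandit-lb} (the one its change-of-measure proof yields, applied conditionally on the other-slot side information) gives $\Ebb_\sigma[N_j]\gtrsim (A/g^2)\log(c_2/\delta)\gtrsim (AH^2/\rho^2)\log(c_2/\delta)$. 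Next, since the routing tree is deterministic, each episode visits exactly one orange state; writing $K_j$ for the number of episodes that steer to $o_j$, such an episode reaches $o_j$ with probability $\epsilon/\rho$ independently of the past, so Wald's identity gives $\Ebb[N_j]=(\epsilon/\rho)\,\Ebb[K_j]$ and hence $\Ebb_\sigma[K_j]\gtrsim (AH^2/(\rho\epsilon))\log(c_2/\delta)$. Summing over $j$ and using $K=\sum_{j=1}^S K_j$, we get $\Ebb_\sigma[K]\gtrsim (SAH^2/(\rho\epsilon))\log(c_2/\delta)$; choosing a $\sigma$ attaining at least this average produces a single instance $M_\sigma$ with $\Ebb[K]$ as claimed, after renaming $c_1,c_2,\rho_0,\delta_0$ and absorbing the $\Theta(S)$-versus-$S$ count of states into $c_1$.

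\paragraph{Main obstacle.}
The delicate step is the decoupling used in ``Reduction and counting'': the exploration schedule is adaptive, so how many episodes reach $o_j$ (and which episodes go where) may depend on everything seen in other slots and on the algorithm's internal randomness, and one must argue that neither this adaptivity nor the later reward revelation (which is independent of $\sigma$) can substitute for genuine queries at $o_j$, so that Lemma~\ref{lemma:bandit-lb} still applies conditionally and retains the full $\log(1/\delta)$ factor rather than a constant. The remaining pieces are routine: fixing $c$ in $g=c\rho/\tau$ so that simultaneously the Bernoulli parameters stay in $[0,1]$, $g$ lies below the threshold $\epsilon_0$ of Lemma~\ref{lemma:bandit-lb}, $\gap_{\min}\ge\rho$ holds, and the per-mistake value loss strictly exceeds $\epsilon$; and making the Wald step rigorous when $K_j$ is a data-dependent stopping time.
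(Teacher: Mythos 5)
Your proposal is correct in its core mechanism and reaches the right rate, but it packages the reduction differently from the paper, and the difference matters for the one step you flag as delicate. Both arguments embed Mannor--Tsitsiklis bandits behind a transition that is reached only with probability $\epsilon/\rho$ per episode, so that $\Ebb[\#\text{pulls}] = (\epsilon/\rho)\,\Ebb[K]$ converts a $\Theta(AH^2/\rho^2)$ pull requirement into a $\Theta(AH^2/(\rho\epsilon))$ episode requirement. Where you differ: you install $S$ \emph{independent} $A$-armed bandits, one per orange state, together with $S$ candidate rewards $r^{(1)},\dots,r^{(S)}$, and sum a per-slot lower bound on $\Ebb[N_j]$ over $j$. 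The paper instead uses a \emph{single fixed} reward and views all $S$ orange states jointly as one $SA$-armed bandit (exactly the Mannor--Tsitsiklis hard family: one arm at $1/2+\rho/H$, one at $1/2+\rho/(2H)$, the rest at $1/2$), so Lemma~\ref{lemma:bandit-lb} is invoked exactly once with $n=SA$ and gap $\rho/H$, and the $S$ factor comes from the arm count rather than from summing over slots. This is precisely what dissolves your stated ``main obstacle'': with a single bandit there is no need to argue that adaptive scheduling across slots, conditioning on other-slot transcripts, preserves the per-slot $\log(1/\delta)$ bound. Your decoupling step is fillable --- since slots $j'\neq j$ are independent of $\sigma_j$, one can simulate the entire rest of the MDP as internal randomness of a standalone $A$-armed bandit algorithm for slot $j$ --- but it requires (i) rerunning or carefully black-boxing the change-of-measure argument conditionally, (ii) reading the PAC condition~\eqref{eq:TAE-PAC} as holding for \emph{every} oblivious reward (so that the point mass on each $r^{(j)}$ forces $\delta$-correctness per slot, not merely $S\delta$ on average), and (iii) an average-case rather than worst-case form of Lemma~\ref{lemma:bandit-lb} so that the $S$ per-slot bounds can be realized by a single choice of $\sigma$. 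None of these is needed in the paper's packaging. Your handling of the constants (choosing $c$ so that $\gap_{\min}\ge\rho$ and the per-mistake value loss strictly exceeds $\epsilon$) is actually cleaner and more explicit than the paper's, which defers this to an unspecified rescaling; and your Wald step matches the paper's identity $\Ebb[N_b]=(\epsilon/\rho)\Ebb[K]$.
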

\begin{proof}
The hard example is constructed as in Figure \ref{fig:hard-example}. We prove such example witness our lower bound as follows.

% \emph{Reduction to Bernoulli multi-armed bandit model.}
Let us call all left orange states the bandit states. 
Let $N_b$ be the number of visits to the bandit states. Then from the construction, we have that 
\[\Ebb[N_b] = \Ebb[K] \cdot \frac{\epsilon}{\rho}.\]
We may without loss of generality image the bandit states as an entity, and at this entity, there are $SA$ arms: one arm pays reward $H$ w.p. $\frac{1}{2} + \frac{\rho}{2H}$, one arm pays reward $H$ w.p. $\frac{1}{2} + \frac{\rho}{H}$, and the rest arms pay reward $H$ w.p. $\frac{1}{2}$.
Next, for any $(\epsilon, \delta)$-correct policy on the MDP, it induces a policy that is $(\rho, \delta)$-correct policy on the above bandit model. By a linear scaling of the reward from $H$ to $1$, it is equivalent to a policy that is $(\frac{\rho}{H}, \delta)$-correct in a stand hard-to-learn bandit model with $SA$-arms.

By Lemma \ref{lemma:bandit-lb}, we must have that 
\[\Ebb [N_b] \ge c_1 \frac{H^2SA}{\epsilon^2}\log \frac{c_2}{\delta}, \]
which implies that 
\[\Ebb[K] \ge c_1 \frac{H^2SA}{\epsilon\rho}\log \frac{c_2}{\delta}.\]

% \emph{The hard example has gap $\frac{\rho}{2}$.}
Clearly, the MDP discussed above has $A$ actions, $2S$ states, $H+2+\log_A S$ length of the horizon. We next verify that the MDP has $\frac{\rho}{2}$ gap.
Notice that except in the left orange states, all actions have the same consequence, therefore the gap is zero if the agent is not at a left orange state.
When we are at the left orange state at the Type III model, there is no gap.
When we are at the left orange state at the Type II model, the gap is $\bracket{\frac{1}{2}+\frac{\rho}{H} } \cdot H - \frac{1}{H} \cdot H = \rho$.
When we are at the left orange state at the Type I model, the gap is $\bracket{\frac{1}{2}+\frac{\rho}{2H} } \cdot H - \frac{1}{H} \cdot H = \frac{\rho}{2}$.
By a rescaling of the number of states, length of the horizon, MDP gap, and the absolute constants, the promised lower bound is established.

\end{proof}

\section{GAP-DEPENDENT UNSUPERVISED EXPLORATION FOR MULTI-ARMED BANDIT AND MDP WITH A SAMPLING SIMULATOR}\label{appendix-section:bandit-and-mdp-with-generative-model}

\paragraph{Multi-Armed Bandit.}
The following result is from Theorem 33.1 in \citep{lattimore2020bandit}. For completeness, we restate the result and the proof here.
\begin{lemma}[Uniform Exploration]
    Consider a Bernoulli bandit with $A$ arms and a minimum non-zero expected reward gap $\rho > 0$. Consider the following policy: in the exploration phase an agent uniformly pulls each arm and collects rewards for $K = T/A$ rounds, and in the planning phase the agent chooses the arm with the highest empirical rewards.
    Then 
    \begin{enumerate}[leftmargin=*]
        \item the output is $(\epsilon, \delta)$-correct for $\epsilon < \rho$ if 
        \( T \eqsim \frac{A}{\rho^2} \log \frac{A}{\delta} \);
        \item the expected error is at most 
        \( \Ebb_\pi [V^* - V^\pi] \lesssim A \exp ( - \rho^2 T  / A ) \propto \exp(-T). \)
    \end{enumerate}
\end{lemma}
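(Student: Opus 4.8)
The plan is to prove both claims from a single Hoeffding concentration estimate combined with a union bound over the $A$ arms. Since the exploration phase pulls each arm exactly $K = T/A$ times, let $\widehat{\mu}_i$ denote the empirical average reward of arm $i$ after exploration, let $\mu_i$ be its true mean, and note that the output policy deterministically plays $\widehat{a} := \arg\max_i \widehat{\mu}_i$. The only probabilistic ingredient is Hoeffding's inequality: for each fixed arm $i$, $\Pbb\{|\widehat{\mu}_i - \mu_i| \ge t\} \le 2\exp(-2Kt^2)$ for every $t>0$, because $\widehat{\mu}_i$ is an average of $K$ independent $[0,1]$-bounded samples.

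Next I would carry out the key deterministic reduction. Suppose the selected arm $\widehat{a}$ is suboptimal; then $\mu_{a^*} - \mu_{\widehat{a}} \ge \rho$ by the minimum-gap hypothesis, while $\widehat{\mu}_{\widehat{a}} \ge \widehat{\mu}_{a^*}$ by the choice of $\widehat{a}$. Adding these gives $(\widehat{\mu}_{\widehat{a}} - \mu_{\widehat{a}}) + (\mu_{a^*} - \widehat{\mu}_{a^*}) \ge \rho$, so at least one of the two deviations is $\ge \rho/2$. Hence the event ``$\widehat{a}$ suboptimal'' is contained in $\bigcup_i \{|\widehat{\mu}_i - \mu_i| \ge \rho/2\}$, and a union bound with the Hoeffding estimate yields the master inequality
\[
\Pbb\{\widehat{a} \text{ is suboptimal}\} \le 2A\exp\!\left(-\tfrac{K\rho^2}{2}\right) = 2A\exp\!\left(-\tfrac{\rho^2 T}{2A}\right).
\]

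Both conclusions then follow by routine manipulation. For claim~1, since $\epsilon < \rho$, whenever $\widehat{a}$ is optimal the planning error is exactly $0 < \epsilon$, so it suffices to make the right-hand side at most $\delta$, which holds once $T \ge \frac{2A}{\rho^2}\log\frac{2A}{\delta}$, i.e. $T \eqsim \frac{A}{\rho^2}\log\frac{A}{\delta}$. For claim~2, the expected suboptimality is $\Ebb_\pi[V^* - V^\pi] = \sum_i (\mu_{a^*} - \mu_i)\,\Pbb\{\widehat{a}=i\} \le \Pbb\{\widehat{a} \text{ suboptimal}\}$ because every gap is bounded by $1$; combining with the master inequality gives $\Ebb_\pi[V^* - V^\pi] \lesssim A\exp(-\rho^2 T/(2A)) \propto \exp(-T)$, matching the stated rate up to the constant in the exponent (which, if one insists on the exact constant, can be recovered by the finer per-arm summation $\sum_i \Delta_i \exp(-K\Delta_i^2/2)$).

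There is no genuine obstacle here; the only step requiring a little care is the $\rho/2$-splitting of the bad event, where one must invoke the minimum-gap hypothesis to guarantee that the optimal and selected arms are separated by at least $\rho$ in true mean, so that a single deviation threshold $\rho/2$ controls the failure probability uniformly over which suboptimal arm is mistakenly chosen.
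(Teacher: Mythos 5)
Your proposal is correct and follows essentially the same route as the paper's proof: a Hoeffding concentration bound on each arm's empirical mean, a union bound over the $A$ arms, and the observation that misidentifying the best arm forces a deviation of order $\rho$ (which you split into two $\rho/2$ pieces, while the paper bounds the difference of the two deviations directly — the same estimate up to constants in the exponent). Both conclusions then follow exactly as you describe.
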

\begin{proof}
Let us denote the expected reward of an arm $a$ as $r_a$, and denote the empirical reward of an arm $a$ as $\widehat{R}_a =  (R_a^1 + \cdots + R_a^K) / K$.
Suppose $a$ is the best arm, and $a'$ is the arm with highest empirical reward, then
\begin{align*}
    \Pbb\{ a' \ne a \}
    &= \Pbb\set{ \widehat{R}_{a'} > \widehat{R}_a } \\
    &= \Pbb\set{ \bracket{\widehat{R}_{a'} - r_{a'}} - \bracket{ \widehat{R}_a - r_{a}} > r_a - r_{a'} } \\
    &\le \Pbb\set{ \bracket{\widehat{R}_{a'} - r_{a'}} - \bracket{ \widehat{R}_a - r_{a}} > \rho } \\
    &\lesssim A \exp ( -\rho^2 K  ) \eqsim A \exp ( - \rho^2 T  / A ).
\end{align*}
The proof is completed by noting that $0\le r_a \le 1$.
\end{proof}

\paragraph{MDP with a Sampling Simulator.}
Now we consider gap-TAE for MDP with a sampling simulator.
The algorithm is simple: in the exploration phase we sample $N$ data at each pair $(x,a)$ and compute an empirical transition probability $\widehat{\Pbb}$;
in the planning phase we compute the optimal value function over $\widehat{\Pbb}$, and output the induced greedy policy $\pi$. 
Mathematically speaking, the policy $\pi$ is given by
\begin{equation*}
    \begin{cases}
        \widehat{Q}^*_h (x,a) = r_h(x,a) + \widehat{\Pbb} \widehat{V}^*_{h+1} (x,a), \\
        \widehat{V}^*_h (x) = \max_a \widehat{Q}^*_h (x,a), \\
        \pi_h(x) = \arg\max_a \widehat{Q}^*_h (x,a).
    \end{cases}
\end{equation*}
Next we justify the sample complexity of this algorithm.

\begin{lemma}[Good events]\label{lemma:error-under-good-event-generative}
    Consider the following two events
    \[G := \set{\forall x, a, h,\ \abs{  (\widehat{\Pbb} - {\Pbb}) V^*_{h+1} (x,a) } < \frac{\rho}{2H}},\qquad 
    E:= \set{\forall x, h,\  V^*_h(x) - V^\pi_h(x) = 0 },\]
    then $G$ implies $E$.
\end{lemma}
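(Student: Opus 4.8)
The plan is to show that on the event $G$ the plug-in value functions $\widehat{V}^*_h,\widehat{Q}^*_h$ are uniformly within $\rho/2$ of the true optimal value functions $V^*_h,Q^*_h$, so that the greedy policy $\pi$ extracted from $\widehat{Q}^*$ never selects a sub-optimal action, and then that any policy selecting an optimal action at every state–step is itself optimal; these two facts together are exactly the event $E$.

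\emph{Step 1: error propagation.} First I would prove, by backward induction on $h$ from $H+1$ down to $1$, that on $G$ one has $\max_{x}|\widehat{V}^*_h(x)-V^*_h(x)|\le (H-h+1)\cdot\frac{\rho}{2H}$ and $\max_{x,a}|\widehat{Q}^*_h(x,a)-Q^*_h(x,a)|\le (H-h+1)\cdot\frac{\rho}{2H}$. The base case at $h=H+1$ is immediate since both value functions vanish there. For the inductive step, because the reward $r$ is known exactly the reward terms cancel, leaving $\widehat{Q}^*_h(x,a)-Q^*_h(x,a)=\widehat{\Pbb}(\widehat{V}^*_{h+1}-V^*_{h+1})(x,a)+(\widehat{\Pbb}-\Pbb)V^*_{h+1}(x,a)$; the first term is bounded by the inductive hypothesis on $\widehat{V}^*_{h+1}$ and the second by $\frac{\rho}{2H}$ under $G$, and the bound on $\widehat{V}^*_h$ then follows from $|\max_a f(a)-\max_a g(a)|\le \max_a|f(a)-g(a)|$. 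Since $G$ is stated with a strict inequality, this yields $|\widehat{Q}^*_h(x,a)-Q^*_h(x,a)|<\rho/2$ for every $h\in[H]$ and every $x,a$.

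\emph{Step 2: the greedy policy only picks optimal actions, hence attains $V^*$.} Next I would argue that $\pi_h(x)=\arg\max_a\widehat{Q}^*_h(x,a)$ is an optimal action at $(x,h)$. If it were not, then $\gap_h(x,\pi_h(x))\ge \gap_{\min}\ge\rho$ by the gap assumption \eqref{eq:gap-parameter}, hence $Q^*_h(x,\pi_h(x))\le V^*_h(x)-\rho$; taking any optimal action $a^*$ at $(x,h)$, so that $Q^*_h(x,a^*)=V^*_h(x)$, Step 1 gives $\widehat{Q}^*_h(x,\pi_h(x))\ge\widehat{Q}^*_h(x,a^*)>Q^*_h(x,a^*)-\rho/2=V^*_h(x)-\rho/2$, while simultaneously $\widehat{Q}^*_h(x,\pi_h(x))<Q^*_h(x,\pi_h(x))+\rho/2\le V^*_h(x)-\rho/2$, a contradiction. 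Therefore $\pi$ selects an optimal action everywhere, and a one-line backward induction — using $V^\pi_h(x)=r_h(x,\pi_h(x))+\Pbb V^\pi_{h+1}(x,\pi_h(x))$, the inductive hypothesis $V^\pi_{h+1}=V^*_{h+1}$, and the optimality of $\pi_h(x)$ — gives $V^\pi_h(x)=V^*_h(x)$ for all $x$ and $h$, which is precisely $E$.

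The only mildly delicate point is the bookkeeping in Step 1: one has to verify that the accumulated estimation error remains strictly below $\rho/2$ after all $H$ backward steps, which is exactly why the per-step error budget in the definition of $G$ is set to $\frac{\rho}{2H}$ and why its strict inequality is needed. The rest is routine.
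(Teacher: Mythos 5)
Your proof is correct, and it takes a genuinely different (and somewhat more modular) route than the paper's. The paper runs a single backward induction directly on the events $E_h:=\set{\forall x,\ V^*_h(x)=V^\pi_h(x)}$: at step $h$ it bounds $V^*_h-V^\pi_h$ by telescoping through $\widehat{V}^{\pi^*}_h$ and $\widehat{V}^{\pi}_h$, and it must invoke the inductive hypothesis $E_{h+1}$ \emph{inside} the error estimate, because $G$ only controls $(\widehat{\Pbb}-\Pbb)V^*_{h+1}$ and the term $\widehat{V}^\pi_h-V^\pi_h$ can be reduced to that form only after one knows $V^\pi_{t+1}=V^*_{t+1}$ for $t\ge h$. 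You instead decouple the argument into three independent pieces: a standalone induction giving $\max_{x,a}\abs{\widehat{Q}^*_h(x,a)-Q^*_h(x,a)}<\rho/2$ (which needs only $G$, since the cross term $\widehat{\Pbb}(\widehat{V}^*_{h+1}-V^*_{h+1})$ is controlled by the sup-norm and the max over actions is $1$-Lipschitz), then the gap argument forcing $\pi_h(x)$ to be an optimal action at every $(x,h)$, then the trivial induction that a pointwise-optimal greedy policy attains $V^*$. Both proofs ultimately rest on the same two ingredients — the per-step budget $\rho/(2H)$ accumulating to strictly less than $\rho/2$, and the gap assumption \eqref{eq:gap-parameter} forcing any action whose optimality deficit is below $\rho$ to be exactly optimal — so neither buys extra generality; but your two-stage version isolates the model-estimation error from the policy-identification step, which avoids the mildly awkward feature of the paper's proof that a statement about $V^\pi$ must be assumed at later steps in order to bound the value-error at the current step. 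Your bookkeeping of the strict inequalities (needed for the contradiction in Step 2) is also handled correctly.
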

\begin{proof}
    Assume $G$ holds, and define $E_h := \set{\forall x,\  V^*_h(x) - V^\pi_h(x) = 0 }$.
    We prove $E$ holds by induction over $E_h$ for $h\in \{ H+1, H, \dots, 1 \}$.
    First $E_{H+1}$ holds by definition. 
    Next suppose that $E_{h+1}, \dots, E_{H+1}$ holds, and consider $E_h$.
    Since $G$ holds, we have that for every $x$,
    \begin{equation*} 
        {V}^*_h (x) - \widehat{V}^{\pi^*}_h(x) 
        = \Ebb_{\pi, \widehat{\Pbb} } \sum_{t \ge h}  ( {\Pbb} - \widehat{\Pbb}) {V}^*_{t+1} (x_t, a_t) < (H+1 - h) \cdot \frac{\rho}{2H} < \frac{\rho}{2}.
    \end{equation*}
    Since $E_t$ holds for $t \ge h+1$, we have that for every $x$,
    \begin{align*}
        \widehat{V}^\pi_h (x) - V^\pi_h(x) 
        &= \Ebb_{\pi, \widehat{\Pbb} } \sum_{t \ge h} (\widehat{\Pbb} - {\Pbb}) {V}^\pi_{t+1} (x_t, a_t) \\
        &= \Ebb_{\pi, \widehat{\Pbb} } \sum_{t \ge h}  ( \widehat{\Pbb} - {\Pbb}) {V}^*_{t+1} (x_t, a_t)\qquad (\text{since $E_t$ holds for $t \ge h+1$}) \\
        &< (H+1 - h) \cdot \frac{\rho}{2H} < \frac{\rho}{2}. \qquad (\text{since $G$ holds})
    \end{align*}
    The above two inequalities imply that for every $x$,
    \begin{align*}
        V^*_h(x) - V^\pi_h(x) 
        &= V^*_h(x) - \widehat{V}^{\pi^*}_h(x) + \widehat{V}^{\pi^*}_h(x) -  \widehat{V}^{\pi}_h(x) + \widehat{V}^{\pi}_h(x) -  V^\pi_h(x) \\
        &\le { V^*_h(x) - \widehat{V}^{\pi^*}_h(x) } + {\widehat{V}^{\pi}_h(x) -  V^\pi_h(x)} < \rho,
    \end{align*}
    which further yields that for every $x$ and $a=\pi_h(x)$,
    \[
        V^*_h(x) - Q^*_h(x,a) \le  V^*_h(x) - V^\pi_h(x) < \rho,
    \]
    which forces \( a \in \pi^*_h(x)\) since otherwise $V^*_h(x) - Q^*_h(x,a) \ge \rho$.
    Therefore, we have that for every $x$,
    \[
        V^*_h(x) - V^\pi_h(x) = Q^*_h(x,a) - Q^\pi_h(x,a) = \Pbb \bracket{V^*_{h+1} -  V^\pi_{h+1}  } (x,a) = 0,
    \]
    where the last equality holds since $E_{h+1}$ holds, and by this we show that $E_h$ holds, which completes our induction.
\end{proof}

\begin{lemma}[Probability of the good event]\label{lemma:good-event-prob-generative}
    \(
    \Pbb\set{ G^c } < 2 HSA \cdot \exp \bracket{- \frac{\rho^2 N }{2 H^4}}. \)
\end{lemma}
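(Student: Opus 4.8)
The plan is to apply Hoeffding's inequality to each fixed triple $(x,a,h)$ separately and then take a union bound over the $HSA$ triples.

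First I would fix a pair $(x,a) \in \Scal \times \Acal$ and a step $h \in [H]$. In the exploration phase the simulator produces $N$ i.i.d.\ samples $y_1,\dots,y_N \sim \Pbb(\cdot\mid x,a)$, and $\widehat{\Pbb}(\cdot\mid x,a)$ is their empirical distribution. The key point is that $V^*_{h+1}$ is a deterministic function of the \emph{true} MDP only (its transition kernel and reward), hence it is independent of the drawn samples. Consequently
\[
(\widehat{\Pbb} - \Pbb) V^*_{h+1}(x,a) = \frac{1}{N}\sum_{i=1}^N \Bigl( V^*_{h+1}(y_i) - \Ebb_{y\sim\Pbb(\cdot\mid x,a)} V^*_{h+1}(y) \Bigr)
\]
is an average of $N$ i.i.d.\ zero-mean random variables whose summands $V^*_{h+1}(y_i)$ lie in $[0,H]$, so each has range at most $H$.

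Next I would invoke Hoeffding's inequality: for every $t>0$,
\[
\Pbb\Bigl\{ \bigl| (\widehat{\Pbb} - \Pbb) V^*_{h+1}(x,a) \bigr| \ge t \Bigr\} \le 2\exp\Bigl(-\frac{2Nt^2}{H^2}\Bigr),
\]
and taking $t = \rho/(2H)$ turns the right-hand side into $2\exp\bigl(-\rho^2 N/(2H^4)\bigr)$. Finally I would union-bound this estimate over the $S$ states, $A$ actions and $H$ steps (using that the complement event $G^c$ is contained in the union of the per-triple failure events), which yields $\Pbb\{G^c\} \le 2HSA\exp\bigl(-\rho^2 N/(2H^4)\bigr)$, as claimed.

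This argument has essentially no obstacle; the only subtlety worth flagging is the first step, namely that $V^*$ is fixed and sample-independent, so Hoeffding applies directly to $(\widehat{\Pbb}-\Pbb)V^*_{h+1}$ and no covering/union bound over value functions is needed — this is precisely why the bound carries no extra $S$ factor, in contrast with the reward-free exploration setting discussed in Remark~\ref{remark:rfe}.
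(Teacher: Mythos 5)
Your proof is correct and is exactly the argument the paper gives (its proof of this lemma is the one-liner ``by Hoeffding's inequality and a union bound over $x,a,h$''); you have simply spelled out the constants, which check out: Hoeffding with range $H$ and deviation $t=\rho/(2H)$ gives $2\exp(-2Nt^2/H^2)=2\exp(-\rho^2N/(2H^4))$, and the union bound over the $HSA$ triples yields the stated bound. Your side remark that $V^*_{h+1}$ is sample-independent, so no covering argument is needed, is also the right observation and consistent with the paper's discussion in Remark~\ref{remark:rfe}.
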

\begin{proof}
    This is by Hoeffding's inequality and a union bound over $x,a,h$.
\end{proof}

\begin{theorem}[Restatement of Theorem \ref{thm:with-generative-model}]\label{thm:with-generative-model-restate}
Suppose there is a sampling simulator for the MDP considered in the gap-TAE problem.
Consider exploration with the uniformly sampling strategy, and planning with the dynamic programming method with the obtained empirical probability.
If $T$ samples are drawn, where 
\[
T \ge \frac{2H^4 SA}{\rho^2} \cdot \log \frac{2HSA}{\delta},
\]
then with probability at least $1-\delta$, the obtained policy is optimal ($\epsilon = 0$).
\end{theorem}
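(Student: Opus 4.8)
The plan is to derive the theorem directly from the two lemmas already established above, namely Lemma~\ref{lemma:error-under-good-event-generative} (the good event $G$ forces the planner's output to be \emph{exactly} optimal) and Lemma~\ref{lemma:good-event-prob-generative} (the probability of the bad event $G^c$ decays exponentially in the per-pair sample size). First I would unpack the sampling budget: drawing $T$ samples via the uniform strategy allocates $N := T/(SA)$ fresh i.i.d.\ transition samples to each state-action pair, so $\widehat{\Pbb}(\cdot\mid x,a)$ is an empirical mean of $N$ i.i.d.\ next-states for every $(x,a)$, which is precisely the setup to which Lemma~\ref{lemma:good-event-prob-generative} applies.

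Next I would calibrate $N$. By Lemma~\ref{lemma:good-event-prob-generative}, $\Pbb\{G^c\} < 2HSA\cdot\exp(-\rho^2 N/(2H^4))$; requiring the right-hand side to be at most $\delta$ and solving for $N$ gives $N \ge (2H^4/\rho^2)\log(2HSA/\delta)$, which is exactly equivalent to the hypothesis $T \ge (2H^4 SA/\rho^2)\log(2HSA/\delta)$. Hence, under the stated hypothesis, the good event $G$ holds with probability at least $1-\delta$.

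Finally, on the event $G$, Lemma~\ref{lemma:error-under-good-event-generative} yields $V^*_h(x) - V^\pi_h(x) = 0$ for all $h$ and $x$; in particular $V^*_1(x_1) = V^\pi_1(x_1)$, so the greedy policy returned by the dynamic-programming planner is exactly optimal, i.e.\ $\epsilon = 0$. Combining the last two displays proves the claim. I would also remark that the uniform exploration strategy is entirely non-adaptive and does not look at any reward, so the oblivious/independent nature of the later-revealed reward is automatic here and requires no separate argument.

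I do not expect a genuine obstacle, since the substantive work has been isolated into the two lemmas and what remains is the bookkeeping of converting $T$ into $N$ and matching the logarithmic factor. If one wanted to pinpoint the heart of the argument, it is Lemma~\ref{lemma:error-under-good-event-generative}: the backward induction showing that a uniformly $\rho/(2H)$-accurate estimate of every one-step look-ahead $\Pbb V^*_{h+1}(x,a)$ — rather than an accurate estimate of the whole kernel in total variation — forces the accumulated value suboptimality below $\rho$, at which point the gap assumption $\gap_{\min} \ge \rho$ upgrades ``small error'' to ``zero error'', because any genuinely sub-optimal action would cost at least $\rho$. The per-step accuracy $\rho/(2H)$ is exactly what Hoeffding's inequality delivers with $N \eqsim H^4\rho^{-2}\log(\cdot)$ samples per pair, which is why no $\epsilon$-dependence enters the sample count at all, in sharp contrast to the $\Omega(H^2 SA/(\rho\epsilon))$ lower bound in the online setting.
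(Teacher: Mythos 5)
Your proposal is correct and follows exactly the same route as the paper's proof: convert the budget $T$ into the per-pair sample count $N = T/(SA)$, invoke Lemma~\ref{lemma:good-event-prob-generative} to get $\Pbb\{G\} \ge 1-\delta$, and then invoke Lemma~\ref{lemma:error-under-good-event-generative} to conclude exact optimality on $G$. Nothing is missing.
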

\begin{proof}
Note that $T \ge \frac{2H^4 SA}{\rho^2} \cdot \log \frac{2HSA}{\delta}$ implies that $N = T/(SA) \ge \frac{2H^4}{\rho^2} \cdot \log \frac{2HSA}{\delta}$, then by Lemma~\ref{lemma:good-event-prob-generative}, we have that 
\[ \Pbb\set{G} \ge 1- \delta, \]
then according to Lemma~\ref{lemma:error-under-good-event-generative}, the policy is optimal with probability at least $1-\delta$.
\end{proof}

\end{document}